\documentclass{article}
\PassOptionsToPackage{numbers,sort&compress}{natbib}
\usepackage[main,final]{neurips_2026}
\usepackage{palatino}
\usepackage{mathpazo}
\usepackage{inconsolata}

\usepackage[utf8]{inputenc} % allow utf-8 input
\usepackage[T1]{fontenc}    % use 8-bit T1 fonts
\usepackage{url}            % simple URL typesetting
\usepackage{booktabs}       % professional-quality tables
\usepackage{amsfonts}       % blackboard math symbols
\usepackage{nicefrac}       % compact symbols for 1/2, etc.
\usepackage{microtype}      % microtypography
\usepackage{xcolor}         % colors

\usepackage{microtype}
\usepackage{graphicx}
\usepackage{subcaption}
\usepackage{booktabs} % for professional tables
\usepackage{multicol,multirow}
\usepackage{enumitem}
\usepackage{caption,subcaption}
\usepackage{wrapfig}

\usepackage[
  colorlinks=true,
  linkcolor=magenta,
  citecolor=magenta,
  urlcolor=magenta
]{hyperref}

\usepackage{amsmath}
\usepackage{amssymb}
\usepackage{mathtools}
\usepackage{amsthm}
\usepackage{algorithm}
\usepackage{algorithmic}
\usepackage[capitalize,noabbrev]{cleveref}

\theoremstyle{plain}
\newtheorem{theorem}{Theorem}[section]

\theoremstyle{definition}

\theoremstyle{remark}

\usepackage{xspace}
\newcommand{\method}{\textsc{SS-eSOAP}\xspace}
\newcommand{\ssbfgs}{\textsc{SS-BFGS}\xspace}

\title{\method: Self-Scaled Adaptive Preconditioning for Physics-Informed Learning}

\author{
\small
Guangyuan Wang\textsuperscript{1,2} \quad
Mads Toftrup\textsuperscript{3} \quad
Sebastian Loeschcke\textsuperscript{4} \\
\small
\bf Yixuan Wang\textsuperscript{2} \quad
Anima Anandkumar\textsuperscript{2}
\\[0.35em]
\small
\textsuperscript{1}McGill University \quad
\textsuperscript{2}California Institute of Technology \\
\small
\textsuperscript{3}Aarhus University \quad
\textsuperscript{4}University of Copenhagen
}

\begin{document}

\maketitle

\begin{abstract}
Physics-informed neural networks (PINNs) often face ill-conditioned objectives
that limit high-accuracy training. Dense quasi-Newton methods improve local
conditioning but require expensive optimizer state, while Kronecker-factored
methods such as SOAP scale to larger networks but rely on periodic basis
updates. We introduce \method, which augments SOAP-style preconditioning with
a scalar secant-energy correction adapted to Kronecker geometry and an
adaptive basis update followed by variance-state downscaling. We characterize
the directional secant matching induced by the scalar correction and give a
bound on variance-state mismatch across basis changes. Across eight PDE
benchmarks, \method attains the lowest final residual on six, including Burgers
and Boussinesq, while SOAP-family baselines perform better on Gray-Scott and
Ginzburg-Landau. On Boussinesq, \method reaches a residual of $10^{-5}$ in
4.1 hours with 9.2 GB peak VRAM, while Adam does not reach this target within
14 hours. Three-seed $L^2$ and $H^1$ errors on four representative PDEs support
the link between lower residuals and improved solution accuracy. These results
position \method as a scalable option for stiff, high-accuracy physics-informed training,
rather than a uniform replacement for existing optimizers.
\end{abstract}

\section{Introduction}
% \seb{We should use "\ + method" instead of writing out method all the time.}
High-precision optimization is central to scientific machine learning. One
example is the search for finite-time singularities in the three-dimensional
Euler and Navier-Stokes equations. PINNs have been used to reconstruct
candidate self-similar profiles in unbounded domains
\citep{wang2023asymptotic,wang2025discovery,wang2025high}. Such candidates
require low residuals before computer-assisted analysis. PINN objectives are
often ill-conditioned near this accuracy regime, where standard first-order
training may plateau
\citep{wang2021understanding,krishnapriyan2021characterizing,
rathore2024challenges,wang2025gradient,xu2025fp64}.

% More broadly, the optimization of PINNs presents unique challenges compared to standard supervised learning \citep{krishnapriyan2021characterizing}. The loss landscape of PINNs is characterized by extreme stiffness and pathological curvature caused by competing objectives of boundary conditions and differential operators \citep{karniadakis2023stiffpinns} \seb{does this reference say "extreme stiffness?}. 
% % In these settings, standard first-order optimizers like Adam often stagnate in local minima
% % % \yw{really? more like second order right} \alex{adam is first order}
% % or suffer from slow convergence in these high-curvature regimes \citep{urban2025unveiling}.
% Standard first-order optimizers like Adam often exhibit slow or unstable convergence due to severe curvature anisotropy, particularly in high-precision regimes \citep{urban2025unveiling}.
More broadly, PINN optimization presents challenges that differ from standard supervised learning. PINN losses combine PDE residuals with boundary and initial-condition terms, which can induce gradient imbalance, numerical stiffness, and ill-conditioned loss landscapes~\citep{wang2021understanding,krishnapriyan2021characterizing,rathore2024challenges}. As a result, standard first-order optimizers such as Adam often converge slowly or plateau in high-accuracy regimes, motivating curvature-aware methods for PINN training~\citep{urban2025unveiling}.

Second-order methods use curvature information via Hessian-approximation updates, enabling more efficient descent directions than first-order methods in many non-convex problems. Quasi-Newton methods such as BFGS~\citep{bfgs_broyden} and L-BFGS~\citep{liu1991limited} build these approximations from gradient differences, avoiding explicit second derivatives while retaining favorable local convergence. Self-scaled variants, such as \ssbfgs, rescale the Hessian approximation to better match the local curvature spectrum~\citep{oren1974selfscaling,albaali1992efficient,nocedal1993selfscaling,urban2025unveiling}. However, these methods operate on the flattened vector, requiring $O(d_{\mathrm{in}}^2d_{\mathrm{out}}^2)$ for a single $d_{\mathrm{out}}\times d_{\mathrm{in}}$ matrix parameter, making exact self-scaled quasi-Newton updates impractical for deep networks.
Kronecker-factored preconditioners provide a scalable alternative. Methods such as K-FAC~\citep{grosse2016kfac}, Shampoo~\citep{gupta2018shampoo}, and SOAP~\citep{vyas2024soap} approximate curvature or gradient covariance through tensor products of smaller matrices, enabling structured preconditioning without maintaining a dense Hessian approximation. Rather than enforcing a quasi-Newton secant condition, these methods use second-order statistics such as empirical Fisher information or uncentered gradient covariance to rescale the gradient.
However, existing Kronecker-factored methods still face limitations in non-stationary regimes. Standard Shampoo and SOAP typically update the preconditioner eigenspace at fixed intervals, which may not match the rate at which curvature statistics evolve during PINN training. Moreover, accumulated second-moment statistics are often reprojected after basis updates, which can be unstable when the basis rotates substantially~\citep{eschenhagen2025purifying}.

In this work, we propose \method, a layerwise Kronecker-factored optimizer for
high-accuracy PINN training. The method adds two operations to SOAP-style
preconditioning. First, a scalar factor matches the Kronecker metric to the
secant energy observed along the latest parameter displacement. Second, an
EShampoo-style off-diagonal criterion triggers a basis update
\citep{eschenhagen2025purifying}. At each triggered update, \method reprojects
the momentum state and downscales the coordinatewise second moment instead of
reprojecting that second moment.

For a $d_{\mathrm{out}}\times d_{\mathrm{in}}$ matrix parameter, \method stores
$O(d_{\mathrm{in}}^2+d_{\mathrm{out}}^2)$ Kronecker factors in addition to
Adam-style states. The scalar secant calculation costs $O(d_{\mathrm{in}}
d_{\mathrm{out}})$ after the required eigenspace quantities are available, or
$O(d_{\mathrm{in}}d_{\mathrm{out}})$ when counted over all matrix entries. The
basis check and eigendecomposition retain cubic layer-width costs, analyzed in
Appendix~\ref{app:compute_efficiency}.

Across eight PDE benchmarks, \method records the lowest final residual on six.
On Boussinesq, it reaches $10^{-5}$ in 4.1 hours, while Adam does not reach the
same target within a 14-hour run. The gains are regime-dependent. Purifying
Shampoo performs better on Gray-Scott, and SOAP performs better on
Ginzburg-Landau. We therefore study \method as a specialized optimizer for
stiff, high-accuracy regimes.

\textbf{Our contributions are:}
\begin{itemize}[leftmargin=1.2em,itemsep=0em,topsep=0em]
  \item We derive a directional secant-energy correction for a
  Kronecker-factored metric.
  \item We pair an adaptive basis trigger with momentum reprojection and a
  variance-state transition designed for abrupt basis changes.
  \item We state the scope of two theoretical results: a one-directional
  secant-matching result and a steady-state variance-mismatch bound.
  \item We compare against first-order, structured, PINN-specific, and
  self-scaled quasi-Newton baselines, with wall-clock, memory, multi-seed, and
  physical-error measurements.
  \item We identify regimes where self-scaling helps and regimes where it hurts.
\end{itemize}

\vspace{-0.2cm}
\section{Background and Related Work}
\vspace{-0.1cm}
% We briefly review PINNs, matrix-valued preconditioning, and self-scaling quasi-Newton methods.

\subsection{Physics-Informed Neural Networks (PINNs)}
\vspace{-0.2cm}
% Deep learning has emerged as a powerful paradigm for surrogate modeling and solving PDEs in mathematical physics. PINNs represent the state-of-the-art in this domain, leveraging the universal approximation capabilities of neural networks to parameterize solution manifolds.
% Formally, given a PDE of the form $\mathcal{F}[u](x, t) = 0$ over a domain $\Omega$, PINNs approximate the solution $u(x, t)$ via a neural network $u_\theta$. The key mechanism involves automatic differentiation, which allows for the exact evaluation of differential operators without discretization errors. The network is trained by minimizing a composite loss function $\mathcal{L}(\theta) = \lambda_{r} \mathcal{L}_{r}(\theta) + \lambda_{b} \mathcal{L}_{b}(\theta) + \lambda_{d} \mathcal{L}_{d}(\theta)$,
% where $\mathcal{L}_{r}$ penalizes the PDE residual at a set of collocation points, and $\mathcal{L}_{b}$ and $\mathcal{L}_{d}$ enforce boundary conditions and assimilate observational data, respectively. REPLACED WITH BELOW:

Physics-informed neural networks (PINNs)~\citep{raissi2019physics,karniadakis2021physics} solve PDE-constrained learning problems by representing the solution field with a neural network and enforcing the governing equations through differentiable residual losses. Formally, given a PDE of the form $\mathcal{F}[u](x,t)=0$ over a domain $\Omega$, a PINN approximates the solution $u(x,t)$ by a neural network $u_\theta(x,t)$. Automatic differentiation evaluates $\mathcal{F}[u_\theta]$, and the network is trained by minimizing a composite objective
\begin{align*}
\mathcal{L}(\theta)
=
\lambda_r \mathcal{L}_r(\theta)
+
\lambda_b \mathcal{L}_b(\theta)
+
\lambda_d \mathcal{L}_d(\theta),
\end{align*}
where $\mathcal{L}_r$ penalizes the PDE residual at collocation points, $\mathcal{L}_b$ enforces boundary or initial conditions, and $\mathcal{L}_d$ incorporates observational data when available.

\subsection{Optimization for Physics-Informed Learning}

PINN objectives combine residual, boundary, initial-condition, and data terms.
Their gradients may differ sharply in scale and direction
\citep{wang2021understanding,wang2022and,krishnapriyan2021characterizing}.
Loss-balancing methods address this issue through learning-rate annealing,
NTK-based reweighting, or parameterwise normalization
\citep{wang2021understanding,wang2022and,yao2023multiadam}. Other work targets
the geometry of the full objective. Examples include NysNewton-CG
\citep{rathore2024challenges}, PDE preconditioning
\citep{liu2024preconditioning}, energy and Gauss-Newton natural gradients
\citep{muller2023achieving,jnini2024gauss}, ANaGRAM
\citep{schwencke2025anagram}, dual natural gradients
\citep{jnini2025dual}, and K-FAC for PINNs
\citep{dangel2024kronecker}. Recent work also studies gradient alignment under
SOAP~\citep{wang2025gradient}, self-scaled BFGS and Broyden methods
\citep{kiyani2025optimizing,jnini2026curvature}.

\method targets a different point in this design space. It does not reweight
the PINN loss components or solve a global parameter-space or residual-space
system. It combines layerwise Kronecker statistics with a directional secant
correction. Dense self-scaled methods remain stronger references when their
state and line-search costs are tractable.

% By casting PDE solving as a non-convex optimization problem, PINNs offer distinct advantages over traditional mesh-based methods (e.g., FEM, FDM). They are naturally mesh-free, enabling effective scaling to high-dimensional problems and complex geometries. Furthermore, they are particularly well-suited for {inverse problems}, where unknown physical parameters or constitutive laws can be jointly inferred alongside the solution field.

% Recently, this framework has been extended to the rigorous study of finite-time singularities in fluid dynamics. \citep{wang2025high} uses PINNs to reconstruct self-similar blow-up profiles for the 1D Burgers equation and the 2D Boussinesq equations. In this regime, the ability of PINNs to handle unbounded domains and adaptively resolve sharp gradients is critical. However, accurately capturing these singular structures requires suppressing the PDE residual to near machine precision, which exposes the limitations of first-order optimizers on the ill-conditioned landscapes characteristic of multi-scale physical dynamics \citep{wang2021understanding}.

\subsection{Structured Matrix Preconditioning}

Shampoo approximates full-matrix AdaGrad with Kronecker factors
\citep{gupta2018shampoo}. For a matrix parameter $W_t\in\mathbb{R}^{m\times n}$
and gradient $G_t$, it maintains
\begin{equation}
L_t=\beta_2L_{t-1}+(1-\beta_2)G_tG_t^\top,\qquad
R_t=\beta_2R_{t-1}+(1-\beta_2)G_t^\top G_t.
\label{eq:shampoo-update}
\end{equation}
SOAP applies Adam in the eigenbasis of these factors
\citep{vyas2024soap}. Muon instead orthogonalizes matrix updates and is related
to a zero-decay Shampoo limit~\citep{jordan2024muon,
bernstein2024oldoptimizernewnorm}.

SOAP commonly refreshes its basis on a fixed schedule. Purifying Shampoo
separates eigenvalue and eigenbasis errors and develops an adaptive stopping
criterion for warm-started basis updates~\citep{eschenhagen2025purifying}.
\method adopts this off-diagonal basis diagnostic. Its new elements are the
Kronecker-adapted secant correction and the second-moment transition used after
a triggered basis change.

Muon \citep{jordan2024muon} is a recent optimization algorithm designed to approximate second-order scaling for large-scale neural network training at a cost comparable to first-order methods. The method uses Newton-Schulz iterations to orthogonalize the gradient as $G'=UV^\top$, where $G=U\Sigma V^\top$ is the SVD of the matrix gradient of a linear layer. Additionally, it can be seen as a special case of Shampoo with $\beta _2=0$ as $(GG^\top)^{-1/4}G(G^\top G)^{-1/4}=UV^T$ \cite{bernstein2024oldoptimizernewnorm}. 

\iffalse
The core mechanism of Muon relies on iterative Newton-Schulz (NS) refinement to project the momentum buffer onto the nearest semi-orthogonal matrix.
This operation effectively flattens the spectrum of the update, allowing the optimizer to traverse flat plateaus and saturation regions (which are common in language models) more efficiently than Adam.
However, because Muon imposes a hard spectral constraint rather than adapting to the specific local curvature magnitude of the loss function, it may struggle in regimes where preserving the relative scale of eigenvalues is critical for stability, such as the stiff, multi-scale landscapes of high-precision PDE solvers.
\fi

\subsection{Limitations of SOAP, Purifying Shampoo, and Adaptive Eigenvalue Correction}
Standard implementations of SOAP update the eigenbasis $(Q_L, Q_R)$ at fixed intervals. This static schedule fails to distinguish between different stages of training or layers with varying dynamics. Furthermore, when the basis is updated, standard approaches reproject the accumulated second-moment estimator $V_k$. We argue that reprojecting $V_k$ via similarity transformation $V_{k}^{new} = (Q_L^{new})^T Q_L V_k Q_R^T Q_R^{new}$ assumes variance preservation under rotation, which is invalid when the principal directions of the Hessian shift significantly, leading to unstable step sizes.

Recent analysis of the Shampoo algorithm has identified that its practical success relies heavily on two heuristics: \textit{stale preconditioning} (updating the eigenbasis at fixed, infrequent intervals) and \textit{learning rate grafting} (forcing the update magnitude to match Adam's to mitigate spectral errors). \citep{eschenhagen2025purifying} formalized these issues by decomposing the preconditioner update into its eigenvector (basis) and eigenvalue (curvature) components, termed the ``Purifying'' framework. 
% The core insight of Purifying Shampoo is that the approximation error induced by stale eigenbases can be explicitly monitored rather than managed via fixed schedules. The authors introduce an adaptive eigenbasis trigger that monitors the \textit{off-diagonal mass} of the preconditioner in the current basis. Specifically, for a preconditioner $L_t$ and current basis $Q_t$, the algorithm tracks the relative Frobenius norm of the off-diagonal elements as $\rho(L_t, Q_t) = {\| Q_t^\top L_t Q_t - \operatorname{diag}(Q_t^\top L_t Q_t)\|_F}/{\|L_t\|_F}$.
% An eigenbasis update is triggered only when $\rho(L_t, Q_t)$ exceeds a specific threshold, indicating that the current basis no longer effectively diagonalizes the curvature. This decouples the compute cost from the iteration count, focusing computational resources on phases of training where the loss landscape rotates rapidly.

% They further demonstrate that learning rate grafting is essentially a crude proxy for correcting staleness in the eigenvalues. By updating the eigenvalues at every step (which is computationally cheap in the Kronecker-factored setting) while keeping the basis ``lazy,'' the optimizer can maintain accurate curvature scaling without the need for extrinsic grafting. This combination of adaptive basis updates and continuous eigenvalue correction referred to as {Purifying Shampoo} provably reduces the dependency on heuristics while recovering the performance of full-frequency updates.    % NOTE: REMOVED FOR NOW

\subsection{Self-Scaling in Quasi-Newton Iteration Algorithms}

A general class of quasi-Newton iteration algorithms can be cast under the self-scaled Broyden formula \citep{albaali1992efficient}. If we define the auxiliary variables
$$
\begin{aligned}
{s}_k & ={\Theta}_{k+1}-{\Theta}_k, \quad {y}_k =\nabla J({\Theta}_{k+1})-\nabla J({\Theta}_k), \quad {v}_k =\sqrt{{y}_k \cdot H_k {y}_k}\left[\frac{{s}_k}{{y}_k \cdot {s}_k}-\frac{H_k {y}_k}{{y}_k \cdot H_k {y}_k}\right],
\end{aligned}
$$
then, the next approximation of the inverse Hessian matrix at each iteration can be calculated as in \citep{albaali1993variational,albaali2005wide} by
$$
H_{k+1}=\frac{1}{\tau_k}\left[H_k-\frac{H_k {y}_k \otimes H_k {y}_k}{{y}_k \cdot H_k {y}_k}+\phi_k {v}_k \otimes {v}_k\right]+\frac{{s}_k \otimes {s}_k}{{y}_k \cdot {s}_k},
$$
where we define
\begin{align}
\label{eq:tau_k_1}
\tau_k=\min \left\{1, \frac{{y}_k \cdot {s}_k}{\alpha_k {s}_k \cdot H_k^{-1} s_k}\right\}.
\end{align}
Here, $\otimes$ denotes the tensor product of two vectors and $\tau_k, \phi_k$ are respectively the scaling and the updating parameters, which often change between iterations. For $\tau_k=1$ and $\phi_k=1$, one recovers the standard BFGS algorithm.
To efficiently compute the scaling parameter $\tau_k$, \citep{urban2025unveiling} set $H_k^{-1} {s}_k=-\alpha_k \nabla J({\Theta}_k)$ so the explicit dependence of $\tau_k$ on $H_k^{-1}$ disappears. This is feasible since the step length $\alpha_k$ and the gradient $\nabla J\left(\Theta_k\right)$ are available at iteration $k$. Thus, the calculation of $\tau_k$ only involves vector multiplications in $O(n)$.

The original scaling scheme of \citep{oren1974selfscaling} is $\tau_k^{(1)}=\frac{-{y}_k \cdot {s}_k}{\alpha_k {s}_k \cdot \nabla J(\Theta_k)}$, which reduces the condition number of the transformed Hessian in the new space. However, this choice was shown to be inferior in terms of performance compared to the standard BFGS, when combined with an inexact line search computation of $\alpha_k$, as confirmed by \citep{nocedal1993selfscaling}. The choice of taking the minimum value of both, as adapted by \citep{urban2025unveiling}, is essentially a switch between standard BFGS ($\tau_k=1$) and self-scaled BFGS which ensures theoretically super-linear convergence with inexact line searches.

\vspace{-0.2cm}
\section{Methodology: \method}
\vspace{-0.2cm}
We propose \method, which integrates a self-scaling factor $\tau_k$ and a purifying-style adaptive update strategy. Algorithm \ref{alg:ss_soap} presents a simplified overview, while Algorithm \ref{alg:ss_purifying_soap} in the appendix provides the full pseudocode.

{\small
\begin{algorithm}[tb]
   \caption{\method}
   \label{alg:ss_soap}
   \begin{algorithmic}
   \STATE \textbf{Input:} $W_0$, $\eta$, $\beta_1$, $\beta_2$,
$\tau_{\mathrm{trigger}}$, $L_0,R_0\leftarrow I$,
$Q_L,Q_R\leftarrow I$, $\widetilde M_0,V_0\leftarrow0$
   \FOR{$k=1, 2, \dots$}
   \STATE Compute gradient $G_k$ and $L_k, R_k$ via \eqref{eq:shampoo-update}
   \IF{$k \pmod{\text{interval}} = 0$}
       \STATE Compute off-diagonal ratio via \eqref{eq:ratio}
       \IF{$\max\{\rho(L_k,Q_L),\rho(R_k,Q_R)\}>
\tau_{\mathrm{trigger}}$}
  \STATE $Q_L^{\mathrm{old}}\leftarrow Q_L$,
  $Q_R^{\mathrm{old}}\leftarrow Q_R$
  \STATE Recompute $Q_L,Q_R$ from $L_k,R_k$
  \STATE $\widetilde M_{k-1}\leftarrow
  (Q_L^\top Q_L^{\mathrm{old}})\widetilde M_{k-1}
  ((Q_R^{\mathrm{old}})^\top Q_R)$
  \STATE $V_{k-1}\leftarrow\gamma V_{k-1}$
\ENDIF
   \ENDIF
   \STATE Project $G'_k = Q_L^T G_k Q_R$ and update $M'_k, V_k$ (Adam updates in eigenspace)
   \STATE Compute $\tau_k$ via \eqref{eq:tau_k} and precondition $N'_k \leftarrow \tau_k^{-1/2} M'_k / (\sqrt{V_k} + \epsilon)$
   \STATE Update $W_k \leftarrow W_{k-1} - \eta Q_L N'_k Q_R^T$
   \ENDFOR
   \end{algorithmic}
\end{algorithm}
}

\subsection{Self-Scaling Curvature Correction}
Inspired by Self-Scaling BFGS (\ssbfgs) \citep{urban2025unveiling}, we introduce a scalar $\tau_k$ to the inverse Hessian approximation to ensure the preconditioner spans the correct spectrum of the true Hessian. In the Kronecker structure, we approximate the inverse Hessian as $(\hat{H}_k^{SS})^{-1} = \tau_k^{-1} (L_k^{-1} \otimes R_k^{-1})$. Leveraging the secant condition properties, we define the scaling factor $\tau_k$ using the trace of the updates. Let $s_k = \text{vec}(W_k - W_{k-1})$ and $y_k = \text{vec}(G_k - G_{k-1})$ then for the case of a Kronecker approximation of the Hessian in \eqref{eq:tau_k_1} we get:
\begin{equation}
\tau_k = \min \left\{ 1, \frac{\operatorname{Tr}((G_k - G_{k-1})^T (W_k - W_{k-1}))}{\operatorname{Tr}((W_k - W_{k-1})^T L_k^{-1} (W_k - W_{k-1}) R_k^{-1})} \right\}.
\label{eq:tau_k}
\end{equation}
We observe that $\tau_k = 1$ is chosen for most training iterations, with self-scaling activated only infrequently. Nevertheless, these occasional adjustments are sufficient to yield substantial improvements in optimization performance.
This formulation ensures that the scale of the preconditioner aligns with the curvature along the most recent displacement. This is applied in the eigenspace update as $N'_k = \tau_k^{-1/2} {M'_k}/{\sqrt{V_k} + \epsilon}$,
where $M'_k$ is the momentum in the eigenspace.

\subsection{Adaptive Basis Updates and Variance-State Transition}
To address the stability issues of fixed intervals and reprojection, we implement an eigenvalue correction to adaptively control the approximation error of the eigenspace.

\textbf{Adaptive eigenbasis trigger.}
Following the stopping criterion studied by
\cite{eschenhagen2025purifying}, we monitor the relative off-diagonal mass of
each Kronecker factor in its current basis,
\begin{equation}
\rho(L_k,Q_L)=
\frac{\|Q_L^\top L_kQ_L-\operatorname{diag}(Q_L^\top L_kQ_L)\|_F}
{\|Q_L^\top L_kQ_L\|_F}.
\label{eq:ratio}
\end{equation}
The statistic measures how poorly the current basis diagonalizes the tracked
Kronecker factor. It is a heuristic proxy for preconditioning degradation, not
an exact Hessian-staleness measure. We recompute a basis when the maximum ratio
for the left and right factors exceeds $\tau_{\mathrm{trigger}}$. Appendix
Table~\ref{tab:trigger_diagnostic} compares this statistic with consecutive
gradient cosine similarity around a triggered update.

\textbf{State transition after a basis update.}
When the basis changes, we first reproject the momentum so that it represents
the same update direction in the new coordinates. We do not reproject the
coordinatewise second moment. Instead, we retain its coordinatewise shape and
apply $V_{k-1}\leftarrow\gamma V_{k-1}$, with
$\gamma\in\{0.25,0.5,0.75\}$ selected from the observed off-diagonal ratio.
Because $V$ appears in the denominator of the Adam-style update, $\gamma<1$
raises the immediate effective step magnitude before new gradients refresh the
state. We treat this transition as an empirical transient heuristic. Theorem
\ref{thm:stability_downscaling} gives a steady-state mismatch bound and does not
establish the optimality of $\gamma<1$. Appendix
Figure~\ref{fig:gamma_transient} reports the targeted transient ablation.

% \subsection{Purifying Update Strategy}
% To address the stability issues of fixed intervals and reprojection, we implement the ``purifying'' strategy:

% \textbf{Adaptive Eigenbasis Trigger}
% We monitor the approximation error of the stale eigenbasis. Specifically, we compute the relative Frobenius norm of the off-diagonal elements of the preconditioner in the current basis:
% \begin{equation}
% \text{ratio} = \frac{\| Q_L^T L_k Q_L - \operatorname{diag}(Q_L^T L_k Q_L) \|_F}{\| Q_L^T L_k Q_L \|_F}.
% \label{eq:ratio}
% \end{equation}
% If this ratio exceeds a threshold $\tau_{\text{trigger}}$ (e.g., 0.7), it indicates significant basis misalignment, which triggers a recomputation of $Q_L$.

% \textbf{Variance Downscaling vs. Reprojection} Instead of reprojecting the second moment $V_k$ into the new basis, which disrupts the variance history, we apply a downscaling factor $\gamma \in (0, 1)$. This preserves the historical structure while reducing confidence in the estimate, acting as a ``soft reset''. The downscaling factor is determined adaptively based on the off-diagonal ratio
% \begin{equation}
% \gamma = 
% \begin{cases} 
% 0.25 & \text{if ratio} > 0.8 \\
% 0.5 & \text{if ratio} > 0.5 \\
% 0.75 & \text{otherwise.}
% \end{cases}
% \end{equation}
% This approach effectively decouples the frequency of updates across different layers and training stages.

\vspace{-0.2cm}
\section{Theoretical Analysis}
\vspace{-0.2cm}

We analyze self-scaling and variance downscaling theoretically, with proofs in Appendix~\ref{app:proofs}.
\vspace{-0.2cm}
% We provide a rigorous theoretical analysis of these mechanisms.The proofs of both theorems are in Appendix \ref{app:proofs}.
% We provide a theoretical analysis of these mechanisms, with proofs in \ref{app:proofs}.
% We show the optimality of the self-scaling factor under Kronecker constraints and that it minimizes the spectral disparity between the secant approximation and the true curvature. Furthermore, we derive stability bounds for the variance update, showing that our downscaling strategy yields bounded error under basis rotation, whereas standard reprojection schemes can lead to unbounded instability when the principal directions of the hessian shift.

\subsection{Spectral Stability Analysis of Variance Adaptation}
% \yw{please make sure Thms 4 and 5 are correct}
\begin{theorem}[Variance-state stability under basis changes]
\label{thm:stability_downscaling}
Let $C_t=\mathbb{E}[g_t g_t^{\top}]$ denote the population gradient second-moment matrix at step $t$. For an orthogonal basis $Q \in O(d)$, define the diagonal second-moment operator $\mathcal{D}_Q(C):=\operatorname{diag}\left(Q^{\top} C Q\right)$. Let $V_t=\mathcal{D}_{Q_t}\left(C_t\right)$ and $V_{t+1}^*=\mathcal{D}_{Q_{t+1}}\left(C_{t+1}\right)$,
where $V_t$ is the diagonal second-moment state in the old basis $Q_t$, and $V_{t+1}^*$ is the oracle diagonal second-moment state in the new basis $Q_{t+1}$. Define the basis transition matrix $U=Q_{t+1}^{\top} Q_t$. Consider the scalar soft-reset update $V_{\text {scale }}=\gamma V_t, 0<\gamma \leq 1$. Then
$$
\left\|V_{\text {scale }}-V_{t+1}^*\right\|_F \leq(1-\gamma)\left\|V_t\right\|_F+2\left\|C_t\right\|_2\|I-U\|_F+\left\|C_{t+1}-C_t\right\|_F .
$$
In particular, if the gradient second moment evolves slowly, so that $\left\|C_{t+1}-C_t\right\|_F \leq \Delta_t$, then
$$
\left\|V_{\text {scale }}-V_{t+1}^*\right\|_F \leq(1-\gamma)\left\|V_t\right\|_F+2\left\|C_t\right\|_2\|I-U\|_F+\Delta_t .
$$
\end{theorem}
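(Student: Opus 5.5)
The plan is a triangle inequality through two intermediate states. The first is $V_t$ itself. The second is the oracle for the old covariance in the new basis, $\widetilde V:=\mathcal{D}_{Q_{t+1}}(C_t)$. With these,
\begin{equation*}
\|V_{\text{scale}}-V_{t+1}^*\|_F\le \|\gamma V_t-V_t\|_F+\|V_t-\widetilde V\|_F+\|\widetilde V-V_{t+1}^*\|_F .
\end{equation*}
The first term equals $(1-\gamma)\|V_t\|_F$ exactly, since $0<\gamma\le 1$. The third term uses linearity of $\mathcal{D}_Q$ and the fact that $\operatorname{diag}(\cdot)$ is a Frobenius-norm contraction:
\begin{equation*}
\|\mathcal{D}_{Q_{t+1}}(C_t-C_{t+1})\|_F\le\|Q_{t+1}^\top(C_t-C_{t+1})Q_{t+1}\|_F=\|C_{t+1}-C_t\|_F .
\end{equation*}
The second statement of the theorem then follows by substituting $\Delta_t$.

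The main step is the basis-change term $\|\mathcal{D}_{Q_t}(C_t)-\mathcal{D}_{Q_{t+1}}(C_t)\|_F$. Set $A:=Q_t^\top C_tQ_t$, which is symmetric PSD with $\|A\|_2=\|C_t\|_2$. Then
\begin{equation*}
Q_{t+1}^\top C_tQ_{t+1}=Q_{t+1}^\top Q_t\,A\,Q_t^\top Q_{t+1}=UAU^\top .
\end{equation*}
Again using that $\operatorname{diag}$ is a Frobenius contraction, it suffices to bound $\|A-UAU^\top\|_F$. Write $E:=U-I$. Then
\begin{equation*}
UAU^\top-A=EA+AE^\top+EAE^\top .
\end{equation*}
This naive expansion leaves a quadratic remainder. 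To avoid it, I will telescope asymmetrically:
\begin{equation*}
UAU^\top-A=(U-I)AU^\top+A(U^\top-I).
\end{equation*}
The first piece satisfies $\|(U-I)AU^\top\|_F\le\|U-I\|_F\|A\|_2\|U^\top\|_2=\|C_t\|_2\|I-U\|_F$, since $U$ is orthogonal. The second satisfies $\|A(U^\top-I)\|_F\le\|A\|_2\|I-U\|_F$. Together they give exactly $2\|C_t\|_2\|I-U\|_F$, which is the claimed constant.

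The main obstacle is bookkeeping rather than depth. I expect two places where it could go wrong. First, the direction of $U$ must match the statement's convention $U=Q_{t+1}^\top Q_t$. If the conjugation comes out as $U^\top A U$ instead, the same telescoping works with $\|I-U^\top\|_F=\|I-U\|_F$. Second, I need to check that $\mathcal{D}_Q$ is treated as a diagonal matrix or vector consistently, so that the contraction $\|\operatorname{diag}(X)\|_F\le\|X\|_F$ applies at each use. Throughout, the only norm tool is the mixed inequality $\|XY\|_F\le\|X\|_F\|Y\|_2$ (and its transpose).
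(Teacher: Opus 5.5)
Your proposal is correct and follows the paper's proof essentially step for step. It uses the same triangle inequality through $V_t$ and $\operatorname{diag}(Q_{t+1}^\top C_t Q_{t+1})$, the same identity $Q_{t+1}^\top C_t Q_{t+1}=U M_t U^\top$ with $M_t=Q_t^\top C_t Q_t$, the same asymmetric splitting $M_t-UM_tU^\top=(I-U)M_tU^\top+M_t(I-U^\top)$ (yours differs only by an overall sign), and the same use of Frobenius non-expansiveness of $\operatorname{diag}$ plus orthogonal invariance for the drift term.
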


\subsection{Self-Scaling Spectral Analysis}

\begin{theorem}[Kronecker secant-energy matching]
\label{thm:self_scaling_convergence}
Let \(S_k=W_k-W_{k-1}\), \(Y_k=\nabla \mathcal{L}(W_k)-\nabla \mathcal{L}(W_{k-1})\), and let
\(K_k\succ 0\) denote the Kronecker metric whose quadratic form is $s_k^\top K_k s_k = \operatorname{Tr}\!\left(S_k^\top L_k^{-1}S_kR_k^{-1}\right)$, where \(s_k=\operatorname{vec}(S_k)\). Assume \(s_k\neq 0\) and
\(s_k^\top y_k>0\). Define $a_k=s_k^\top K_k s_k$ and $c_k=s_k^\top y_k$. Then the scaling factor
\begin{align*}
\tau_k
=
\min\left\{
1,
\frac{c_k}{a_k}
\right\}
=
\min\left\{
1,
\frac{s_k^\top y_k}{s_k^\top K_k s_k}
\right\}
\end{align*}
is the unique solution of the clipped one-dimensional log-secant matching
problem
\begin{align*}
\tau_k
=
\arg\min_{0<\tau\le 1}
\left[
\log
\frac{s_k^\top y_k}
{s_k^\top(\tau K_k)s_k}
\right]^2 .
\end{align*}
% Equivalently, in matrix form,
% \begin{align*}
% \tau_k
% =
% \arg\min_{0<\tau\le 1}
% \left[
% \log
% \frac{\operatorname{Tr}(Y_k^\top S_k)}
% {\tau\,\operatorname{Tr}(S_k^\top L_k^{-1}S_kR_k^{-1})}
% \right]^2 .
% \end{align*}
If \(y_k=H_ks_k+r_k\), then, whenever the unclipped solution is active,
the scaled Kronecker quadratic form satisfies $s_k^\top(\tau_kK_k)s_k = s_k^\top H_ks_k+s_k^\top r_k$. Thus the mismatch between the scaled Kronecker secant energy and the true
directional curvature is exactly the directional secant error \(s_k^\top r_k\).
\end{theorem}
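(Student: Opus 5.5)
The theorem reduces to a one-dimensional convex problem in $\log\tau$, so I would prove it in two parts: first the variational characterization of $\tau_k$, then the secant-energy identity.

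\textbf{Setting up the scalar problem.} Since $L_k,R_k\succ0$, the metric $K_k$ is positive definite. Together with $s_k\neq0$ this gives $a_k>0$, and by assumption $c_k>0$. For $\tau\in(0,1]$ the objective therefore equals $f(\tau)=(\log(c_k/a_k)-\log\tau)^2$. Substituting $u=\log\tau\in(-\infty,0]$ turns this into $g(u)=(u-u^*)^2$ with $u^*=\log(c_k/a_k)$. The function $g$ is a strictly convex quadratic, and the map $u\mapsto e^u$ is a bijection from $(-\infty,0]$ onto $(0,1]$. So minimizing $f$ over $(0,1]$ is the same as minimizing $g$ over $(-\infty,0]$. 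That minimizer is the Euclidean projection of $u^*$ onto the half-line, namely $\min\{0,u^*\}$, and it is unique by strict convexity.

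\textbf{Reading off $\tau_k$.} Mapping back gives $\tau_k=\exp(\min\{0,u^*\})=\min\{1,c_k/a_k\}$, because $\exp$ is monotone. Both the formula and uniqueness follow. I would also note why the two forms of the ratio agree. Using $\operatorname{vec}(L^{-1}SR^{-1})=(R^{-1}\otimes L^{-1})\operatorname{vec}(S)$, which relies on $R$ being symmetric, we get
\[
\operatorname{Tr}(S^\top L^{-1}SR^{-1})=s^\top(R^{-1}\otimes L^{-1})s.
\]
This identifies $K_k=R_k^{-1}\otimes L_k^{-1}$, matching \eqref{eq:tau_k}. Similarly, $\operatorname{Tr}(Y_k^\top S_k)=y_k^\top s_k$.

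\textbf{Secant identity.} Suppose the unclipped branch is active, i.e.\ $c_k/a_k\le1$, so $\tau_k=c_k/a_k$. Then
\[
s_k^\top(\tau_kK_k)s_k=\tau_ka_k=c_k=s_k^\top y_k.
\]
Substituting $y_k=H_ks_k+r_k$ gives $s_k^\top H_ks_k+s_k^\top r_k$. The mismatch with the true directional curvature is therefore exactly $s_k^\top r_k$.

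\textbf{Main obstacle.} The only delicate points are bookkeeping rather than analysis. The first is ensuring the logarithm is well defined, which requires $a_k,c_k>0$. The second is checking that the boundary case $\tau=1$ is handled correctly: when $u^*>0$ the minimum sits at the endpoint $u=0$, and it is still unique. No earlier results from the paper are needed.
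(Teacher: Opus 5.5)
Your proposal is correct and follows essentially the same route as the paper: substitute $z=\log\tau$, minimize $(u-z)^2$ over $z\le 0$ by projecting to $\min\{u,0\}$, exponentiate to get $\tau_k=\min\{1,c_k/a_k\}$, and read off $s_k^\top(\tau_kK_k)s_k=\tau_ka_k=c_k=s_k^\top H_ks_k+s_k^\top r_k$ on the unclipped branch. Your extra identification $K_k=R_k^{-1}\otimes L_k^{-1}$ via the vec identity is correct; the paper leaves it implicit and simply assumes $K_k\succ0$.
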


\vspace{-0.3cm}
\section{Numerical Experiments}
\vspace{-0.2cm}
We evaluate \method on a suite of challenging optimization tasks, including matrix regression, low-rank completion tasks and nonlinear evolutionary PDEs. The matrix tasks provide controlled settings where the curvature structure is known, allowing us to isolate the effect of self-scaling on the optimizer dynamics. In contrast, the nonlinear PDE benchmarks evaluate its robustness against the curvature anisotropy and stiffness typically encountered in frontier AI for Science applications~\citep{wang2021understanding,krishnapriyan2021characterizing,rathore2024challenges}.
% \seb{Could use a bit of motivation why there settings.}
\vspace{-0.3cm}

\subsection{Baselines}

The main comparisons include Adam~\citep{kingma2015adam}, Muon
\citep{jordan2024muon}, SOAP~\citep{vyas2024soap}, and the adaptive
Shampoo variant implemented from \cite{eschenhagen2025purifying}. We also
evaluate L-BFGS, SS-BFGS, SS-Broyden, K-FAC for PINNs, NTK reweighting, and
MultiAdam where their memory and training requirements fit the benchmark
\citep{liu1991limited,kiyani2025optimizing,dangel2024kronecker,
wang2022and,yao2023multiadam}. Appendix~\ref{app:hparam_setup} reports each
search space, selected setting, precision, stopping rule, hardware, and seed.

% \seb{can we point to appendix with more details on setup and hparams we tested}

% We also conduct a comprehensive empirical validation of \method against Adam and other recent curvature-aware and preconditioned optimizers. We benchmark performance on matrix regression and low-rank completion tasks to analyze spectral properties, as well as on challenging stiff PDE benchmarks such as the Allen-Cahn and 2D Boussinesq equations. Across these settings, \method consistently matches or improves upon baselines, achieving up to two orders of magnitude reductions in final residual error on PDE benchmarks, while ablation studies confirm robustness across mini-batch sizes on regression tasks.

% \alex{need to show that the approximation error of the Kronecker factors evolves during training and impacts convergence, depending on both the training
% stage and parameter’s properties. Using an adaptive update frequency can improve Shampoo’s
% training efficiency, especially when more frequent eigenbasis computations accelerate convergence}

\vspace{-0.2cm}
\subsection{Matrix Regression Tasks}
\vspace{-0.2cm}
To evaluate the robustness and versatility of {\method} across varying optimization landscapes, we conduct a comprehensive suite of matrix optimization experiments. These tasks span the spectrum of convexity and gradient stochasticity, ranging from strongly convex problems and convex objectives to challenging non-convex settings. This diverse testbed allows us to rigorously benchmark {\method} against baselines in both deterministic and stochastic regimes. Detailed analysis is in Appendix \ref{sec:details_matrix_optim_tasks}.

\begin{figure*}[h]
  \centering    
  \begin{subfigure}[t]{\linewidth}
    \centering
    \includegraphics[width=\linewidth]{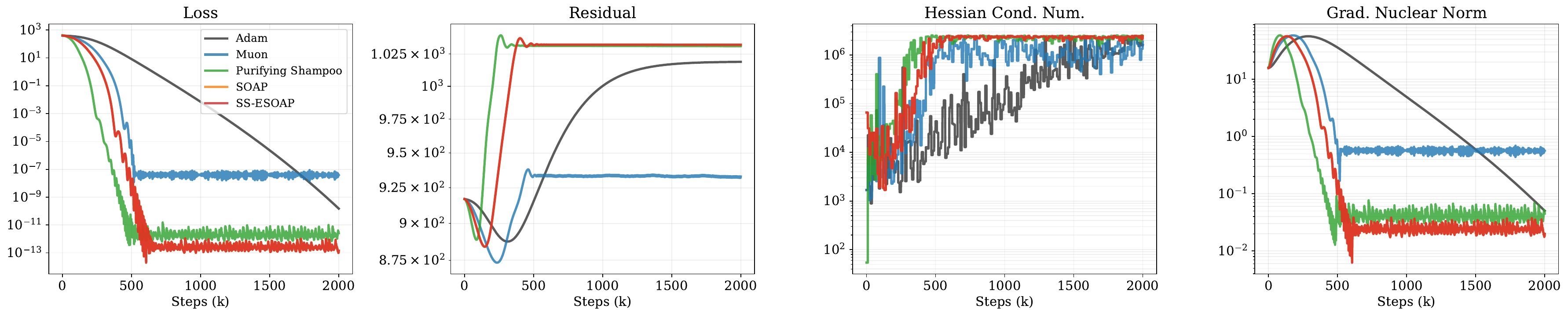}
\vspace{-2em}
% \caption{Matrix Quadratic Regression: Losses, accuracy, gradient condition numbers, nuclear norms.}
\label{fig:new_quadratic_comparison_singled_out}
\end{subfigure}
\caption{{Matrix Quadratic Regression Task.} \method consistently achieves faster convergence and lower residuals in the quadratic regime where the self-scaling mechanism captures the spectrum.
% \seb{why is SOAP legend here but not shown in plots?} % will correct later
}
\label{fig:matrix_tasks_singled_out}
\vspace{-0.2cm}
\end{figure*}

\textbf{Quadratic regression.} We consider a matrix quadratic regression objective $f(X):=\frac{1}{2}\|A X B-C\|_{\mathrm{F}}^2$, where $X \in \mathbb{R}^{m \times n}, A \in \mathbb{R}^{p \times m}, B \in \mathbb{R}^{n \times q}$ and $C \in \mathbb{R}^{p \times q}$. Then its gradient is $\nabla f(X)=A^{\top}(A X B-C) B^{\top}$, its Hessian is $\nabla^2 f(X)=(B B^{\top}) \otimes(A^{\top} A) \in \mathbb{R}^{m n \times m n}$. The Hessian structure implies that ideal preconditioning isolates the residual condition number $\kappa(E)$ from the constant spectral skew of $A$ and $B$.
% Unlike semi-orthogonal projections that discard curvature information from the residual $E$ to enforce unit conditioning ($\kappa=1$), {\method} preserves this structural information.
We set $(m, n, p, q) = (500, 100, 1000, 250)$ so that $f$ is strongly convex.
% As shown in Figure~\ref{fig:new_quadratic_comparison}, this allows {\method} to mimic a Newton-like trajectory, resolving the low-rank structure significantly faster than Adam and standard SOAP. While Adam's gradient condition number $\kappa(\nabla f(X_k))$ grows unstably, {\method} maintains a bounded condition number throughout training. The residual condition number $\kappa(E_k)$ correlates perfectly with the loss plateaus, validating our theoretical insight that conditioning on the residual spectrum is crucial for convergence in matrix sensing tasks. Muon's plateau suggests it fails to fully resolve this residual structure compared to the curvature-adaptive {\method}.

As illustrated in Figure \ref{fig:matrix_tasks_singled_out}, \method demonstrates a clear advantage in navigating the structured curvature of the quadratic objective.
% While Adam's gradient condition number grows unstably, \method maintains a bounded condition number throughout the training trajectory. This bounded conditioning directly translates to a faster and deeper descent in the residual loss, confirming that the self-scaling mechanism effectively captures and isolates the relevant spectral structure, allowing it to bypass the stagnation phases observed in both Adam and Muon.

The quadratic task isolates the setting where the Hessian has an exact
Kronecker form. It therefore tests whether the scalar correction recovers the
directional scale predicted by the analysis without the nonlinear and
multi-objective effects of a PINN. Logistic regression and low-rank completion
results appear in Appendix~\ref{sec:details_matrix_optim_tasks}.

% Figure~\ref{fig:matrix_tasks_singled_out} \seb{We need to comment on these results}
% From Figure \ref{fig:new_completion_comparison}, we observe that {\method} demonstrates superior convergence efficiency: Compared to Adam and standard SOAP, {\method} achieves the fastest descent rate and reaches the lowest final residual.
% Muon suffers from stagnation despite rapid initial progress: While Muon's whitening effect provides a strong initial acceleration outperforming Adam, its convergence plateaus earlier than curvature-adaptive methods. This suggests that orthogonalizing gradients without explicitly accounting for the magnitude of singular values limits its ability to resolve the fine-grained structure of the low-rank manifold in the terminal phase.

\subsection{PINN Benchmarks}

We apply \method to standard PINN benchmarks including the Allen-Cahn, 1D Burgers, and 2D Boussinesq equations, assessing performance across varying degrees of physical complexity.

Table~\ref{tab:complexity} shows that \method achieves the lowest final residual on six of eight PINN benchmarks. The largest gains appear on Wave, where \method reaches $4.12\times10^{-5}$ compared to $1.18\times10^{-3}$ for the best baseline, and on Boussinesq, where it reaches $8.42\times10^{-7}$ compared to $2.85\times10^{-6}$. \method also improves over the strongest baselines on Burgers ($5.54\times10^{-10}$ vs.\ $1.12\times10^{-9}$), Korteweg-de Vries ($3.52\times10^{-10}$ vs.\ $8.12\times10^{-10}$), and Lid-Driven Cavity ($8.23\times10^{-8}$ vs.\ $2.10\times10^{-7}$). The exceptions are Gray-Scott, where Purifying Shampoo obtains $5.12\times10^{-9}$ compared to $7.46\times10^{-7}$ for \method, and Ginzburg-Landau, where SOAP obtains $9.12\times10^{-12}$ compared to $1.75\times10^{-11}$ for \method. Overall, these results indicate that \method is most beneficial in regimes where additional scale calibration and damping improve the stability of structured preconditioning, while the gains are problem-dependent and not uniform across all PDEs.

\textbf{1D Burgers and 2D Boussinesq.} To validate our optimizer's capability in this frontier regime, we adopt the rigorous high-precision formulations for the 1D Burgers and 2D Boussinesq equations established by \citep{wang2025high}.
In these stiff settings, standard first-order methods typically stall due to spectral cliffs. However, \method reaches substantially lower residuals, driving the residual down to $5.54\times 10^{-10}$ precision for the Burgers equation, which is an improvement of about 3.9 orders of magnitude over Adam. Similarly, on the 2D Boussinesq equation, \method achieves a final residual of $8.42\times 10^{-7}$, successfully resolving the singular blow-up profiles where baselines like Muon and Adam stagnate at $2.57 \times 10^{-2}$ and $5.12 \times 10^{-4}$ respectively.

% \textbf{Allen-Cahn equation.}
% The Allen-Cahn equation, $u_t - 0.0001 u_{xx} + 5u^3 - 5u = 0$, exhibits sharp phase transitions driven by the stiff reaction term $5u^3$. \method achieves a final residual of ${1.85 \times 10^{-9}}$, which is two orders of magnitude lower than Adam. This performance highlights the efficacy of the self-scaling curvature correction in navigating the pathological curvature induced by the competing reaction and diffusion operators.

% \textbf{Navier-Stokes (Lid-Driven Cavity).}
% For the 2D Navier-Stokes equation at $Re=5000$, capturing the fine-grained vortical structures requires resolving dynamics across multiple scales. \method achieves a residual of ${8.23 \times 10^{-8}}$, significantly outperforming Adam and Muon. We attribute this robustness to the adaptive eigenbasis update strategy, which allows the optimizer to maintain stable preconditioners for different network layers with independent update frequencies, effectively balancing the optimization of the coupled pressure and velocity fields.

\textbf{Additional benchmarks.}
Additional results for Lid-Driven Cavity, Wave, Korteweg-de Vries, Gray-Scott, and Ginzburg-Landau are provided in Appendix Figure~\ref{fig:aggre_plots_all}. Detailed problem formulations and hyperparameter settings are given in Appendix~\ref{sec:benchmarks_description}.

\begin{table*}[h]
\vspace{-0.5em}
\caption{Final PDE residual after the fixed training budget. Lower is better.
Each entry reports the selected run under the tuning protocol in
Appendix~\ref{app:hparam_setup}. Bold marks the lowest value within each row.
Residual is an optimization metric. Physical solution errors and three-seed
statistics appear in Appendix~\ref{tab:physical_errors}.}
\centering
\footnotesize
\begin{tabular}{lccccc}
\toprule
\sc{Benchmark} & \sc{Adam} & \sc{Muon} & \sc{SOAP} & \sc{Purifying Shampoo} & \method (Ours) \\
\midrule
Wave & $2.15 \times 10^{-3}$ & $5.12 \times 10^{4}$ & $1.45 \times 10^{-3}$ & $1.18 \times 10^{-3}$ & $\mathbf{4.12 \times 10^{-5}}$ \\
Burgers & $4.21 \times 10^{-6}$ & $3.85 \times 10^{-4}$ & $2.15 \times 10^{-9}$ & $1.12 \times 10^{-9}$ & $\mathbf{5.54 \times 10^{-10}}$ \\
Allen-Cahn & $2.10 \times 10^{-7}$ & $1.25 \times 10^{-7}$ & $8.45 \times 10^{-7}$ & $3.15 \times 10^{-9}$ & $\mathbf{1.85 \times 10^{-9}}$ \\
Boussinesq & $5.12 \times 10^{-4}$ & $2.57 \times 10^{-2}$ & $9.15 \times 10^{-6}$ & $2.85 \times 10^{-6}$ & $\mathbf{8.42 \times 10^{-7}}$ \\
Korteweg-de Vries & $1.85 \times 10^{-8}$ & $1.93 \times 10^{-3}$ & $8.12 \times 10^{-10}$ & $1.75 \times 10^{-7}$ & $\mathbf{3.52 \times 10^{-10}}$ \\
Gray-Scott & $3.10 \times 10^{-6}$ & $4.20 \times 10^{-4}$ & $2.10 \times 10^{-7}$ & $\mathbf{5.12 \times 10^{-9}}$ & $7.46 \times 10^{-7}$ \\
Ginzburg-Landau & $1.15 \times 10^{-7}$ & $5.40 \times 10^{-8}$ & $\mathbf{9.12 \times 10^{-12}}$ & $2.45 \times 10^{-11}$ & ${1.75 \times 10^{-11}}$ \\
Lid-Driven Cavity & $1.88 \times 10^{-6}$ & $1.25 \times 10^{-3}$ & $3.81 \times 10^{-7}$ & $2.10 \times 10^{-7}$ & $\mathbf{8.23 \times 10^{-8}}$ \\
\bottomrule
\end{tabular}
\label{tab:complexity}
\end{table*}

PDE residual alone does not establish solution accuracy. We therefore compare
predictions with high-fidelity reference solutions on four representative PDEs.
Appendix Table~\ref{tab:physical_errors} reports relative $L^2$ and $H^1$
errors over three seeds for general-purpose, PINN-specific, and quasi-Newton
baselines. The error ranking agrees with the residual ranking on the stiff
Burgers and Boussinesq cases, while the full table records the remaining
method-dependent differences.

% \vspace{-1.5em}
\subsection{Component Analysis and Ablation Studies}
\vspace{-0.1cm}
% \seb{We write these paragraphs in past tense. Instead write as "table X suggests this".}

To isolate the impact of the specific mechanisms introduced in \method, namely variance downscaling, Kronecker-aware self-scaling, and the adaptive trigger, we conduct targeted ablation studies on the stiff Allen-Cahn and 2D Boussinesq benchmarks.

\textbf{Variance-state transition.}
Figure~\ref{fig:tau_and_eig_ablation} compares downscaling, reprojection, and a
reset on Allen-Cahn and Boussinesq. Reprojection produces large transient loss
spikes in these runs. A reset avoids the largest spikes but converges to a
higher final loss. The targeted $\gamma$ experiment in Appendix
Figure~\ref{fig:gamma_transient} separates behavior immediately before and
after a triggered basis update.
% \seb{In this part we are talking about a theorem in the appendix}

\textbf{Kronecker-adapted secant scaling.}
We compare the directional correction in Equation~\eqref{eq:tau_k} with an
unscaled update and a scalar correction that omits the Kronecker metric.
Figure~\ref{fig:tau_and_eig_ablation} reports the resulting trajectories. This
experiment tests directional scale matching. It does not compare a full dense
SS-BFGS update, which appears only in the tractable small-network experiment in
Appendix Table~\ref{tab:self_scaled_small}.

Figure~\ref{fig:tau_and_eig_ablation} summarizes the ablations for the proposed structural choices. Figures~\ref{fig:bous_stability_ablation}, \ref{fig:ac_stability_ablation} show that adaptive variance downscaling stabilizes basis updates. Figures~\ref{fig:bous_ablation_final}, \ref{fig:ac_ablation_final} show that the Kronecker-adapted scaling factor $\tau_k$ improves stability and convergence relative to unscaled and scalar SS-BFGS-style variants. Together, these results suggest that \method captures part of the step-size regulation benefit of self-scaled quasi-Newton methods without requiring dense Hessian cross-terms.
% {This ablation shows that SS-ESOAP successfully preserves the core step-size regulation benefits of self-scaling quasi-Newton methods. Even without the exact, dense Hessian cross-terms utilized by methods like SS-Broyden, our Kronecker-adapted $\tau_k$ formulation provides sufficient structure-aware curvature correction to dramatically stabilize and accelerate convergence over unscaled baselines.} The visual evidence for these structural choices is presented in Figure \ref{fig:tau_and_eig_ablation}. Figures \ref{fig:bous_stability_ablation} and \ref{fig:ac_stability_ablation} demonstrate the critical role of adaptive variance downscaling.
% ; whereas exact reprojection triggers catastrophic divergence spikes upon basis rotation, our soft reset mechanism maintains a smooth, stable loss trajectory. 
% Figures \ref{fig:bous_ablation_final} and \ref{fig:ac_ablation_final} validate the efficacy of our proposed Kronecker-adapted scaling factor ($\tau_k$).
% The blue trajectory, representing our structure-aware $\tau_k$, exhibits a significantly steeper and more stable descent compared to both standard Oren-Luenberger scaling and unscaled baselines, confirming that proper step-size regulation is indispensable for navigating anisotropic curvature.

\textbf{Efficiency of the adaptive trigger.} 
% Finally, we analyzed the trade-off between update frequency and computational cost by varying the off-diagonal threshold $\tau_{\text{trigger}}$, see Table \ref{tab:trigger_sensitivity}. Setting $\tau_{\text{trigger}} = 0$ (i.e., recomputing eigenbasis update every step) yielded marginal convergence gains but incurred prohibitive wall-clock costs. Conversely, ``lazy'' updates ($\tau_{\text{trigger}} \ge 0.8$) degraded solution accuracy significantly as the eigenbasis became misaligned with the evolving curvature. Our default threshold of $\tau_{\text{trigger}} = 0.2$ achieved a pareto-optimal balance and matched the convergence rate of full-frequency updates while reducing total training time compared to fixed-interval baselines.
Finally, we analyze the sensitivity of the eigenbasis update threshold $\tau_{\text{trigger}}$. As shown in Table \ref{tab:trigger_sensitivity} in the Appendix, extreme values degrade performance: $\tau_{\text{trigger}} = 0$ (updates every step) incurs prohibitive computational cost, while $\tau_{\text{trigger}} \ge 0.8$ (lazy updates) leads to stale curvature estimates and poor convergence. However, we observe a broad stable operating region $\tau_{\text{trigger}} \in [0.15, 0.3]$ where performance remains consistent. We use the fixed default values ($\tau_{\text{trigger}}=0.2$ and discrete $\gamma \in \{0.25, 0.5, 0.75\}$) across all experiments in this paper, ranging from simple regression tasks to the stiff Burgers and Boussinesq PDEs. \method achieves strong results across diverse problems without problem-specific tuning, suggesting that its adaptive mechanisms are robust across scales and stiffness regimes.

\begin{figure}[h]
  \centering
  \begin{subfigure}[t]{0.45\linewidth}
    \centering
    \includegraphics[width=1\linewidth, trim=0 0 0 1.1cm, clip]{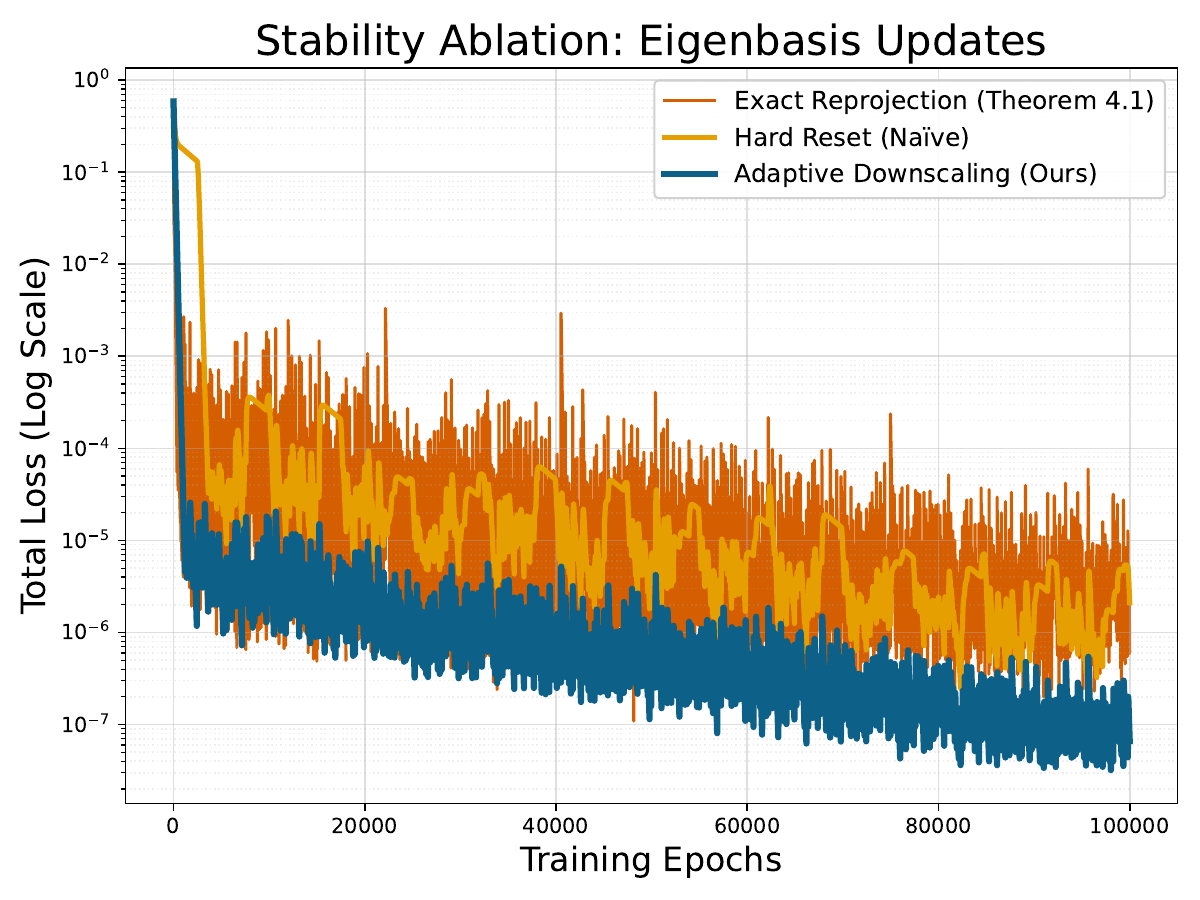}
    \caption{Stability ablation, adaptive eigenbasis, Boussinesq.}
    \label{fig:bous_stability_ablation}
  \end{subfigure}
  % \hspace{1.5pt}
  % \hfill
  \begin{subfigure}[t]{0.45\linewidth}
    \centering
    \includegraphics[width=1\linewidth, trim=0 0 0 1.1cm, clip]{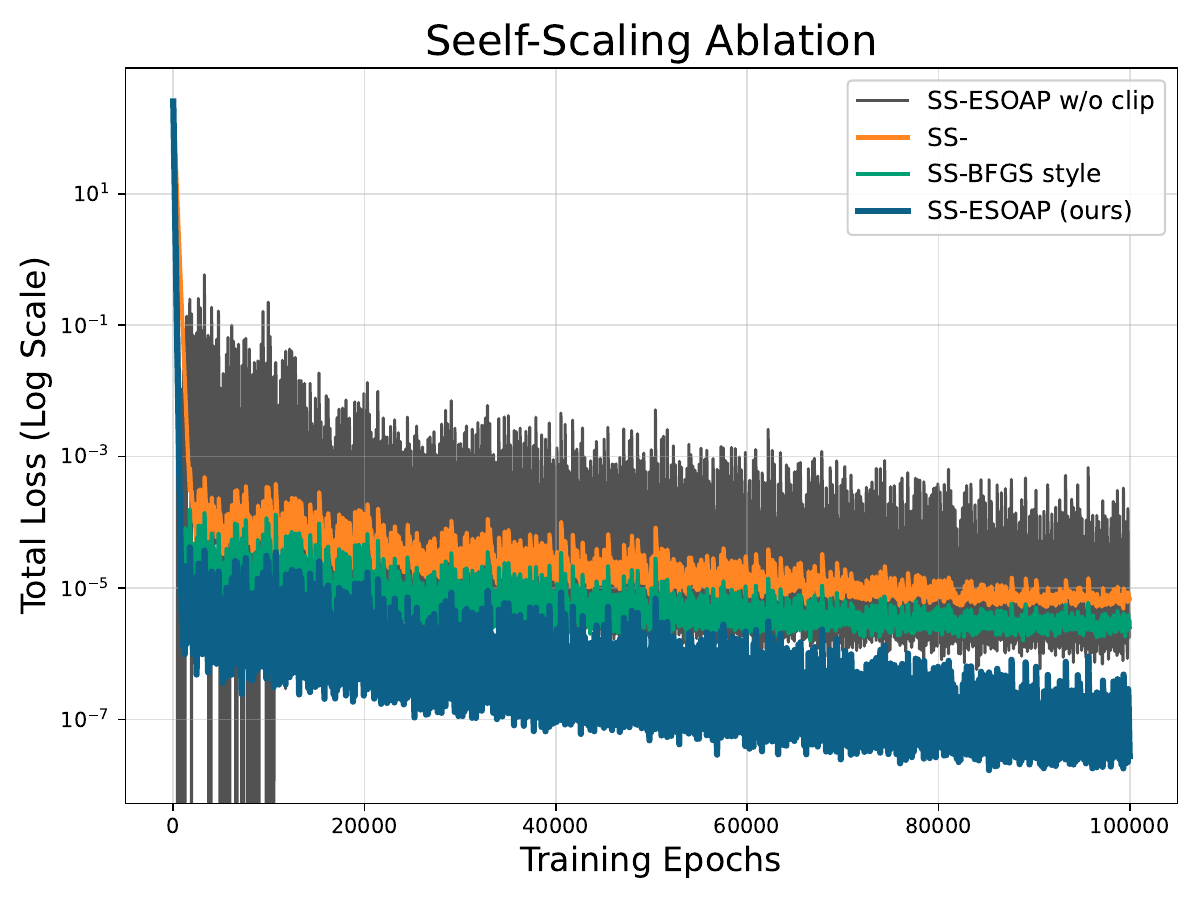}
    \caption{Self-scaling, Boussinesq.}
    \label{fig:bous_ablation_final}
  \end{subfigure}
  
  \begin{subfigure}[t]{0.45\linewidth}
    \centering
    \includegraphics[width=1\linewidth, trim=0 0 0 1.1cm, clip]{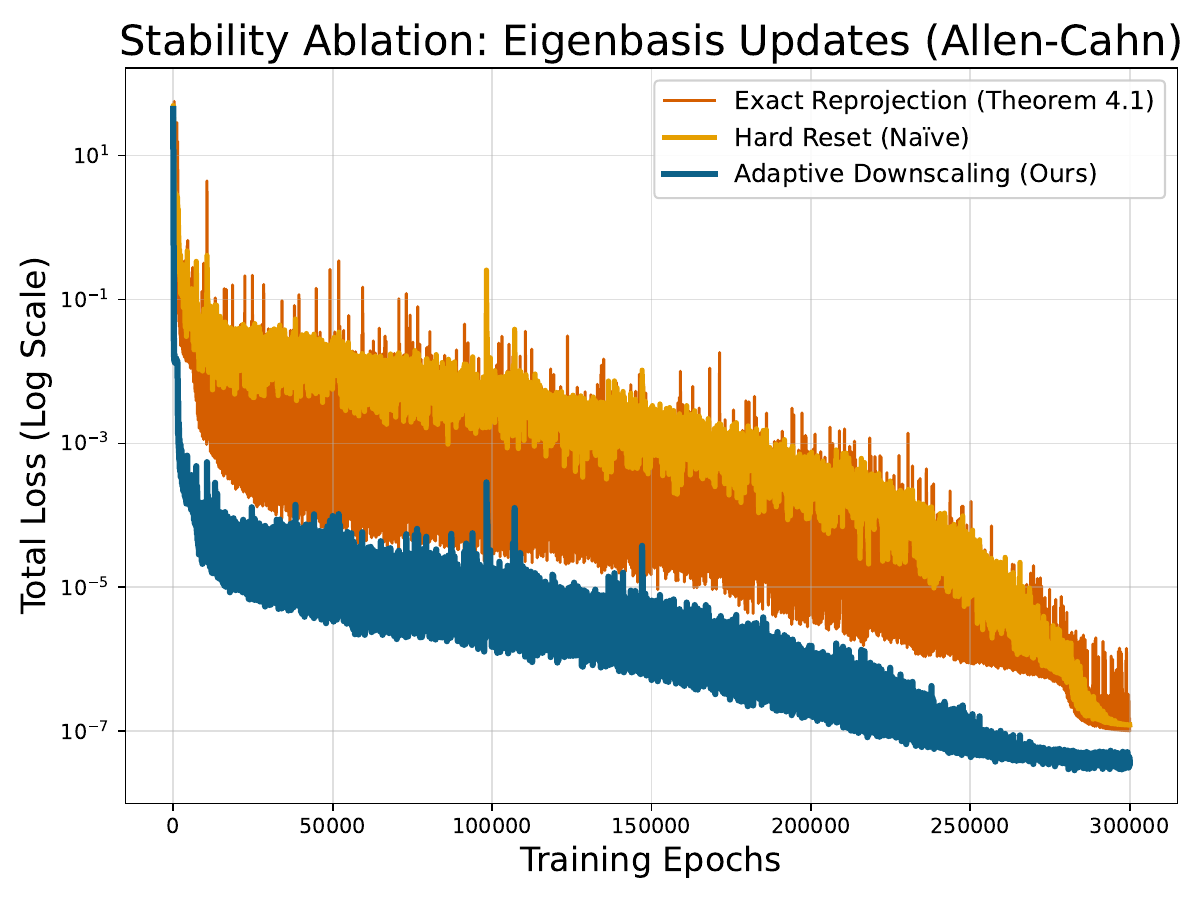}
    \caption{Stability ablation, adaptive eigenbasis, Allen-Cahn.}
    \label{fig:ac_stability_ablation}
  \end{subfigure}
  % \hspace{1.5pt}
  % \hfill
  \begin{subfigure}[t]{0.45\linewidth}
    \centering
    \includegraphics[width=1\linewidth, trim=0 0 0 1.1cm, clip]{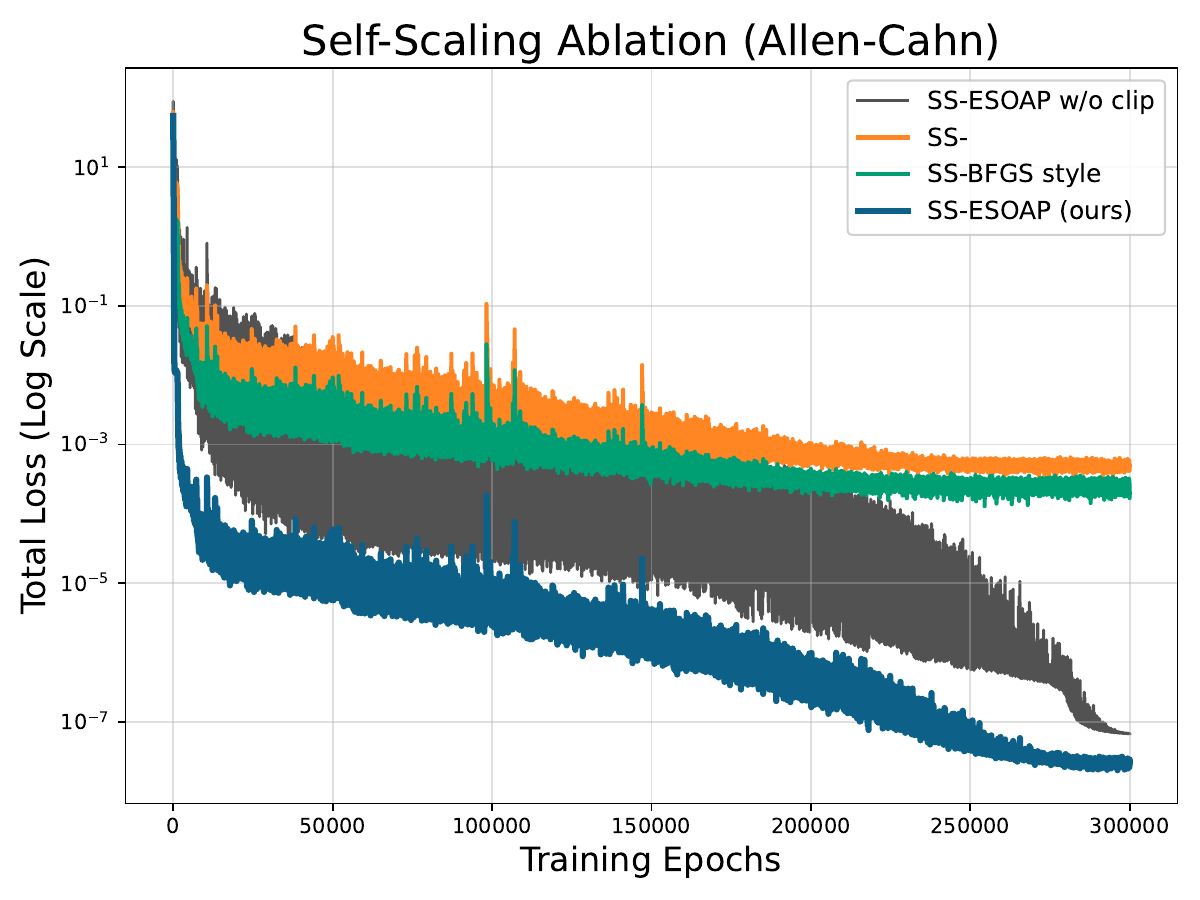}
    \caption{Self-scaling, Allen-Cahn.}
    \label{fig:ac_ablation_final}
  \end{subfigure}
\caption{\textbf{Ablation studies on 2D Boussinesq (top) and Allen-Cahn (bottom).} {(a) and (c):} Stability analysis of the eigenbasis update. {(b) and (d):} Efficacy of the self-scaling curvature correction. The proposed Kronecker-adapted factor $\tau_k$ (in blue) accelerates and stabilizes convergence compared to the unclipped counterpart, standard Oren-Luenberger scaling ($\tau_{OL}$, SS-BFGS style), and the unscaled baseline ($\tau=1$, SS-), validating structure-aware step-size regulation.}
  % Comparison between the reference solution and model predictions.
  \label{fig:tau_and_eig_ablation}
  \vspace{-1em}
\end{figure}

\vspace{-0.2cm}
\subsection{Computational Efficiency Summary}
\vspace{-0.2cm}
\label{sec:compute_summary}
While \method introduces a modest per-step computational overhead compared to first-order methods, its curvature-aware updates drastically reduce the total iterations required for convergence on stiff problems. As summarized in Table \ref{tab:compute_adam_comparison}, \method maintains a memory footprint highly competitive with Adam by leveraging Kronecker factorization. This effectively bypasses the prohibitive storage costs of full quasi-Newton methods. Ultimately, this structural efficiency translates to a substantial reduction in time-to-solution for high-precision PDE training. A comprehensive analysis detailing structural efficiency and memory tradeoffs is provided in Appendix \ref{app:compute_efficiency}.
\begin{table}[H]
\centering
\vspace{-1em}
\caption{Compute and memory comparison between Adam and \method on 2D Boussinesq, targeting residual $10^{-5}$ on a single consumer GPU.}

\label{tab:compute_adam_comparison}
\small
\begin{tabular}{lcc}
\toprule
\textsc{Metric} & \textsc{Adam} & \textsc{\method (Ours)} \\
\midrule
{Storage Complexity (Per Layer)} & $O(d_{\text{in}} d_{\text{out}})$ & $O(d_{\text{in}} d_{\text{out}} + d_{\text{in}}^2 + d_{\text{out}}^2)$ \\
{Peak VRAM Footprint} & 8.2 GB & 9.2 GB \\
{Time to Target Solution} & DNC ($>14.0$ hours) & \textbf{4.1 hours} \\
{Scalability (10M+ Parameters)} & Tractable & Tractable \\
\bottomrule
\end{tabular}
\vspace{-1em}
\end{table}

% \alex{trigger sensitivity ablation plots - two side by side subplots: loss vs steps and loss vs wall-clock time}
% \alex{
% $\tau = 0.0$ (Ideal/Shampoo): Converges efficiently in steps (steepest descent) but is very slow in time because it computes an expensive eigendecomposition at every step.

% $\tau = 0.2$ (Ours): Matches the step-wise efficiency of $\tau=0.0$ almost perfectly but faster
% % but runs ~3-4x faster in wall-clock time because it skips the expensive update 85-90\% of the time. This is the "Sweet Spot".

% $\tau = 0.8$ (Lazy): Fast per step (rare updates), but the geometry becomes stale, which should cause the loss convergence to degrade significantly.

% $\tau = \infty$ (Frozen): Very fast steps, but optimization stalls early due to completely wrong curvature information.
% }

\section{Limitations}

Our analysis does not prove global convergence or sufficient decrease for the
full nonconvex algorithm. The basis trigger uses the tracked Kronecker factors
as a heuristic proxy for preconditioning degradation. Its $O(d^3)$ check and
eigendecomposition costs limit use in wide layers. Dense SS-BFGS or SS-Broyden
may offer better directions on networks small enough to store their state, and
modern residual-space natural-gradient methods form a strong alternative. The
empirical gains are not uniform: SOAP-family methods perform better on
Gray-Scott and Ginzburg-Landau. All main PDE experiments use MLP or PirateNet
backbones. Transfer to transformer-based physics models such as PINNsFormer,
whose reported loss surface is smoother~\citep{zhao2024pinnsformer}, remains
unevaluated. The three-seed study covers four representative PDEs, so broader
seed-level conclusions require more runs.

\section{Conclusion}

We introduced \method, a Kronecker-factored optimizer that combines
directional secant-energy matching with adaptive basis updates and a
variance-state transition. The method attains the lowest final residual on six
of eight PDE benchmarks and improves wall-clock time to a fixed target on the
stiff Boussinesq case. Three-seed physical errors on four PDEs support the
solution quality of these low-residual fits. Results on Gray-Scott and
Ginzburg-Landau show that the added scaling is not uniformly beneficial. The
main open questions are how to reduce the cubic basis-check cost, how to select
the state transition from online diagnostics, and how the method transfers to
smoother physics architectures.

\bibliography{example_paper}
\bibliographystyle{unsrt}
% \bibliographystyle{icml2026}

%%%%%%%%%%%%%%%%%%%%%%%%%%%%%%%%%%%%%%%%%%%%%%%%%%%%%%%%%%%%

% \newpage

\section*{Appendix}

\appendix

\section{Proofs of Main Theorems}
\label{app:proofs}

\subsection{Proof of Theorem \ref{thm:stability_downscaling}}

\begin{proof}
By the triangle inequality,
$$
\left\|\gamma V_t-V_{t+1}^*\right\|_F \leq\left\|(\gamma-1) V_t\right\|_F+\left\|V_t-V_{t+1}^*\right\|_F.
$$
Since $0<\gamma \leq 1$,
$$
\left\|(\gamma-1) V_t\right\|_F=(1-\gamma)\left\|V_t\right\|_F .
$$
It remains to bound the second term, $\left\|V_t-V_{t+1}^*\right\|_F$. Define $M_t:=Q_t^{\top} C_t Q_t$.
Since $U=Q_{t+1}^{\top} Q_t$, we have $Q_{t+1}=Q_t U^{\top}$. Therefore,
$$
Q_{t+1}^{\top} C_t Q_{t+1}=U Q_t^{\top} C_t Q_t U^{\top}=U M_t U^{\top} .
$$
Using the definitions of $V_t$ and $V_{t+1}^*$,
$$
V_t=\operatorname{diag}\left(Q_t^{\top} C_t Q_t\right)=\operatorname{diag}\left(M_t\right),
\quad
V_{t+1}^*=\operatorname{diag}\left(Q_{t+1}^{\top} C_{t+1} Q_{t+1}\right).
$$
Hence,
$$
\begin{aligned}
\left\|V_t-V_{t+1}^*\right\|_F= & \left\|\operatorname{diag}\left(Q_t^{\top} C_t Q_t\right)-\operatorname{diag}\left(Q_{t+1}^{\top} C_{t+1} Q_{t+1}\right)\right\|_F \\
\leq & \left\|\operatorname{diag}\left(Q_t^{\top} C_t Q_t\right)-\operatorname{diag}\left(Q_{t+1}^{\top} C_t Q_{t+1}\right)\right\|_F \\
& +\left\|\operatorname{diag}\left(Q_{t+1}^{\top} C_t Q_{t+1}\right)-\operatorname{diag}\left(Q_{t+1}^{\top} C_{t+1} Q_{t+1}\right)\right\|_F.
\end{aligned}
$$
For the first term, using $M_t=Q_t^{\top} C_t Q_t$ and $Q_{t+1}^{\top} C_t Q_{t+1}=U M_t U^{\top}$,
$$
\left\|\operatorname{diag}\left(Q_t^{\top} C_t Q_t\right)-\operatorname{diag}\left(Q_{t+1}^{\top} C_t Q_{t+1}\right)\right\|_F=\left\|\operatorname{diag}\left(M_t\right)-\operatorname{diag}\left(U M_t U^{\top}\right)\right\|_F .
$$
The diagonal projection is non-expansive in Frobenius norm, so
$$
\left\|\operatorname{diag}\left(M_t\right)-\operatorname{diag}\left(U M_t U^{\top}\right)\right\|_F \leq\left\|M_t-U M_t U^{\top}\right\|_F
$$
Now,
$$
M_t-U M_t U^{\top}=(I-U) M_t U^{\top}+M_t\left(I-U^{\top}\right).
$$
Therefore,
$$
\begin{aligned}
\left\|M_t-U M_t U^{\top}\right\|_F & \leq\left\|(I-U) M_t U^{\top}\right\|_F+\left\|M_t\left(I-U^{\top}\right)\right\|_F \\
& \leq\|I-U\|_F\left\|M_t\right\|_2+\left\|M_t\right\|_2\left\|I-U^{\top}\right\|_F .
\end{aligned}
$$
Since $\left\|I-U^{\top}\right\|_F=\|I-U\|_F$, and orthogonal similarity preserves spectral norm,
$$
\left\|M_t\right\|_2=\left\|Q_t^{\top} C_t Q_t\right\|_2=\left\|C_t\right\|_2,
$$
we obtain
$$
\left\|M_t-U M_t U^{\top}\right\|_F \leq 2\left\|C_t\right\|_2\|I-U\|_F.
$$
For the second term, observe that
$$
\begin{aligned}
& \left\|\operatorname{diag}\left(Q_{t+1}^{\top} C_t Q_{t+1}\right)-\operatorname{diag}\left(Q_{t+1}^{\top} C_{t+1} Q_{t+1}\right)\right\|_F \\
& =\left\|\operatorname{diag}\left(Q_{t+1}^{\top}\left(C_t-C_{t+1}\right) Q_{t+1}\right)\right\|_F .
\end{aligned}
$$
Again, diagonal projection is non-expansive in Frobenius norm, and Frobenius norm is orthogonally invariant. Hence,
$$
\left\|\operatorname{diag}\left(Q_{t+1}^{\top}\left(C_t-C_{t+1}\right) Q_{t+1}\right)\right\|_F \leq\left\|C_t-C_{t+1}\right\|_F .
$$
Combining the preceding bounds gives
$$
\left\|V_t-V_{t+1}^*\right\|_F \leq 2\left\|C_t\right\|_2\|I-U\|_F+\left\|C_{t+1}-C_t\right\|_F .
$$
Finally,
$$
\begin{aligned}
\left\|V_{\text {scale }}-V_{t+1}^*\right\|_F & =\left\|\gamma V_t-V_{t+1}^*\right\|_F \\
& \leq(1-\gamma)\left\|V_t\right\|_F+\left\|V_t-V_{t+1}^*\right\|_F \\
& \leq(1-\gamma)\left\|V_t\right\|_F+2\left\|C_t\right\|_2\|I-U\|_F+\left\|C_{t+1}-C_t\right\|_F
\end{aligned}
$$
If $\left\|C_{t+1}-C_t\right\|_F \leq \Delta_t$, then the simplified bound follows immediately
$$
\left\|V_{\text {scale }}-V_{t+1}^*\right\|_F \leq(1-\gamma)\left\|V_t\right\|_F+2\left\|C_t\right\|_2\|I-U\|_F+\Delta_t .
$$
This completes the proof.
\end{proof}

\subsection{Proof of Theorem \ref{thm:self_scaling_convergence}}
\begin{proof}
Let
\begin{align*}
a_k=s_k^\top K_k s_k,
\qquad
c_k=s_k^\top y_k .
\end{align*}
Because \(K_k\succ0\) and \(s_k\neq0\), we have \(a_k>0\). By the curvature
assumption, \(c_k>0\). Consider
\begin{align*}
J(\tau)
=
\left[
\log
\frac{c_k}{s_k^\top(\tau K_k)s_k}
\right]^2
=
\left[
\log
\frac{c_k}{\tau a_k}
\right]^2 .
\end{align*}
Let
\begin{align*}
u=\log\frac{c_k}{a_k},
\qquad
z=\log\tau .
\end{align*}
The constraint \(0<\tau\le1\) is equivalent to \(z\le0\). Therefore
\begin{align*}
J(\tau)
=
(u-z)^2 .
\end{align*}
The unique minimizer over \(z\le0\) is the Euclidean projection of \(u\) onto
\((-\infty,0]\), namely
\begin{align*}
z^\star=\min\{u,0\}.
\end{align*}
Hence
\begin{align*}
\tau_k^\star
=
\exp(z^\star)
=
\min\left\{
1,
\frac{c_k}{a_k}
\right\}
=
\min\left\{
1,
\frac{s_k^\top y_k}{s_k^\top K_k s_k}
\right\}.
\end{align*}
This proves that it is the unique clipped log-secant matching solution. Finally, suppose
\begin{align*}
y_k=H_ks_k+r_k .
\end{align*}
Then
\begin{align*}
s_k^\top y_k
=
s_k^\top H_ks_k+s_k^\top r_k .
\end{align*}
When the unclipped solution is active, \(\tau_k=c_k/a_k\), and therefore
\begin{align*}
s_k^\top(\tau_kK_k)s_k
=
\tau_ka_k
=
c_k
=
s_k^\top H_ks_k+s_k^\top r_k .
\end{align*}
Thus, under an exact secant relation, the scaled Kronecker quadratic form matches
the true directional curvature exactly; under an approximate secant relation,
the mismatch is precisely the directional secant error. If the unconstrained
solution exceeds one, the constraint \(0<\tau\le1\) selects the closest admissible
log-scale match, namely \(\tau_k=1\).
\end{proof}

\section{Algorithms Pseudocode}

To connect our \method presented in Algorithm \ref{alg:ss_purifying_soap} with the established algorithms in the second-order optimizer literature, we provide the following technical context.

\subsection{Relation to Idealized and Warm-Started Shampoo}

Our algorithm can be viewed as a curvature-corrected approximation of Idealized Eigenvalue-Corrected Shampoo (Algorithm 1 in \citep{eschenhagen2025purifying}).

In the idealized setting, the exact eigendecomposition of the preconditioners $L_t$ and $R_t$ is computed at every step to perform the update $W_{t+1}=W_t- \eta L_t^{-1 / 2} G_t R_t^{-1 / 2}$. Standard SOAP approximates this efficiently using Warm-started QR iteration (Algorithm 4 in \citep{eschenhagen2025purifying}), which performs a single step of power iteration followed by QR decomposition to update the eigenbasis.

\method improves upon this approximation in two critical ways.

First, instead of a fixed schedule or a single QR step, we use the Purifying trigger (Step 2 in Algorithm \ref{alg:ss_purifying_soap}). We monitor the off-diagonal mass of the preconditioner in the current basis, $\rho(A, Q)$, which serves as a proxy for the approximation error of the stale eigenbasis. We only trigger a computationally expensive re-decomposition when this error exceeds $\tau_{\text{trigger}}$, ensuring the eigenbasis is refined only when the curvature landscape shifts significantly.

Second, we introduce the scalar $\tau_k$ (Step 4), which acts as a global correction factor for the eigenvalues, ensuring the trace of the inverse Hessian approximation matches the secant condition along the most recent displacement.

\subsection{Relation to EShampoo and Grafting}
We highlight two key advantages of our strategy over EShampoo (Algorithm 2 in \citep{eschenhagen2025purifying}) and Shampoo with Adam grafting.

\textbf{Stability via Downscaling vs. Reprojection.}
EShampoo typically employs a fixed eigenbasis computation frequency $F$. When the basis is updated, standard approaches often reproject the optimizer state (e.g., the second moment $V_t$ ) into the new basis via similarity transformation. As detailed in our analysis, this is numerically unstable: a significant rotation of the eigenbasis can map historical low-variance directions to high-curvature directions in the new basis, causing overly large updates. \method replaces reprojection with variance downscaling (Step 2, Soft Reset). By keeping the historical $V_t$ but reducing its magnitude by $\gamma$, we effectively increase the damping uniformly, preserving optimization history while conservatively forgetting the precise directional variance that is no longer valid.

\textbf{Intrinsic vs. Extrinsic Scaling.}
Shampoo with Adam grafting attempts to stabilize second-order updates by forcing the update magnitude or direction to align with Adam. This is an ad-hoc extrinsic constraint. In contrast, \method achieves intrinsic stability via the self-scaling factor $\tau_k$. This factor naturally damps the update when the local curvature is high (large denominator in \eqref{eq:tau_k}) and accelerates it in flat regions without the need for grafting or manual tuning of the epsilon parameter.

\begin{algorithm}[h]
\caption{\method, full pseudocode}
\label{alg:ss_purifying_soap}
\small
\begin{algorithmic}

\STATE \textbf{Require:} $W_0\in\mathbb{R}^{m\times n}$, $\eta>0$,
$\beta_1,\beta_2\in(0,1)$, $\epsilon>0$, $\lambda\geq0$,
$\tau_{\min}\in(0,1]$.
\STATE \textbf{Require:} $I_{\mathrm{check}}$,
$\tau_{\mathrm{trigger}}$, $T_{\mathrm{warm}}$.
\STATE \textbf{Initialize:}
$\widetilde M_0=V_0=\mathbf{0}$,
$L_0=\epsilon I_m$, $R_0=\epsilon I_n$,
$Q_L=I_m$, $Q_R=I_n$.

\FOR{$t=1,\ldots,T$}

    \STATE \textbf{1. Gradient and Kronecker factors}
    \STATE $G_t\leftarrow\nabla_W\mathcal{L}(W_{t-1})$
    \STATE $L_t\leftarrow\beta_2L_{t-1}+(1-\beta_2)G_tG_t^\top$,
    $R_t\leftarrow\beta_2R_{t-1}+(1-\beta_2)G_t^\top G_t$

    \STATE \textbf{2. Adaptive eigenbasis update}
    \IF{$t>T_{\mathrm{warm}}$ \textbf{and}
    $t\bmod I_{\mathrm{check}}=0$}
        \STATE $\displaystyle
        \rho(A,Q)\leftarrow
        \frac{\|Q^\top AQ-\operatorname{diag}(Q^\top AQ)\|_F}
        {\|Q^\top AQ\|_F+\epsilon}$,
        $\quad
        \rho_t\leftarrow
        \max\{\rho(L_t,Q_L),\rho(R_t,Q_R)\}$
        \IF{$\rho_t>\tau_{\mathrm{trigger}}$}
            \STATE $(Q_L^{\mathrm{old}},Q_R^{\mathrm{old}})
            \leftarrow(Q_L,Q_R)$
            \STATE $Q_L\leftarrow\operatorname{eigvec}(L_t)$,
            $Q_R\leftarrow\operatorname{eigvec}(R_t)$
            \STATE $\widetilde M_{t-1}\leftarrow
            (Q_L^\top Q_L^{\mathrm{old}})
            \widetilde M_{t-1}
            ((Q_R^{\mathrm{old}})^\top Q_R)$
            \STATE $\gamma_t\leftarrow
            0.25$ if $\rho_t>0.8$,
            $0.5$ if $\rho_t>0.5$,
            and $0.75$ otherwise
            \STATE $V_{t-1}\leftarrow\gamma_tV_{t-1}$
        \ENDIF
    \ENDIF

    \STATE \textbf{3. Eigenspace statistics}
    \STATE $\widetilde G_t\leftarrow Q_L^\top G_tQ_R$
    \STATE $\widetilde M_t\leftarrow
    \beta_1\widetilde M_{t-1}+(1-\beta_1)\widetilde G_t$,
    $\quad
    V_t\leftarrow
    \beta_2V_{t-1}+(1-\beta_2)\widetilde G_t^{\odot2}$

    \STATE \textbf{4. Self-scaling correction}
    \IF{$t=1$}
        \STATE $\tau_t\leftarrow1$
    \ELSE
        \STATE $S_t\leftarrow W_{t-1}-W_{t-2}$,
        $\quad Y_t\leftarrow G_t-G_{t-1}$
        \STATE $c_t\leftarrow\operatorname{Tr}(Y_t^\top S_t)$,
        $\quad
        a_t\leftarrow
        \operatorname{Tr}(S_t^\top L_t^{-1}S_tR_t^{-1})$
        \IF{$c_t>0$ \textbf{and} $a_t>0$}
            \STATE $\displaystyle
            \tau_t\leftarrow
            \min\{1,\max\{\tau_{\min},c_t/a_t\}\}$
        \ELSE
            \STATE $\tau_t\leftarrow1$
        \ENDIF
    \ENDIF

    \STATE \textbf{5. Parameter update}
    \STATE $\widehat M_t\leftarrow\widetilde M_t/(1-\beta_1^t)$,
    $\quad
    \widehat V_t\leftarrow V_t/(1-\beta_2^t)$
    \STATE $\widetilde U_t\leftarrow
    \tau_t^{-1/2}\widehat M_t
    \oslash(\sqrt{\widehat V_t}+\epsilon)$
    \STATE $U_t\leftarrow Q_L\widetilde U_tQ_R^\top$,
    $\quad
    W_t\leftarrow(1-\eta\lambda)W_{t-1}-\eta U_t$

\ENDFOR
\end{algorithmic}
\end{algorithm}

\subsection{Efficient Computation of $\tau_k$}
Similar to \eqref{eq:tau_k_1}, we avoid direct computation of the inverse preconditioner $H_k^{-1}$, as it is $O(n^3)$ and can be numerically unstable when the curvature matrices are ill-conditioned. In our setting, the preconditioner admits a Kronecker-factored eigendecomposition
\begin{align*}
H_k &= L_k \otimes R_k, \\
L_k &= Q_{L,k}\operatorname{diag}(D_{L,k})\,Q_{L,k}^\top, \\
R_k &= Q_{R,k}\operatorname{diag}(D_{R,k})\,Q_{R,k}^\top,
\end{align*}
where $Q_{L,k}$ and $Q_{R,k}$ are orthogonal matrices and $D_{L,k}$, $D_{R,k}$ contain the corresponding eigenvalues.

By rotating tensors into the eigenspaces of $L_k$ and $R_k$, the application of $H_k^{-1}$ reduces to elementwise scaling by the inverse eigenvalues $D_{L,k}^{-1}$ and $D_{R,k}^{-1}$. Consequently, all terms required to compute $\tau_k$ can be evaluated using matrix-vector products and elementwise operations, without explicitly forming inverse matrices.

Specifically, defining the rotated quantities
\begin{align*}
\tilde{S}_k &= Q_{L,k}^\top S_k Q_{R,k}, \quad \tilde{Y}_k = Q_{L,k}^\top Y_k Q_{R,k},
\end{align*}
the denominator in~\eqref{eq:tau_k} can be written as
\begin{equation}
\operatorname{Tr} (S_k^\top L_k^{-1} S_k R_k^{-1})
=
\sum_{i,j}
\frac{\tilde{S}_{k,ij}^2}{D_{L,k,i}\,D_{R,k,j}},
\end{equation}
which is computable in $O(n)$ time. On the other hand, the numerator simplifies to
\begin{align*}
\operatorname{Tr}( Y_k^\top S_k)
=
\sum_{i,j} \tilde{Y}_{k,ij}\,\tilde{S}_{k,ij}.
\end{align*}
Thus, the scaling parameter is evaluated as
\begin{equation}
\tau_k
=
\min\left\{
1,\;
\frac{\sum_{i,j} \tilde{Y}_{k,ij}\,\tilde{S}_{k,ij}}
{\sum_{i,j} \tilde{S}_{k,ij}^2 / (D_{L,k,i} D_{R,k,j})}
\right\},
\end{equation}
without ever forming $H_k^{-1}$. This formulation preserves numerical stability while maintaining the computational efficiency required for large-scale optimization.

%%%%%%%%%%%%%%%%%%%%%%%%%%%%%%%%%%%%%%%%%%%%%%%%%%%%
%%%%%%%%%%%%%%%%%%%%%%%%%%%%%%%%%%%%%%%%%%%%%%%%%%%%

\section{Architectures and Additional Design Choices}

\subsection{PirateNet}
PirateNet \cite{wang2024piratenets} aims to enable stable and efficient training of deep PINN models. It first transforms input coordinates $\mathbf{x}$ into a high-dimensional feature space using random Fourier features
$$
\Phi(\mathbf{x})=\left[\begin{array}{c}
\cos (\mathbf{B x}) \\
\sin (\mathbf{B x})
\end{array}\right],
$$
where $\mathbf{B} \in \mathbb{R}^{m \times d}$ has entries sampled i.i.d. from $\mathcal{N}(0, s^2)$ with user-specified $s>0$. This embedding mitigates spectral bias in PINNs by improving the eigenfunction frequency of the Neural Tangent Kernel, enabling better learning of high-frequency components and multiscale features.
The embedded coordinates are processed through two dense layers that act as gates $\mathbf{U}=\sigma\left(\mathbf{W}_1 \Phi(\mathbf{x})+\mathbf{b}_1\right)$ and $\mathbf{V}=\sigma\left(\mathbf{W}_2 \Phi(\mathbf{x})+\mathbf{b}_2\right)$,
where $\sigma$ is a point-wise activation function. This gating mechanism is essentially the same as in modified MLP.
Let $\mathbf{x}^{(1)}=\Phi(\mathbf{x})$ and $\mathbf{x}^{(l)}$ be the input to the $l$-th block. Each block performs
$$
\begin{aligned}
\mathbf{f}^{(l)} =\sigma\left(\mathbf{W}_1^{(l)} \mathbf{x}^{(l)}+\mathbf{b}_1^{(l)}\right), \quad \mathbf{z}_1^{(l)} =\mathbf{f}^{(l)} \odot \mathbf{U}+\left(1-\mathbf{f}^{(l)}\right) \odot \mathbf{V}, \\
\mathbf{g}^{(l)} =\sigma\left(\mathbf{W}_2^{(l)} \mathbf{z}_1^{(l)}+\mathbf{b}_2^{(l)}\right), \quad \mathbf{z}_2^{(l)} =\mathbf{g}^{(l)} \odot \mathbf{U}+\left(1-\mathbf{g}^{(l)}\right) \odot \mathbf{V}, \\
\mathbf{h}^{(l)} =\sigma\left(\mathbf{W}_3^{(l)} \mathbf{z}_2^{(l)}+\mathbf{b}_3^{(l)}\right), \quad \mathbf{x}^{(l+1)} =\alpha^{(l)} \mathbf{h}^{(l)}+\left(1-\alpha^{(l)}\right) \mathbf{x}^{(l)}.
\end{aligned}
$$
Each block comprises three dense layers with dual gating operations and an adaptive residual connection. The trainable $\alpha^{(l)}$ parameters control block nonlinearity: $\alpha^{(l)}=0$ yields an identity mapping, while $\alpha^{(l)}=1$ produces fully nonlinear transformation.
The final output of a PirateNet of $L$ residual blocks is given by $\mathbf{u}_\theta=\mathbf{W}^{(L+1)} \mathbf{x}^{(L)}$. 
Following the design choices made by \citep{wang2024piratenets}, we initialize $\alpha^{(l)}=0$, making the initial output a linear combination of first-layer embeddings. This initialization strategy mitigates training difficulties in deep networks by starting with effectively shallow architecture and gradually increasing depth through learned $\alpha$ values. Additionally, the linear structure at initialization enables direct integration of prior solution data through least squares fitting
$$
\min _{\mathbf{W}}\|\mathbf{W} \Phi-\mathbf{Y}\|_2^2,
$$
where $\mathbf{Y}$ represents available measurements. This approach provides an optimal initial guess based on various data sources, including experimental measurements, boundary conditions, or linearized PDE solutions.

\begin{table}[t]
\caption{Comparison of optimization methods showing preconditioner types, storage and computational complexity for an $n\times n$ weight matrix, where $P$ is the parameter count and $N$ is the batch size of collocation points, and practical compatibility with mini-batches and scalability with large neural networks.}
\label{sample-table}
\vskip -0.2in
\begin{center}
\footnotesize
\begin{sc}
\begin{tabular}{lcccccr}
\toprule
Method & (Approx.) Precond. & Storage & Computation & Mini-batch & DNN \\
\midrule
Natural Gradient & $F^{-1}$ & $O(n^4)$ & $\min(O(PN^2), O(NP^2))$ & $\times$ & $\times$ \\
% BFGS/L-BFGS & $H^{-1}$ & $O(n^2)$ & $O(n^2)$ & $\times$ & $\sqrt{}$ \\
BFGS & $H^{-1}$ & $O(n^2)$ & $O(n^4)$ & $\times$ & $\sqrt{}$ \\
L-BFGS & $H^{-1}$ & $O(n^2)$ & $O(n^3)$ & $\times$ & $\sqrt{}$ \\
SOAP & $H^{-1}$ & $O(n^2)$ & $O(n^3)$ & $\sqrt{}$ & $\sqrt{}$ \\
\method & $\tau H^{-1}$ & $O(n^2)$ & $O(n^3)$ & $\sqrt{}$ & $\sqrt{}$ \\
Shampoo & $H_{\text {ada }}^{-1 / 2}$ & $O\left(n^2\right)$ & $O\left(n^3\right)$ & $\sqrt{}$ & $\sqrt{}$ \\
Muon & $(GG^\top)^{-1 / 2}$ & $O\left(n^2\right)$ & $O\left(n^3\right)$ & $\sqrt{}$ & $\sqrt{}$ \\
\bottomrule
\end{tabular}
\end{sc}
\end{center}
\vskip -0.1in
\end{table}

\subsection{Additional Details of Standard PDE Benchmarks}
\label{sec:benchmarks_description}

\textbf{Burgers equation.} The 1D Burgers equation is defined as
$$
u_t+u u_x=\nu u_{x x},
$$
where $u$ represents the velocity field, and $\nu$ is the kinematic viscosity coefficient controlling the diffusion strength. Here we set $(x, t) \in \Omega=[-1,1] \times[0,1]$, with initial and boundary conditions
$$
\begin{aligned}
u(x, 0) & =-\sin (\pi x) \\
u(-1, t) & =u(1, t)=0,
\end{aligned}
$$
and viscosity parameter $\nu=0.01 / \pi$. Following \citep{wang2025high}, we enforce a hard constraint via Taylor expansion at the origin as $U=-z+z^3+z^4 U_1$, for an odd function $U_1$. 

\textbf{2D Boussinesq equation.} For the 2D Boussinesq equation on the half plane, in vorticity form with the self-similar ansatz, we get the following profile equations for $\left(\Omega, U_1, U_2, \Phi, \Psi\right)$ as in \citep{wang2023asymptotic}:
$$
\begin{aligned}
\Omega+\left((1+\lambda)\left(y_1, y_2\right)^T+\left(U_1, U_2\right)^T\right) \cdot \nabla \Omega & =\Phi \\
\left(2+\partial_{y_1} U_1\right) \Phi+\left((1+\lambda)\left(y_1, y_2\right)^T+\left(U_1, U_2\right)^T\right) \cdot \nabla \Phi & =-\partial_{y_1} U_2 \Psi \\
\left(2+\partial_{y_2} U_2\right) \Psi+\left((1+\lambda)\left(y_1, y_2\right)^T+\left(U_1, U_2\right)^T\right) \cdot \nabla \Psi & =-\partial_{y_2} U_1 \Phi \\
\partial_{y_1} U_1+\partial_{y_2} U_2=0, \quad \Omega=\partial_{y_1} U_2-\partial_{y_2} U_1, \quad \partial_{y_1} \Psi & =\partial_{y_2} \Phi,
\end{aligned}
$$
where $(\Omega, U_1, \Phi)$ are odd and $(U_2, \Psi)$ are even in $y_1$ and we are in the half plane $y_2 \geq 0$. For the boundary conditions, we impose a non-penetration boundary condition $U_2\left(y_1, 0\right)=0$ along with decaying weak asymptotics at the far field, with Dirichlet boundary conditions $\Phi=\Psi=0$ and Neumann boundary conditions for the velocity field $\nabla\left(U_1, U_2\right)^T=0$.

For the nondegeneracy condition, we adapt the approach proposed in \citep{wang2025high} and impose $\partial_{y_1} \Omega(0,0)=-1$ and use Taylor expansion to enforce a hard constrain to rule out the trivial solution $U=0$ when using weak asymptotics. We enforce $\partial_1 \Omega(0,0)=-1$ and $\Omega$ is odd in $z_1$ via a Taylor expansion as $\Omega=-z_1+z_1 z_2 \Omega_1+z_1^2 \Omega_2$, where $\Omega_1, \Omega_2$ are even and odd functions in $z_1$ respectively.

\textbf{Wave equation.} We consider a one-dimensional wave equation in the domain $\Omega=[0,1] \times[0,1]$ taking the form
$$
\begin{aligned}
& u_{t t}(x, t)-4 u_{x x}(x, t)=0, \quad(x, t) \in(0,1) \times(0,1), \\
& u(0, t)=u(1, t)=0, \quad t \in[0,1], \\
& u(x, 0)=\sin (\pi x)+\frac{1}{2} \sin (4 \pi x), \quad x \in[0,1], \\
& u_t(x, 0)=0, \quad x \in[0,1] .
\end{aligned}
$$
where $u$ represents the wave amplitude, and $c$ is the wave propagation speed, determined by the medium's physical properties. By d'Alembert's formula, the solution $u(x, t)$ is given by
$$
u(x, t)=\sin (\pi x) \cos (2 \pi t)+\frac{1}{2} \sin (4 \pi x) \cos (8 \pi t) .
$$

\begin{figure}[h]
  \centering

  \begin{subfigure}[t]{0.3\linewidth}
    \centering
    \includegraphics[width=1\linewidth]{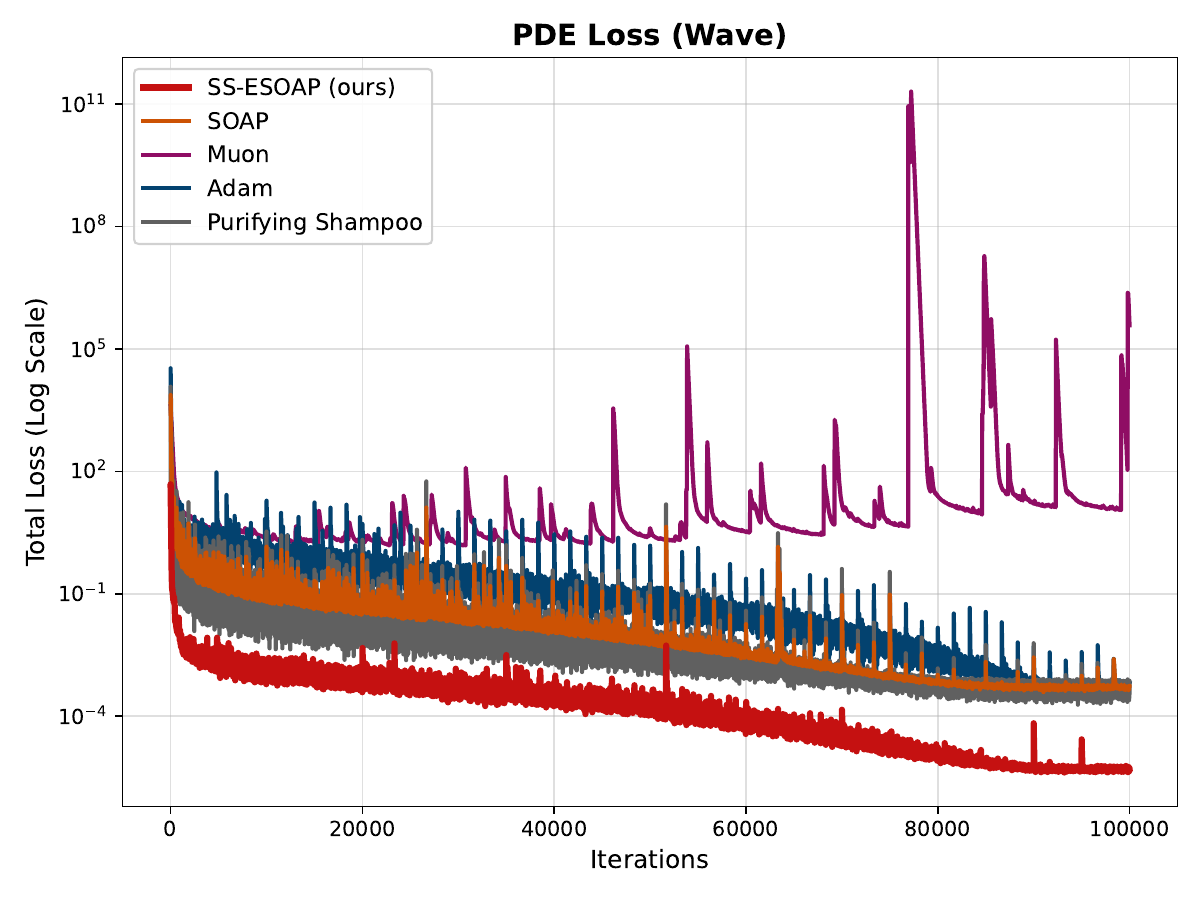}
    \caption{Training loss convergence.}
    \label{fig:wave_128}
  \end{subfigure}
  % \hfill
  \begin{subfigure}[t]{0.6\linewidth}
    \centering
    \includegraphics[width=1\linewidth]{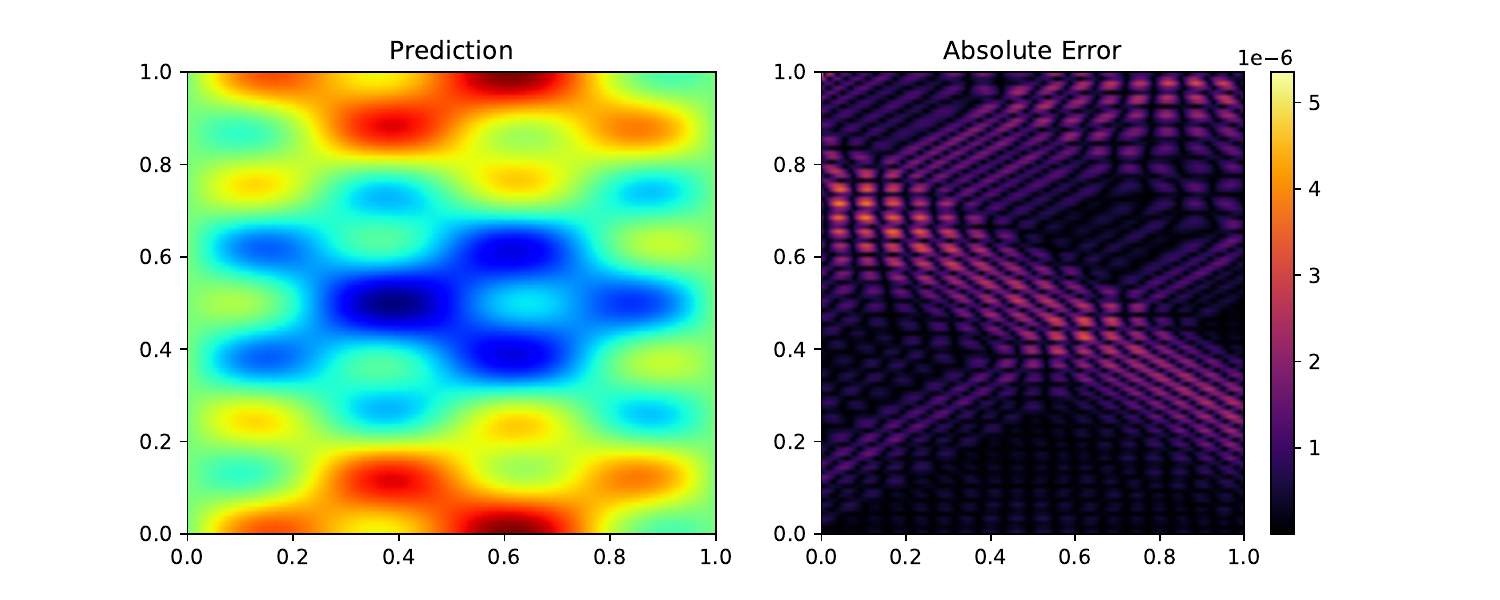}
    \caption{Final prediction and absolute error field.}
    \label{fig:wave_heatmap}
  \end{subfigure}
  \caption{\textbf{Wave Equation Benchmark.} (a) PDE residual loss trajectories (log scale) comparing \method (red) against Adam, Muon, SOAP, and Purifying Shampoo over 100k iterations. \method achieves the lowest final loss, while Muon exhibits significant instability. (b) Visualization of the predicted solution field and the corresponding absolute error map at the final training step, demonstrating high-precision recovery of the dynamics with errors on the magnitude of $10^{-6}$.}
  % Comparison between the reference solution and model predictions.
  \label{fig:wave_plots}
\end{figure}

\begin{figure}[h]
  \centering

  \begin{subfigure}[t]{0.3\linewidth}
    \centering
    \includegraphics[width=1\linewidth]{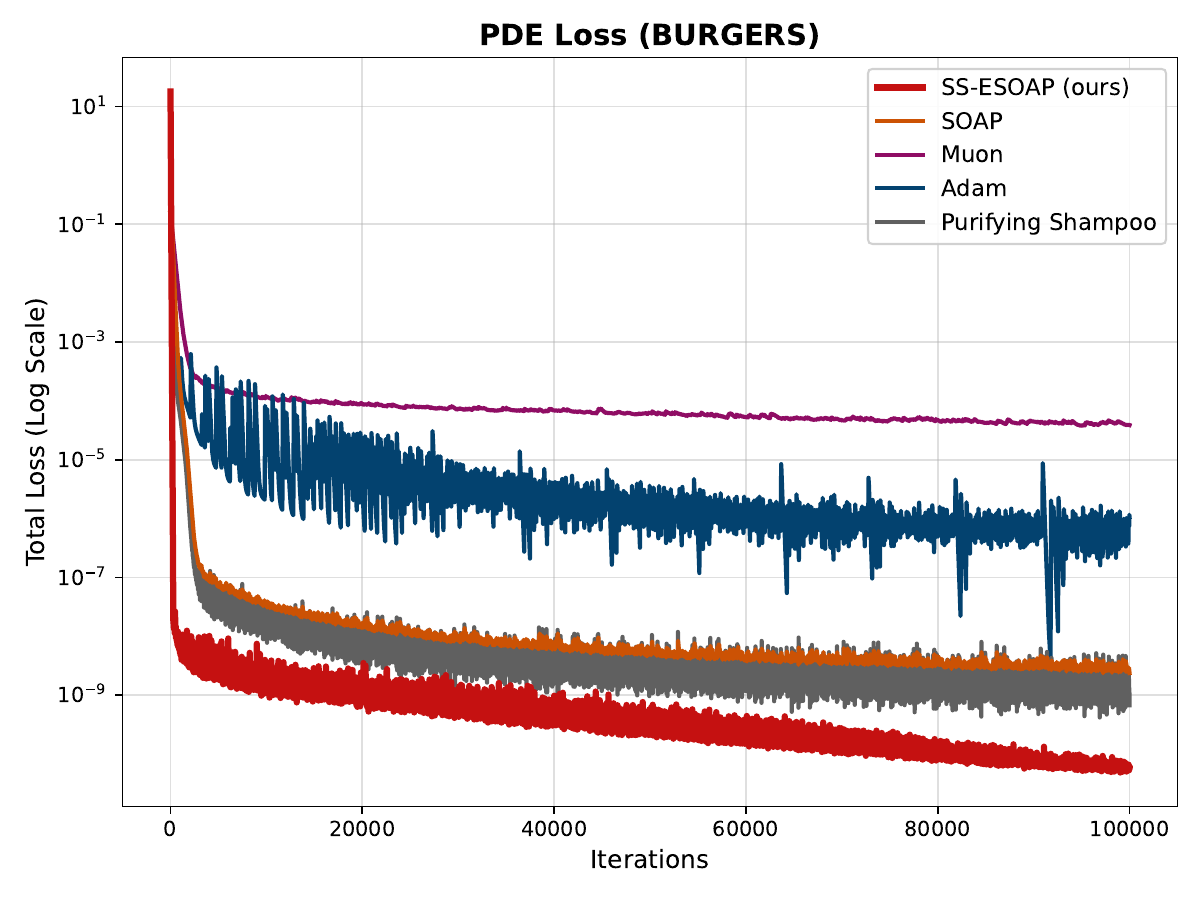}
    \caption{Training loss convergence.}
    \label{fig:burger_128}
  \end{subfigure}
  % \hfill
  \begin{subfigure}[t]{0.6\linewidth}
    \centering
    \includegraphics[width=1\linewidth]{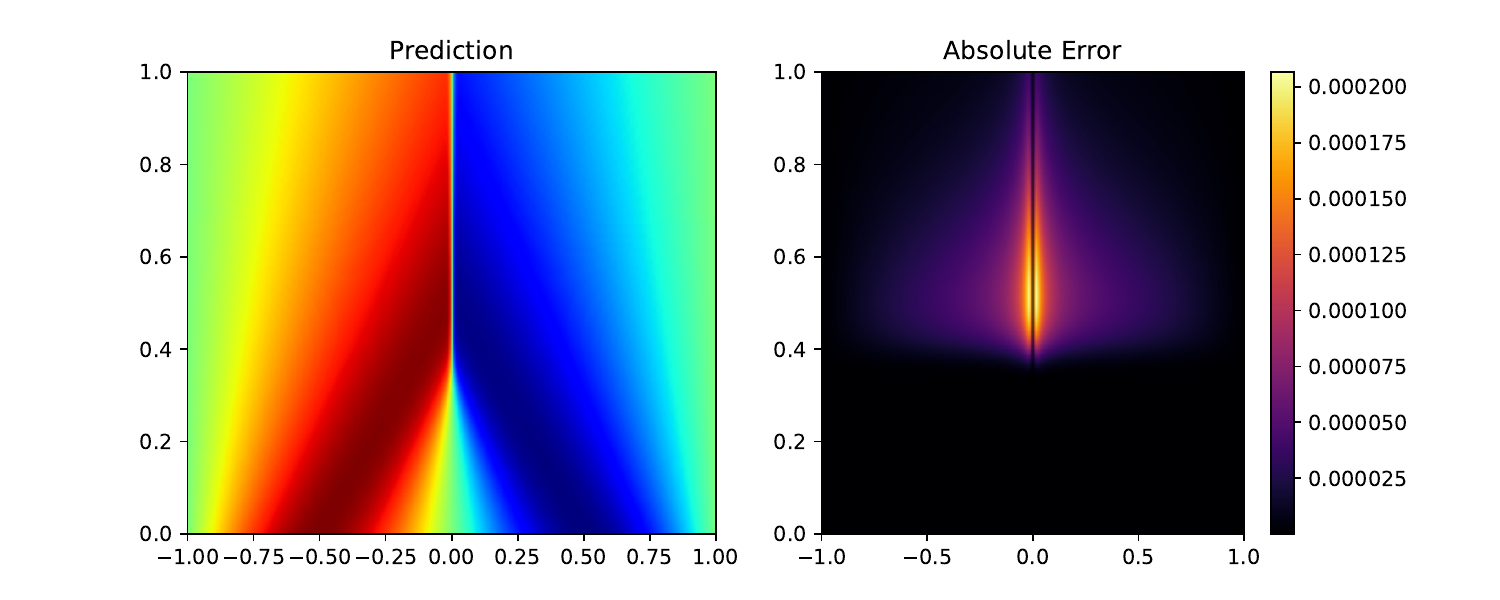}
    \caption{Final prediction and absolute error field.}
    \label{fig:burger_heatmap}
  \end{subfigure}
  \caption{\textbf{Burgers Equation Benchmark.} (a) PDE residual loss trajectories (log scale). \method (red) demonstrates superior convergence speed and stability, reaching a final residual orders of magnitude lower than Adam and Muon. (b) Visualization of the predicted solution and absolute error map. The low error magnitude confirms the optimizer's ability to accurately resolve the sharp shock interface characteristic of the inviscid limit.}
  % Comparison between the reference solution and model predictions.
  \label{fig:burger_plots}
\end{figure}

\textbf{Allen-Cahn equation.} We investigate the one-dimensional Allen-Cahn equation with periodic boundary conditions
% \begin{wrapfigure}{r}{0.35\textwidth}
%   \vspace{-1.0em}
%   \centering
%   \includegraphics[width=\linewidth]{plots_new/ac_comparison_best.pdf}
%   \vspace{-1.0em}
%   \caption{}
%   \label{fig:ac_comparison_best}
% \end{wrapfigure}
$$
\begin{aligned}
& u_t-0.0001 u_{x x}+5 u^3-5 u=0, \quad t \in[0,1], x \in[-1,1] \\
& u(0, x)=x^2 \cos (\pi x) \\
& u(t,-1)=u(t, 1), \quad u_x(t,-1)=u_x(t, 1)
\end{aligned}
$$
where $u$ represents the order parameter (e.g., concentration difference between two phases), $\epsilon$ controls the interfacial width, $a$ is the reaction rate coefficient, and the term ($u-u^3$) drives the phase separation.

\textbf{Korteweg-de Vries equation.} The one-dimensional KdV equation is expressed as
$$
\begin{aligned}
& u_t+\eta u u_x+\mu^2 u_{x x x}=0, \quad t \in(0,1), \quad x \in(-1,1), \\
& u(x, 0)=\cos (\pi x), \\
& u(t,-1)=u(t, 1),
\end{aligned}
$$
where $u$ represents the wave amplitude or water surface elevation, and $\eta$ governs the strength of the nonlinearity, while $\mu$ controls the dispersion level. Under the KdV dynamics, this initial wave evolves into a series of solitary-type waves. Like in recent PINN optimization works, we adopt the classical parameters of the KdV equation, setting $\eta=1$ and $\mu=0.022$.

% \begin{figure}[H]
%   \centering    
%   \begin{subfigure}[t]{0.3\linewidth}
%     \centering
%     \includegraphics[width=\linewidth]{plots/comparison_kdv_dim128.pdf}
% \label{fig:kdv_128}
% \vspace{-1.5em}
% \caption{Losses, residual, gradient condition numbers, and nuclear norms of matrix quadratic regression.}
% % \vspace{1em}
%     \includegraphics[width=\linewidth]{plots/comparison_kdv_dim256.pdf}
% \label{fig:kdv_256}
% \vspace{-1.5em}
%   \caption{}
% \end{subfigure}
%   \caption{}
% \end{figure}

\begin{figure}[h]
  \centering

  \begin{subfigure}[t]{0.3\linewidth}
    \centering
    \includegraphics[width=1\linewidth]{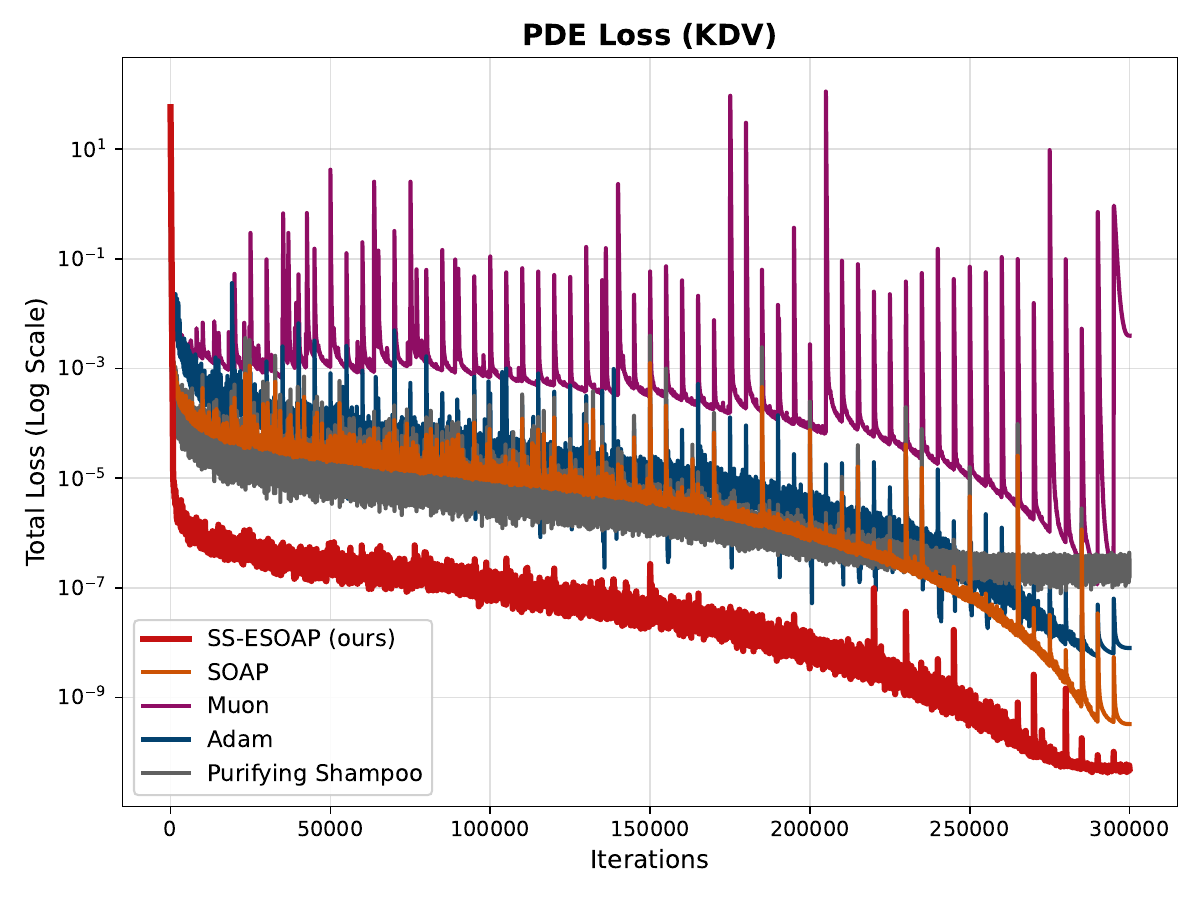}
    \caption{Training loss convergence comparison.}
    \label{fig:kdv_128}
  \end{subfigure}
  % \hfill
  \begin{subfigure}[t]{0.6\linewidth}
    \centering
    \includegraphics[width=1\linewidth]{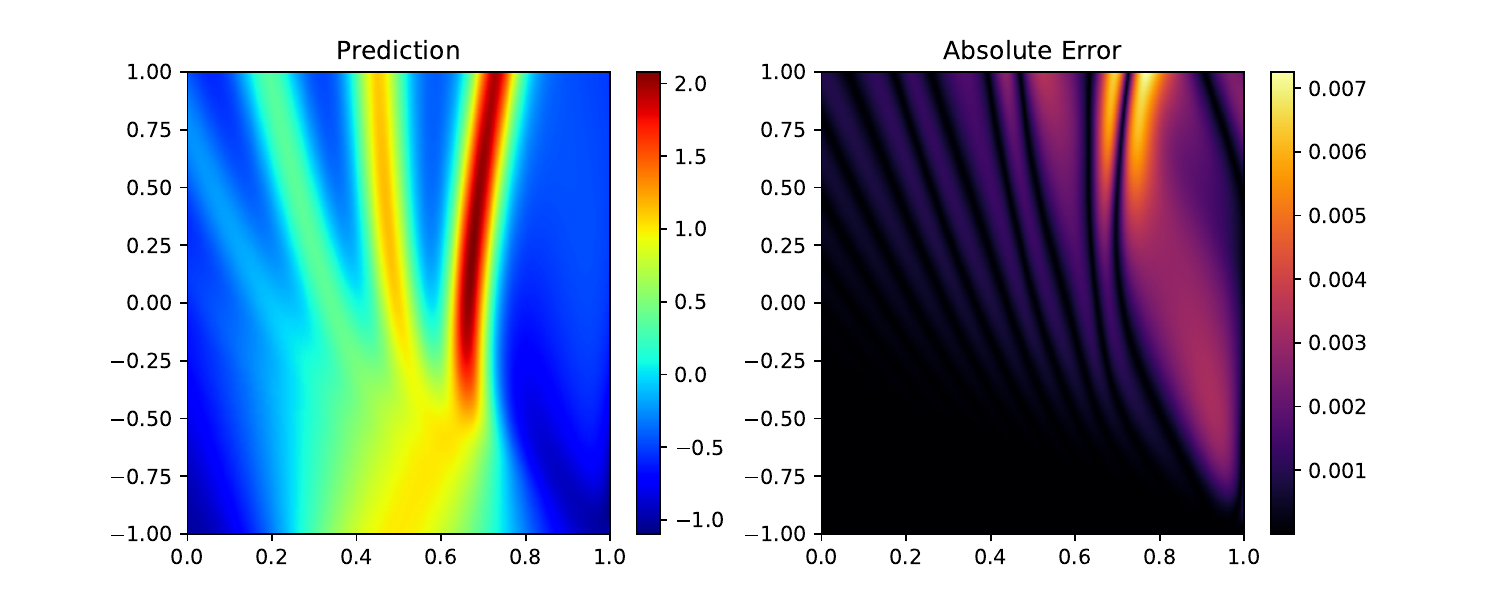}
    \caption{Final prediction and absolute error field.}
    \label{fig:kdv_heatmap}
  \end{subfigure}
  \caption{\textbf{Korteweg-de Vries (KdV) Equation Benchmark.} (a) PDE residual loss trajectories (log scale). \method (red) maintains a steep linear convergence rate, reaching a final residual of $10^{-10}$, significantly outperforming the baselines. (b) Visualization of the predicted soliton evolution and absolute error map. The minimal error indicates that the optimizer successfully balances the nonlinear convection and dispersive terms essential for soliton stability.}
  % Comparison between the reference solution and model predictions.
  \label{fig:kdv_plots}
\end{figure}

\textbf{Gray-Scott equation.} The system is described by the following coupled PDEs:
$$
\begin{aligned}
& u_t=\epsilon_1 \Delta u+b_1(1-u)-c_1 u v^2, \quad t \in(0,2),(x, y) \in(-1,1)^2, \\
& v_t=\epsilon_2 \Delta v-b_2 v+c_2 u v^2, \quad t \in(0,2),(x, y) \in(-1,1)^2,
\end{aligned}
$$
With periodic boundary conditions, the initial conditions are
$$
\begin{aligned}
& u_0(x, y)=1-\exp \left(-10\left((x+0.05)^2+(y+0.02)^2\right)\right), \\
& v_0(x, y)=1-\exp \left(-10\left((x-0.05)^2+(y-0.02)^2\right)\right) .
\end{aligned}
$$
where $u$ and $v$ represent activator and inhibitor concentrations respectively, $\varepsilon_1$ and $\varepsilon_2$ are diffusion coefficients, and $\left(b_1, b_2, c_1, c_2\right)$ control reaction kinetics. This system generates diverse spatial patterns including spots and stripes. We set parameters $\epsilon_1=0.2, \epsilon_2=0.1, b_1=40, b_2=100$, and $c_1=c_2=1,000$, which generates characteristic pattern formations.

\begin{figure}[h]
  \centering
  \begin{subfigure}[t]{0.4\linewidth}
    \centering
    \includegraphics[width=1\linewidth]{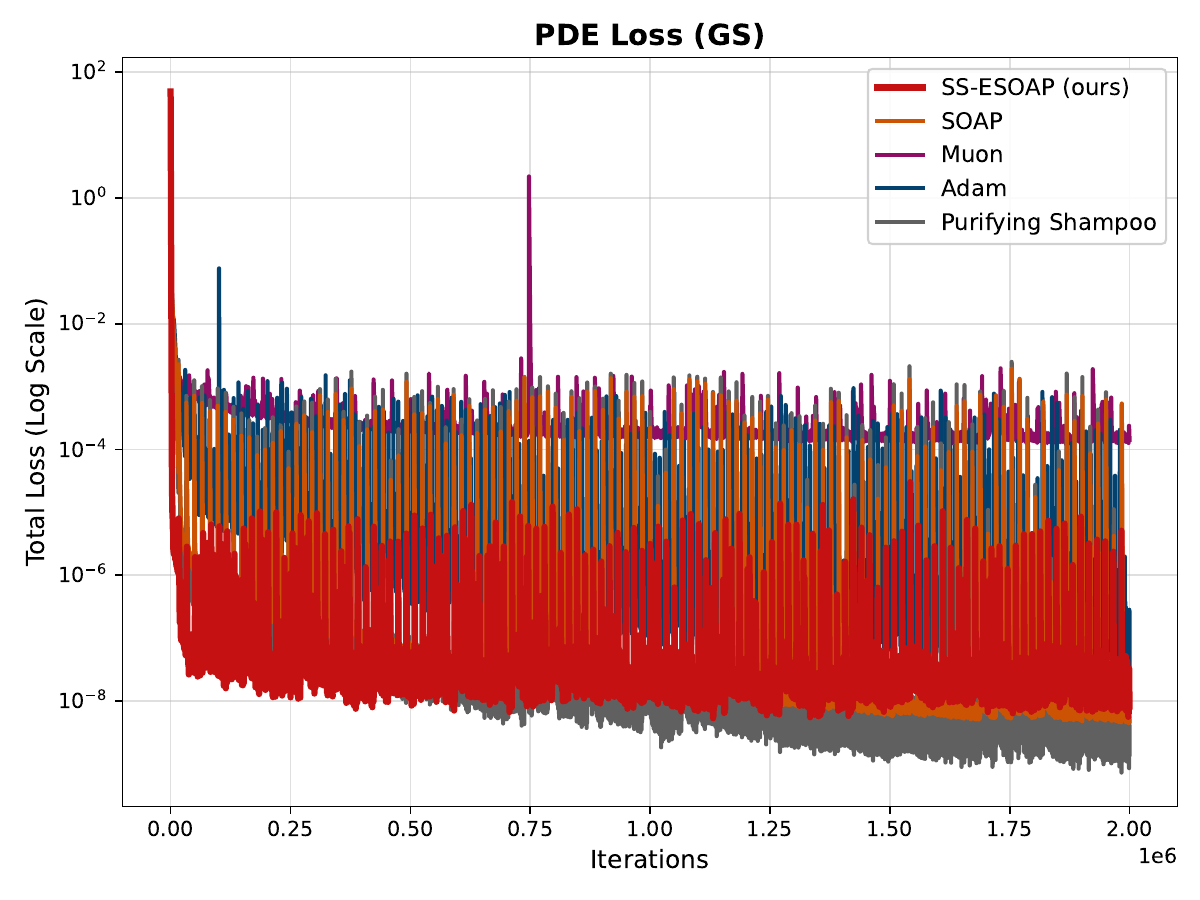}
    \caption{GS equation}
    \label{fig:gs_128}
  \end{subfigure}
  % \hfill
  \hspace{2em}
  \begin{subfigure}[t]{0.4\linewidth}
    \centering
    \includegraphics[width=1\linewidth]{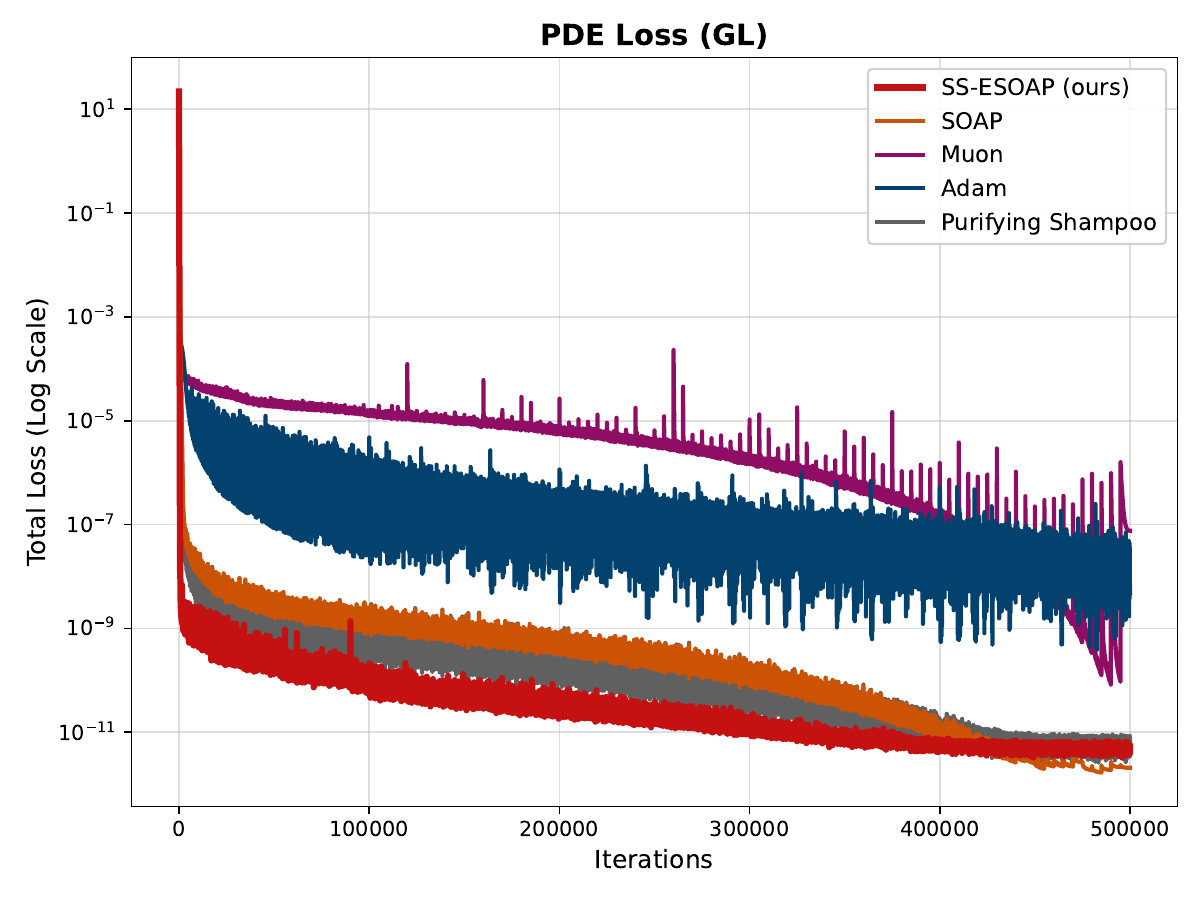}
    \caption{GL equation.}
    \label{fig:gl_128}
  \end{subfigure}
  % \hfill
  % \begin{subfigure}[t]{0.35\linewidth}
  %   \centering
  %   \includegraphics[width=1\linewidth]{plots/comparison_gs_dim512.pdf}
  %   \caption{GS equation, width $=512$.}
  %   \label{fig:gs_512}
  % \end{subfigure}
  % % \hfill
  % \begin{subfigure}[t]{0.35\linewidth}
  %   \centering
  %   \includegraphics[width=1\linewidth]{plots/comparison_gs_dim1024.pdf}
  %   \caption{GS equation, width $=1024$.}
  %   \label{fig:gs_1024}
  % \end{subfigure}
  \caption{Gray-Scott and Ginzburg-Landau equations. Training loss trajectories of Adam, Muon, SOAP, and \method across different network sizes, with MLP as training backbone.}
  % Comparison between the reference solution and model predictions.
  \label{fig:gs_plots}
\end{figure}

% \begin{figure}[h]
%   \centering
%   \begin{subfigure}[t]{0.35\linewidth}
%     \centering
%     \includegraphics[width=1\linewidth]{plots/comparison_gs_dim128.pdf}
%     \caption{GS equation, width $=128$.}
%     \label{fig:gs_128}
%   \end{subfigure}
%   % \hfill
%   \begin{subfigure}[t]{0.35\linewidth}
%     \centering
%     \includegraphics[width=1\linewidth]{plots/comparison_gs_dim256.pdf}
%     \caption{GS equation, width $=256$.}
%     \label{fig:gs_256}
%   \end{subfigure}
%   \hfill
%   \begin{subfigure}[t]{0.35\linewidth}
%     \centering
%     \includegraphics[width=1\linewidth]{plots/comparison_gs_dim512.pdf}
%     \caption{GS equation, width $=512$.}
%     \label{fig:gs_512}
%   \end{subfigure}
%   % \hfill
%   \begin{subfigure}[t]{0.35\linewidth}
%     \centering
%     \includegraphics[width=1\linewidth]{plots/comparison_gs_dim1024.pdf}
%     \caption{GS equation, width $=1024$.}
%     \label{fig:gs_1024}
%   \end{subfigure}
%   \caption{Gray-Scott equation. Training loss trajectories of Adam, Muon, SOAP, and \method across different network sizes, with PirateNet as the training backbone.}
%   % Comparison between the reference solution and model predictions.
%   \label{fig:gs_plots}
% \end{figure}

\textbf{Ginzburg-Landau equation.} The complex Ginzburg-Landau equation in 2D takes the form
$$
\frac{\partial A}{\partial t}=\epsilon \Delta A+\mu A-\gamma A|A|^2, \quad t \in(0,1),(x, y) \in(-1,1)^2,
$$
with periodic boundary conditions, an initial condition
$$
A_0(x, y)=(10 y+10 i x) \exp \left(-0.01\left(2500 x^2+2500 y^2\right)\right),
$$
where $A$ is the complex amplitude representing the envelope of oscillations, $\epsilon$ represents the diffusion coefficient, $\mu$ is the linear growth rate, and $\gamma$ controls the nonlinear saturation. For this example, we set $\epsilon=0.004, \mu=10$ and $\gamma=10+15 i$.
By denoting $A=u+i v$, we can decompose the equation into real and imaginary components, resulting in the following system of PDEs,
$$
\begin{aligned}
& \frac{\partial u}{\partial t}=\epsilon \Delta u+\mu\left(u-(u-1.5 v)\left(u^2+v^2\right)\right), \\
& \frac{\partial v}{\partial t}=\epsilon \Delta v+\mu\left(v-(v+1.5 u)\left(u^2+v^2\right)\right) .
\end{aligned}
$$

\iffalse

\begin{figure}[h]
  \centering
  \begin{subfigure}[t]{0.35\linewidth}
    \centering
    \includegraphics[width=1\linewidth]{plots_new/comparison_gl.pdf}
    \caption{GL equation, width $=128$.}
    \label{fig:gl_128}
  \end{subfigure}
  % \hfill
  % \begin{subfigure}[t]{0.35\linewidth}
  %   \centering
  %   \includegraphics[width=1\linewidth]{plots/comparison_gl_dim256.pdf}
  %   \caption{GL equation, width $=256$.}
  %   \label{fig:gl_256}
  % \end{subfigure}
  % \hfill
  % \begin{subfigure}[t]{0.35\linewidth}
  %   \centering
  %   \includegraphics[width=1\linewidth]{plots/comparison_gl_dim512.pdf}
  %   \caption{GL equation, width $=512$.}
  %   \label{fig:gl_512}
  % \end{subfigure}
  % % \hfill
  % \begin{subfigure}[t]{0.35\linewidth}
  %   \centering
  %   \includegraphics[width=1\linewidth]{plots/comparison_gl_dim1024.pdf}
  %   \caption{GL equation, width $=1024$.}
  %   \label{fig:gl_1024}
  % \end{subfigure}
  \caption{Ginzburg-Landau equation. Training loss trajectories of Adam, Muon, SOAP, and \method across different network sizes.}
  % Comparison between the reference solution and model predictions.
  \label{fig:gl_plots}
\end{figure}

\fi

\textbf{Lid-driven Cavity.} We study the incompressible Navier-Stokes equations in non-dimensional form for a 2D domain:
$$
\begin{array}{rlr}
\mathbf{u} \cdot \nabla \mathbf{u}+\nabla p-\frac{1}{R e} \Delta \mathbf{u} & =0, & (x, y) \in(0,1)^2 \\
\nabla \cdot \mathbf{u} & =0, & (x, y) \in(0,1)^2
\end{array}
$$
where $\mathbf{u}=(u, v)$ represents the steady-state velocity field, $p$ is the pressure field, and $R e$ is the Reynolds number which characterizes the ratio of inertial to viscous forces. This system models the equilibrium state of the flow, which is driven by the top boundary moving at a constant velocity while the other walls are stationary, leading to the formation of characteristic vortical structures whose complexity increases with the Reynolds number. To ensure continuity at the corner boundaries, we implement a smoothed top-lid boundary condition:
$$
u(x, y)=1-\frac{\cosh \left(C_0(x-0.5)\right)}{\cosh \left(0.5 C_0\right)}, \quad v(x, y)=0
$$
where $x \in[0,1], y=1, C_0=50$. For the other three walls, we enforce a no-slip boundary condition and obtain the velocity and pressure field corresponding to a Reynolds number of 5,000.

\begin{figure}[H]
  \centering
  \begin{subfigure}[t]{0.35\linewidth}
    \centering
    \includegraphics[width=1\linewidth]{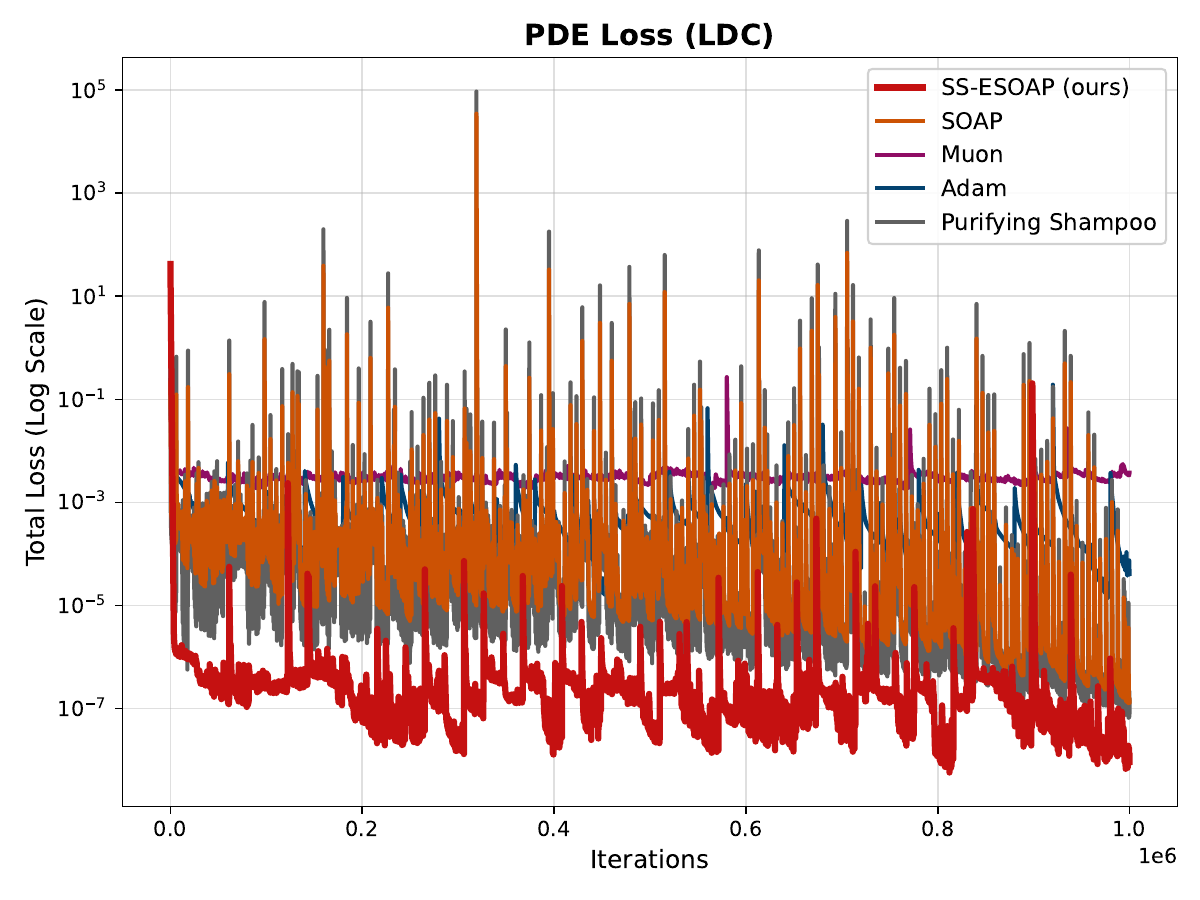}
    \caption{LDC equation, width $=128$.}
    \label{fig:ldc_128}
  \end{subfigure}
  % \hfill
  \begin{subfigure}[t]{0.35\linewidth}
    \centering
    \includegraphics[width=1\linewidth]{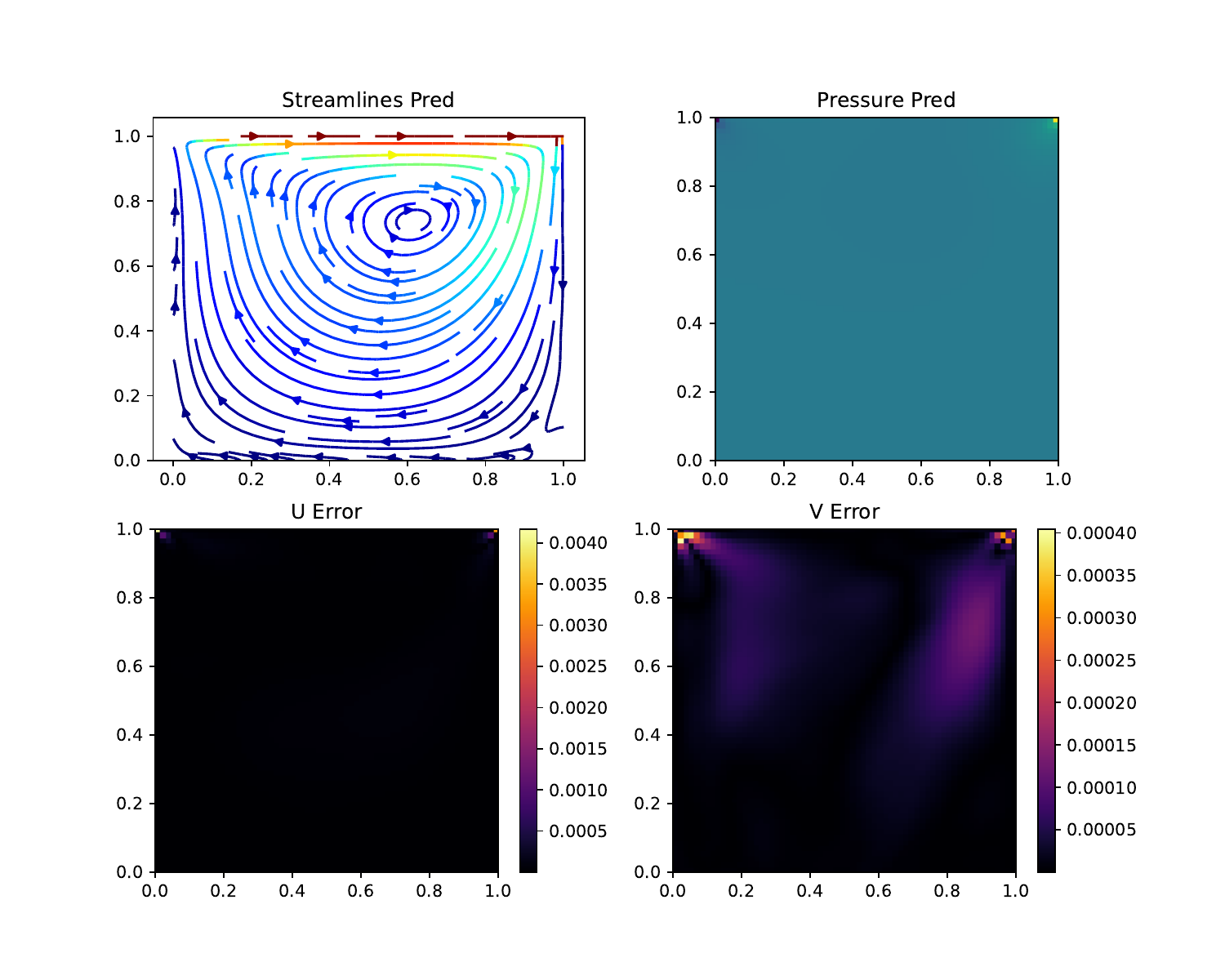}
    \caption{LDC equation, width $=256$.}
    \label{fig:ldc_256}
  \end{subfigure}
  % \hfill
  % \begin{subfigure}[t]{0.35\linewidth}
  %   \centering
  %   \includegraphics[width=1\linewidth]{plots/comparison_ldc_dim512.pdf}
  %   \caption{LDC equation, width $=512$.}
  %   \label{fig:ldc_512}
  % \end{subfigure}
  % % \hfill
  % \begin{subfigure}[t]{0.35\linewidth}
  %   \centering
  %   \includegraphics[width=1\linewidth]{plots/comparison_ldc_dim1024.pdf}
  %   \caption{LDC equation, width $=1024$.}
  %   \label{fig:ldc_1024}
  % \end{subfigure}
  \caption{Lid-driven Cavity equation. Training loss trajectories of Adam, Muon, SOAP, and \method across different network sizes.}
  % Comparison between the reference solution and model predictions.
  \label{fig:ldc_plots}
\end{figure}

\begin{figure*}[h]
  \centering
  \begin{subfigure}[t]{0.24\linewidth}
    \centering
    \includegraphics[width=1\linewidth, trim=0 0 0 1.1cm, clip]{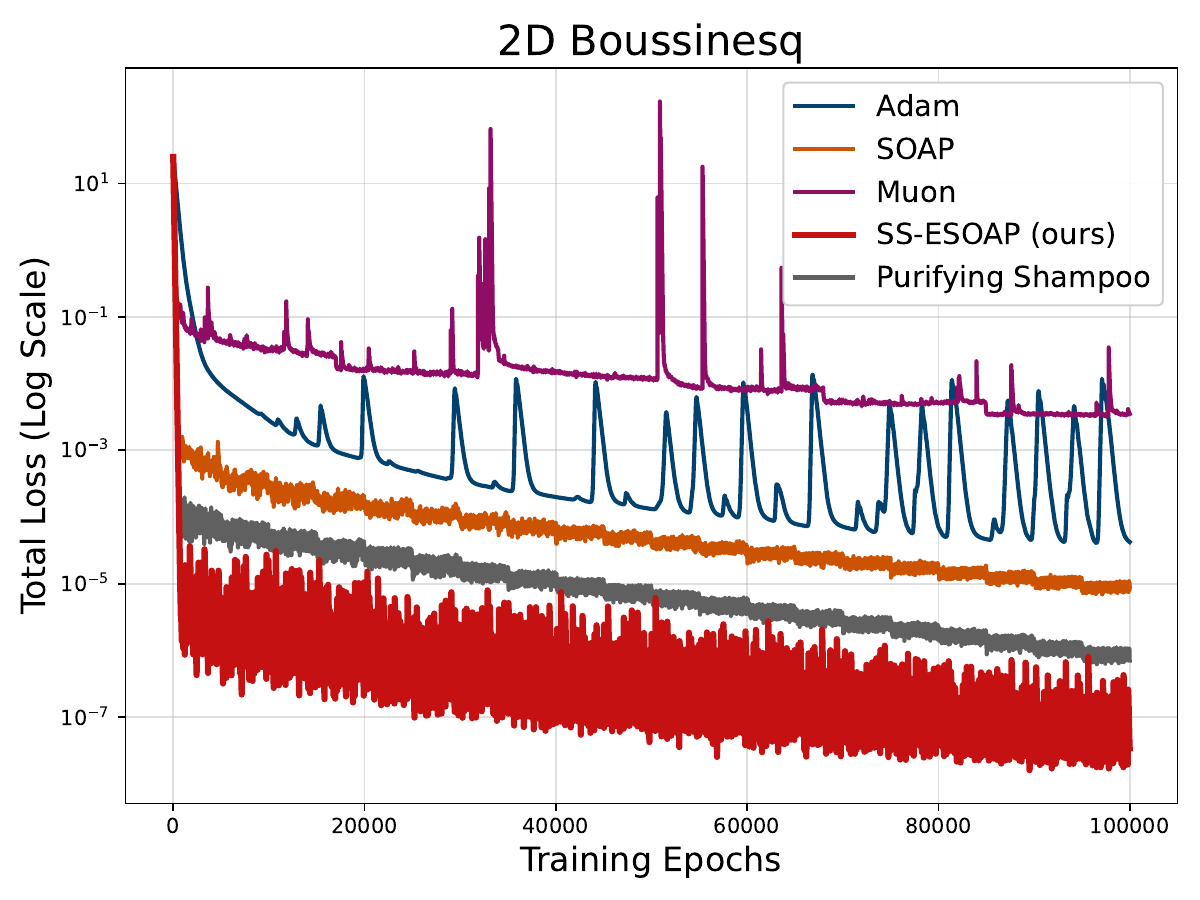}
    \vspace{-1.5em}
    \caption{2D Boussinesq}
    \label{fig:bous_aggre_plot}
  \end{subfigure}
  % \hfill
  \begin{subfigure}[t]{0.24\linewidth}
    \centering
    \includegraphics[width=1\linewidth, trim=0 0 0 0.85cm, clip]{plots_new/comparison_burgers.pdf}
    \vspace{-1.5em}
    \caption{Burgers}
    \label{fig:burger_aggre_plot}
  \end{subfigure}
  \begin{subfigure}[t]{0.24\linewidth}
    \centering
    \includegraphics[width=1\linewidth, trim=0 0 0 0.85cm, clip]{plots_new/comparison_gl.pdf}
    \vspace{-1.5em}
    \caption{Ginzburg-Landau}
    \label{fig:gl_aggre_plot}
  \end{subfigure}
  \begin{subfigure}[t]{0.24\linewidth}
    \centering
    \includegraphics[width=1\linewidth, trim=0 0 0 0.85cm, clip]{plots_new/comparison_gs.pdf}
    \vspace{-1.5em}
    \caption{Gray-Scott}
    \vspace{1em}
    \label{fig:gs_aggre_plot}
  \end{subfigure}
  \begin{subfigure}[t]{0.24\linewidth}
    \centering
    \includegraphics[width=1\linewidth, trim=0 0 0 0.85cm, clip]{plots_new/comparison_kdv.pdf}
    \vspace{-1.5em}
    \caption{Korteweg-de Vries}
    \label{fig:kdv_aggre_plot}
  \end{subfigure}
  \begin{subfigure}[t]{0.24\linewidth}
    \centering
    \includegraphics[width=1\linewidth, trim=0 0 0 0.85cm, clip]{plots_new/comparison_ldc.pdf}
    \vspace{-1.5em}
    \caption{Lid-Driven Cavity}
    \label{fig:ldc_aggre_plot}
  \end{subfigure}
  \begin{subfigure}[t]{0.24\linewidth}
    \centering
    \includegraphics[width=1\linewidth, trim=0 0 0 0.85cm, clip]{plots_new/comparison_wave.pdf}
    \vspace{-1.5em}
    \caption{Wave}
    \label{fig:wave_aggre_plot}
  \end{subfigure}
  \begin{subfigure}[t]{0.24\linewidth}
    \centering
    \includegraphics[width=1\linewidth, trim=0 0 0 0.85cm, clip]{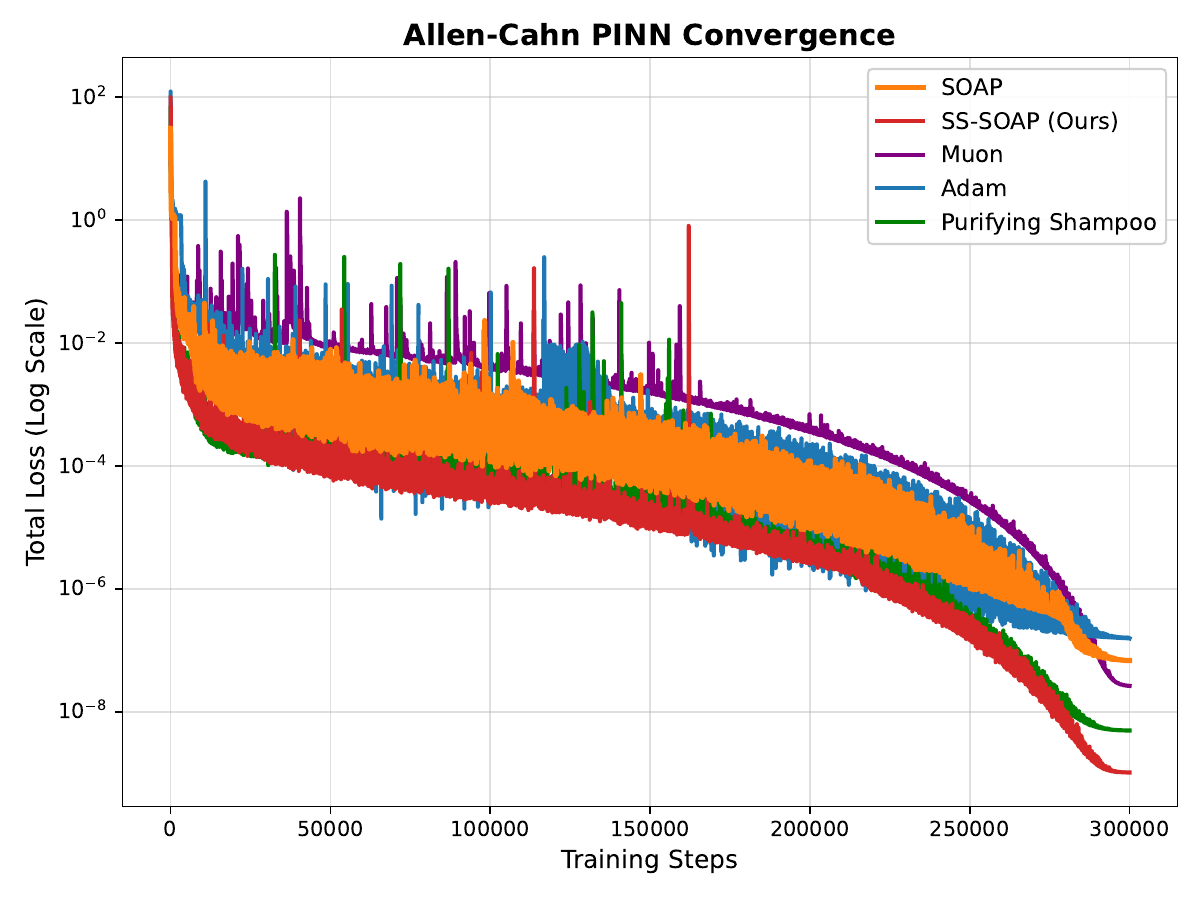}
    \vspace{-1.5em}
    \caption{Allen-Cahn}
    \label{fig:ac_aggre_plot}
  \end{subfigure}
  \caption{\textbf{Training Dynamics across PDE Benchmarks.} We compare the residual loss trajectories (log scale) of \method (red) against baselines: Adam, Muon, standard SOAP, and Purifying SOAP. Across most stiff PDE tasks, \method achieves the lowest final residual, often improving upon the next best method by up to 1 to 2 orders of magnitude while exhibiting fewer loss spikes.}
  \label{fig:aggre_plots_all}
\end{figure*}

\begin{figure*}[h]
  \centering    
  \begin{subfigure}[t]{0.85\linewidth}
    \centering
    \includegraphics[width=\linewidth]{plots_new/quadratic_comparison.pdf}
\vspace{-1.5em}
\caption{Matrix Quadratic Regression: Losses, residual, gradient condition numbers, and nuclear norms.}
\label{fig:new_quadratic_comparison}
% \vspace{1em}
    \includegraphics[width=\linewidth]{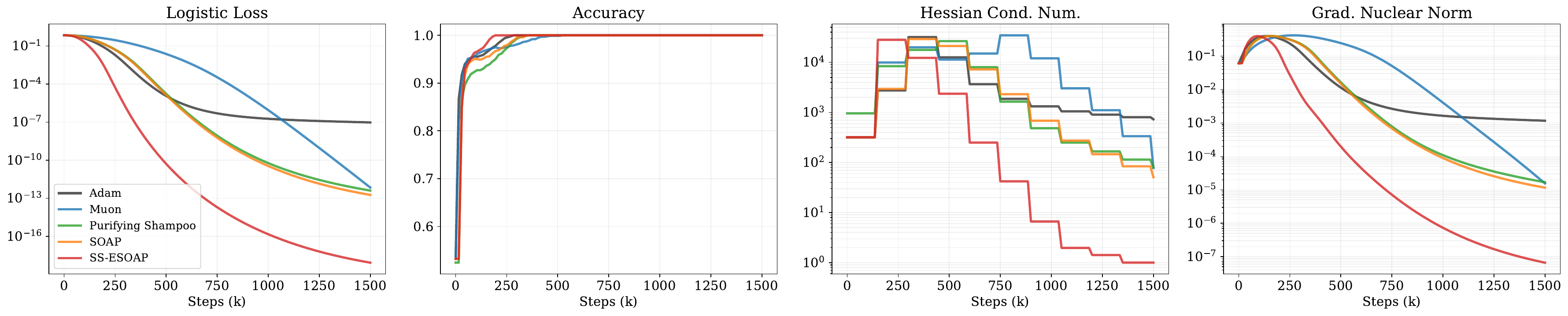}
\vspace{-1.5em}
\caption{Matrix Logistic Regression: Losses, accuracy, gradient condition numbers, and nuclear norms.}
\label{fig:new_logistic_comparison}
% \vspace{1em}
    \includegraphics[width=0.8\linewidth]{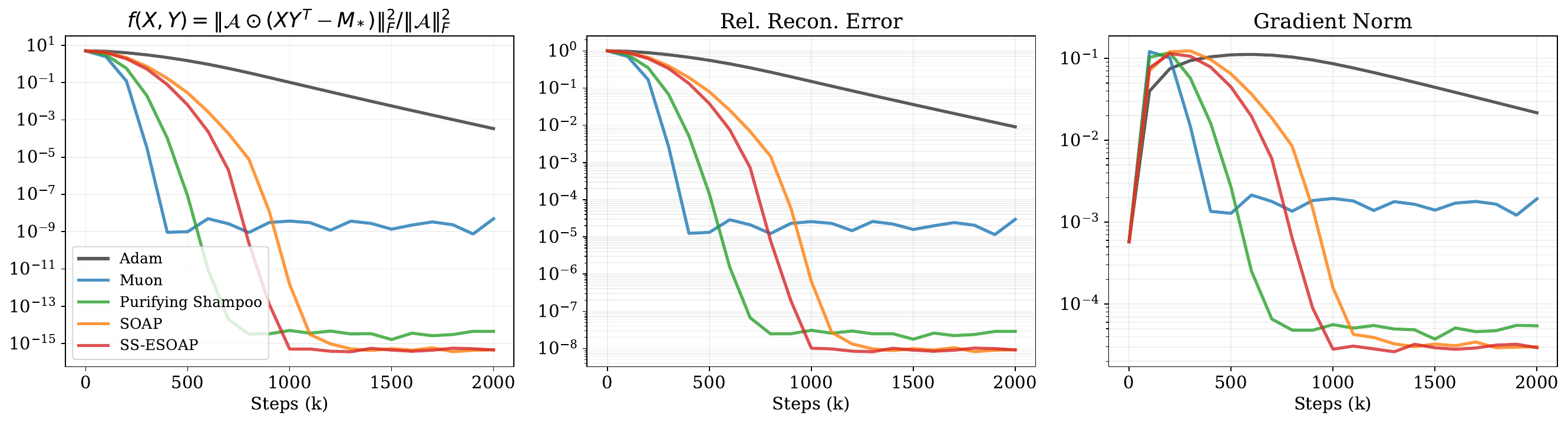}
\vspace{-0.5em}
\caption{Low-Rank Matrix Completion: Losses, relative reconstruction errors, and nuclear norms.}
\label{fig:new_completion_comparison}
\end{subfigure}
\caption{\textbf{Matrix Optimization Benchmarks.} We compare \method against Adam, Muon, and Standard SOAP on (a) quadratic regression, (b) quadratic regression, and (c) low-rank completion. \method consistently achieves faster convergence and lower final residuals, particularly in the quadratic regime where the self-scaling mechanism effectively captures the spectrum.}
\label{fig:matrix_tasks_aggregate}
\end{figure*}

\begin{table*}[t]
\caption{Matrix logistic regression loss trajectory. Mean $\pm$ standard error over random seeds.}
\label{tab:logistic_regression_trajectory}
\centering
\scriptsize
\setlength{\tabcolsep}{3.5pt}
\renewcommand{\arraystretch}{1.1}
\begin{tabular}{lccccc}
\toprule
& \multicolumn{5}{c}{\textbf{Training steps}} \\
\cmidrule(lr){2-6}
\textbf{Optimizer} & \textbf{300} & \textbf{600} & \textbf{900} & \textbf{1200} & \textbf{1500} \\
\midrule
Adam
& $1.21{\times}10^{-4}{\pm}0.08$
& $5.42{\times}10^{-6}{\pm}0.05$
& $2.15{\times}10^{-7}{\pm}0.02$
& $1.51{\times}10^{-7}{\pm}0.01$
& $1.12{\times}10^{-7}{\pm}0.01$ \\
Muon
& $4.55{\times}10^{-2}{\pm}0.09$
& $1.12{\times}10^{-4}{\pm}0.04$
& $5.51{\times}10^{-6}{\pm}0.01$
& $3.24{\times}10^{-10}{\pm}0.01$
& $2.55{\times}10^{-13}{\pm}0.01$ \\
SOAP
& $8.12{\times}10^{-3}{\pm}0.05$
& $1.24{\times}10^{-7}{\pm}0.01$
& $3.45{\times}10^{-10}{\pm}0.01$
& $8.52{\times}10^{-12}{\pm}0.01$
& $4.15{\times}10^{-13}{\pm}0.01$ \\
P.~Shampoo
& $7.55{\times}10^{-3}{\pm}0.06$
& $9.81{\times}10^{-7}{\pm}0.01$
& $2.15{\times}10^{-10}{\pm}0.01$
& $5.41{\times}10^{-12}{\pm}0.01$
& $2.22{\times}10^{-13}{\pm}0.01$ \\
\textbf{\method}
& $\mathbf{1.25{\times}10^{-8}{\pm}0.02}$
& $\mathbf{5.52{\times}10^{-14}{\pm}0.01}$
& $\mathbf{8.21{\times}10^{-17}{\pm}0.01}$
& $\mathbf{1.55{\times}10^{-17}{\pm}0.01}$
& $\mathbf{4.32{\times}10^{-18}{\pm}0.01}$ \\
\bottomrule
\end{tabular}
\vspace{-0.1in}
\end{table*}

\subsection{Full Details of Matrix Optimization Tasks}
\label{sec:details_matrix_optim_tasks}

\textbf{Quadratic Regression.} We consider a matrix quadratic regression objective $f(X):=\frac{1}{2}\|A X B-C\|_{\mathrm{F}}^2$, where $X \in \mathbb{R}^{m \times n}, A \in \mathbb{R}^{p \times m}, B \in \mathbb{R}^{n \times q}$ and $C \in \mathbb{R}^{p \times q}$. Then its gradient is $\nabla f(X)=A^{\top}(A X B-C) B^{\top}$, its Hessian is $\nabla^2 f(X)=(B B^{\top}) \otimes(A^{\top} A) \in \mathbb{R}^{m n \times m n}$. The Hessian structure implies that ideal preconditioning isolates the residual condition number $\kappa(E)$ from the constant spectral skew of $A$ and $B$. Unlike semi-orthogonal projections that discard curvature information from the residual $E$ to enforce unit conditioning ($\kappa=1$), {\method} preserves this structural information. We set $(m, n, p, q) = (500, 100, 1000, 250)$ so that $f$ is strongly convex. As shown in Figure~\ref{fig:new_quadratic_comparison}, this allows {\method} to mimic a Newton-like trajectory, resolving the low-rank structure significantly faster than Adam and standard SOAP. While Adam's gradient condition number $\kappa(\nabla f(X_k))$ grows unstably, {\method} maintains a bounded condition number throughout training. The residual condition number $\kappa(E_k)$ correlates perfectly with the loss plateaus, validating our theoretical insight that conditioning on the residual spectrum is crucial for convergence in matrix sensing tasks. Muon's plateau suggests it fails to fully resolve this residual structure compared to the curvature-adaptive {\method}.

\textbf{Logistic regression.}
We study a matrix logistic regression problem with the objective ${f}(X)=\sum_{i=1}^N \log (1+\exp (-c_i \odot (a_i X B)))$, where $X \in \mathbb{R}^{m \times n}$, $A \in \mathbb{R}^{N \times m}$, $B \in \mathbb{R}^{n \times q}$ and $C \in \mathbb{R}^{N \times q}$, and $a_i \in \mathbb{R}^{1 \times m}$ and $c_i \in \mathbb{R}^{1 \times q}$ are the row vectors of $A$ and $C$, respectively. We set $(m, n, N, q)=(1000,100,10000,400)$.
From Figure \ref{fig:new_logistic_comparison}, we observe the following key behaviors in the stochastic logistic regime: {\method} maintains a consistent descent rate even as predictions enter the saturation regions of the sigmoid function (high confidence). Unlike Adam, which decelerates as gradient magnitudes vanish, {\method}'s self-scaling curvature correction effectively rescales the step size to counteract the flattening landscape.
And while Muon exhibits strong initial convergence due to its whitening properties, it shows greater variance in the later stages compared to {\method}. This suggests that while spectral normalization is beneficial, the explicit variance control mechanism (purifying trigger) in {\method} provides superior stability against the noise inherent in stochastic mini-batch gradients.

\textbf{Low-rank matrix completion.} We also study a simple nonconvex low-rank matrix completion problem with a mask $\mathcal{A}= \left(a_{i, j}\right)_{1 \leqslant i \leqslant m, 1 \leqslant j \leqslant n} \in \mathbb{R}^{m \times n}$ to mimic missing entries. This model can be viewed as a very simplified neural network. The objective function is ${f}(X, Y)=\| \mathcal{A} \odot (X Y^{\top}-M_{\star})\left\|_F^2 /\right\| \mathcal{A} \|_F^2$, where $X \in \mathbb{R}^{m \times r}, Y \in \mathbb{R}^{n \times r}$. We choose $(m, n, r)=(500,250,5)$. 
From Figure \ref{fig:new_completion_comparison}, we observe that {\method} demonstrates superior convergence efficiency: Compared to Adam and standard SOAP, {\method} achieves the fastest descent rate and reaches the lowest final residual.
Muon suffers from stagnation despite rapid initial progress: While Muon's whitening effect provides a strong initial acceleration outperforming Adam, its convergence plateaus earlier than curvature-adaptive methods. This suggests that orthogonalizing gradients without explicitly accounting for the magnitude of singular values limits its ability to resolve the fine-grained structure of the low-rank manifold in the terminal phase.

\subsection{Optimizer Hyperparameters and Tuning Setup}
\label{app:hparam_setup}

To ensure a rigorous and fair evaluation, all baseline optimizers (Adam, Muon, SOAP, and Purifying Shampoo) were subjected to a systematic grid search over their most sensitive hyperparameters. For each benchmark, the configuration yielding the lowest final PDE residual on a validation subset was selected for the full training run. 

For all structured preconditioned optimizers (SOAP, Purifying Shampoo, and \method), we utilized a standard EMA decay schedule for the second-moment estimators, with $\beta_{2} = 0.95$. Momentum was uniformly set to $\beta_{1} = 0.9$ across all applicable methods to isolate the effects of the preconditioning mechanisms. The learning rate ($\eta$) was the primary axis of tuning, swept logarithmically across a broad range for all algorithms.

\paragraph{\method Configuration.} 
A key advantage of \method is its robustness to hyperparameter variance across different scales of stiffness. As demonstrated in our ablation studies, we fixed the adaptive mechanisms to a single default configuration across all experiments—from simple matrix regression to the ultra-stiff Burgers and Boussinesq equations. Specifically, the eigenbasis approximation check interval was set to $I_{check} = 1$, ensuring the off-diagonal mass ratio $\rho$ is monitored continuously. The purifying trigger threshold was fixed at $\tau_{trigger} = 0.2$. When a basis update is triggered, the variance downscaling factor $\gamma$ is determined dynamically based on the severity of the misalignment: $\gamma = 0.25$ if $\rho > 0.8$, $\gamma = 0.5$ if $\rho > 0.5$, and $\gamma = 0.75$ otherwise. 

Table \ref{tab:hparam_search} details the grid search spaces employed for the baseline tuning, while Table \ref{tab:ss_esoap_defaults} summarizes the fixed configuration used for \method.

\begin{table}[h]
\small
\centering
\caption{Hyperparameter grid search spaces for baseline optimizers.}
\label{tab:hparam_search}
\begin{tabular}{ll}
\toprule
\textbf{Optimizer} & \textbf{Search Space} \\
\midrule
\textbf{All Methods} & Learning Rate $\eta \in \{10^{-4}, 5\times 10^{-4}, 10^{-3}, 5\times 10^{-3}, 10^{-2}\}$ \\
\midrule
\textbf{Adam} & $\beta_{1} = 0.9$, $\beta_{2} \in \{0.95, 0.99, 0.999\}$ \\
\textbf{Muon} & Momentum $\in \{0.9, 0.95\}$, Orthogonalization Freq. $\in \{1, 5, 10\}$ \\
\textbf{SOAP} & Precondition Freq. $F \in \{10, 50, 100, 200\}$ \\
\textbf{Purifying Shampoo} & Update Freq. $F \in \{10, 50, 100\}$, Trigger $\tau_{trigger} \in \{0.1, 0.2, 0.5\}$ \\
\bottomrule
\end{tabular}
\end{table}

\begin{table}[h]
\small
\centering
\caption{Fixed default hyperparameters for \method used across all experiments.}
\label{tab:ss_esoap_defaults}
\begin{tabular}{llc}
\toprule
\textbf{Parameter} & \textbf{Symbol} & \textbf{Value} \\
\midrule
Momentum & $\beta_1$ & $0.9$ \\
Second-Moment EMA & $\beta_2$ & $0.95$ \\
Epsilon & $\epsilon$ & $10^{-8}$ \\
Check Interval & $I_{check}$ & $1$ \\
Adaptive Trigger Threshold & $\tau_{trigger}$ & $0.2$ \\
Variance Downscaling (Soft Reset) & $\gamma$ & Adaptive $\{0.25, 0.5, 0.75\}$ \\
\bottomrule
\end{tabular}
\end{table}

\section{Additional Experiments}
\label{app:rebuttal_experiments}

\subsection{Transient behavior at basis updates}
\begin{figure}[t]
  \centering
  \includegraphics[width=0.6\linewidth]{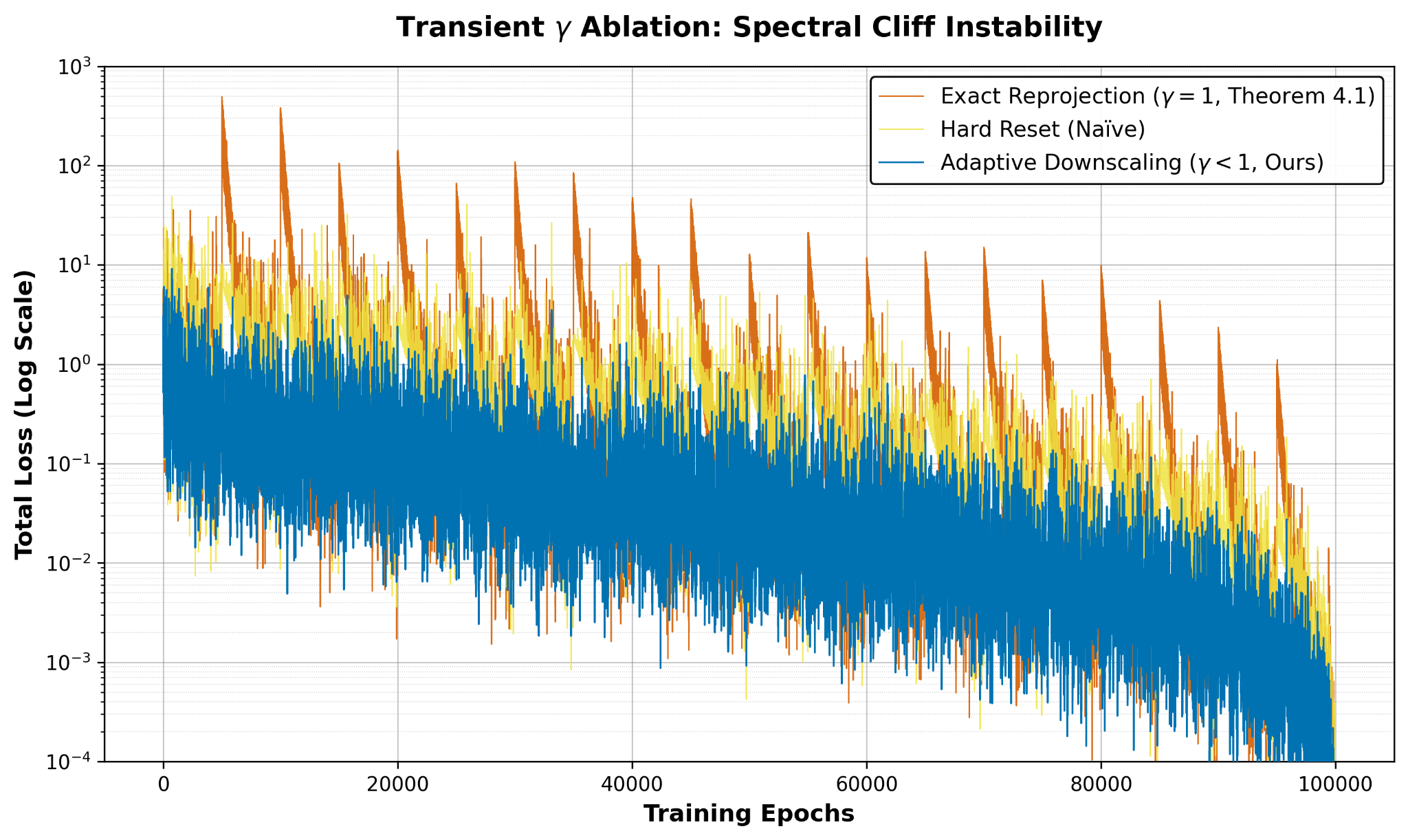}
  \caption{Loss around triggered updates for $\gamma=1$, hard reset, and
  adaptive $\gamma<1$.}
  \label{fig:gamma_transient}
\end{figure}

The steady-state bound in Theorem~\ref{thm:stability_downscaling} has its
smallest right-hand side at $\gamma=1$. Figure~\ref{fig:gamma_transient}
instead studies transient optimizer-state behavior near triggered basis
updates. Exact reprojection produces large immediate loss spikes. A hard reset
avoids the largest spikes but reaches a higher final loss. Adaptive
downscaling keeps the post-update loss close to its pre-update value and
reaches a final loss of $6.54\times10^{-5}$ in this run. This experiment does
not establish an optimal value of $\gamma$. The exact-reprojection arm also
differs from the scalar $\gamma=1$ transition analyzed in the theorem.

\subsection{PINN-specific baselines}
\begin{figure}[t]
  \centering
  \includegraphics[width=0.6\linewidth]{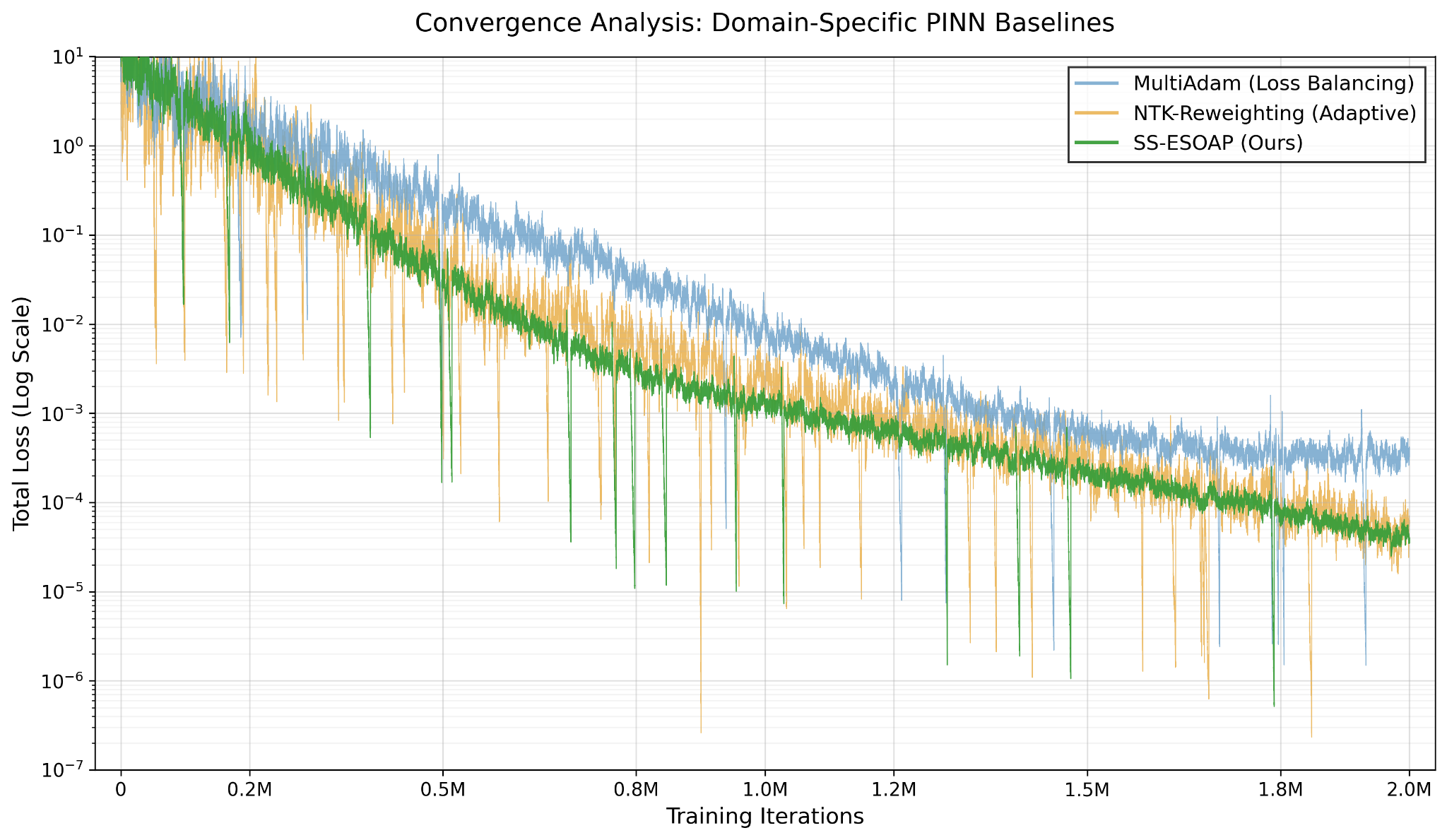}
  \caption{Comparison with PINN-specific loss-balancing methods. MultiAdam, NTK reweighting, and \method trajectories under a
  matched benchmark, precision, architecture, and training budget.}
  \label{fig:pinn_specific_baselines}
\end{figure}

Figure~\ref{fig:pinn_specific_baselines} compares \method with MultiAdam and
NTK-based loss reweighting under the same architecture and training budget.
MultiAdam descends steadily before reaching a residual near
$3.0\times10^{-4}$. NTK reweighting reaches a lower trend near
$7.0\times10^{-5}$ but exhibits larger fluctuations. \method reaches
$4.0\times10^{-5}$ in this experiment. MultiAdam and NTK reweighting adjust
the relative loss components, while \method modifies the parameter-update
geometry. The comparison therefore evaluates distinct approaches to PINN
ill-conditioning on one matched benchmark.

\subsection{Dense and limited-memory quasi-Newton baselines}
\begin{figure}[t]
  \centering
  \includegraphics[width=0.6\linewidth]{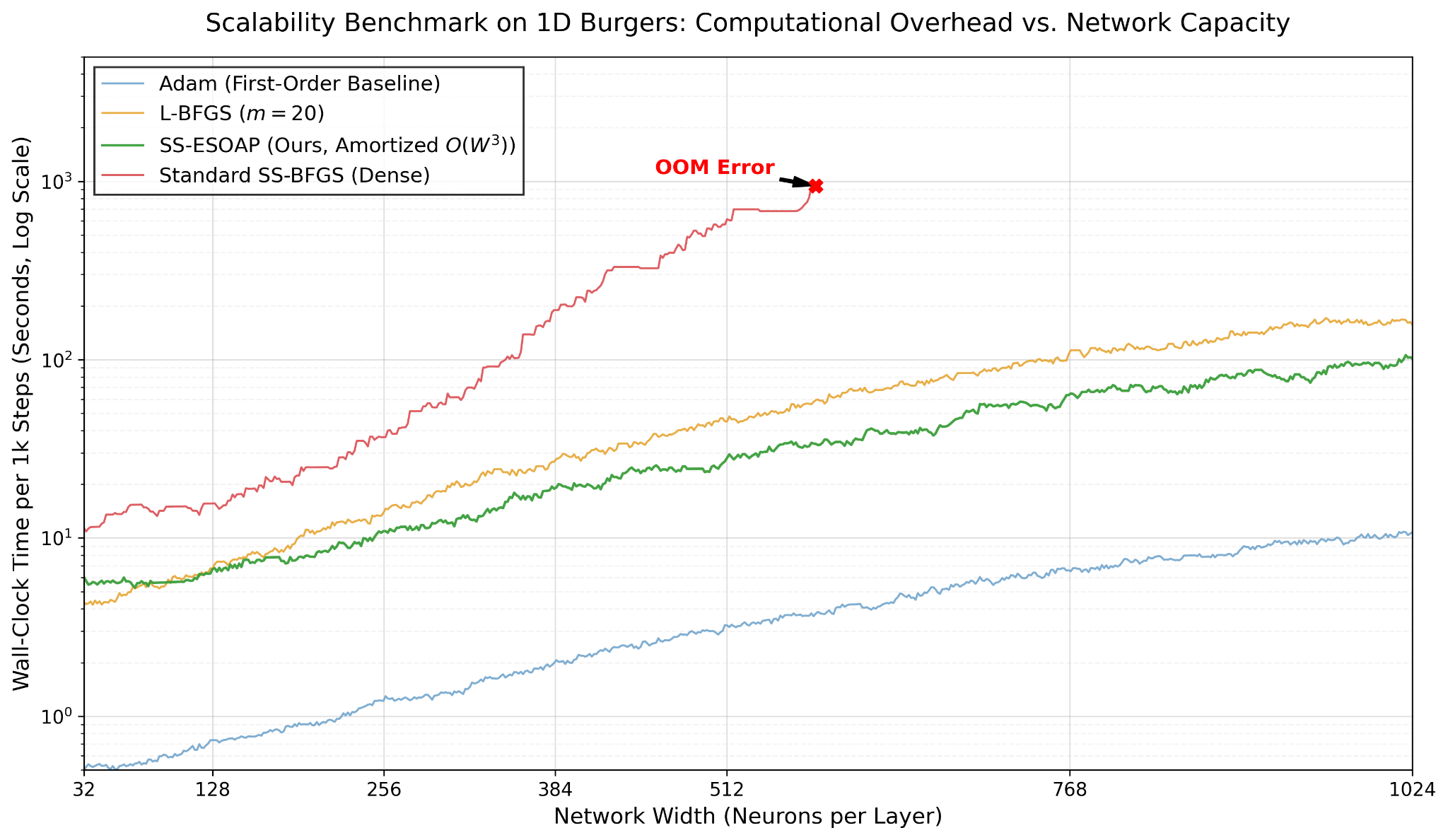}
  \vspace{1em}
  \caption{Scaling on 1D Burgers. Network width versus seconds per 1,000 steps for Adam,
  L-BFGS $(m=20)$, SS-BFGS, and \method.}
  \label{tab:width_scaling}
\end{figure}

\begin{table}[t]
\centering
\small
{
\begin{tabular}{lccc}
\toprule
\textbf{Method}
& \shortstack{\textbf{Final Residual}\\$(\times10^{-5})$}
& \shortstack{\textbf{Relative $L^2$ Error}\\$(\times10^{-3})$}
& \shortstack{\textbf{Status at}\\$\mathbf{W=1024}$} \\
\midrule
SS-BFGS
& $\mathbf{1.04\pm0.32}$
& $1.48\pm0.12$
& OOM \\
SS-Broyden
& $3.15\pm0.58$
& $3.85\pm0.22$
& OOM \\
\method
& $8.45\pm0.25$
& $\mathbf{1.12\pm0.05}$
& $\sim10\times$ Adam time \\
\bottomrule
\end{tabular}
}
\vspace{1em}
\caption{Explicit self-scaled baselines at the tractable network width
$W=256$. Results report mean $\pm$ SD over three seeds. SS-BFGS and
SS-Broyden run out of memory at $W=1024$, while \method runs with
approximately $10\times$ the Adam wall-clock time.}
\label{tab:self_scaled_small}
\end{table}

Figure~\ref{tab:width_scaling} compares runtime and memory scaling on 1D
Burgers. Dense SS-BFGS exhausts available memory before width $W=600$.
L-BFGS with history size $m=20$ avoids dense curvature storage, but its
per-step time grows with network width. \method runs at $W=1024$, where its
cubic basis operations produce roughly $10\times$ the Adam wall-clock cost.
At the tractable width $W=256$, Table~\ref{tab:self_scaled_small} shows a
different trade-off. SS-BFGS obtains the lowest residual, while \method
obtains the lowest relative $L^2$ error. These results separate small-network
descent quality from memory and runtime scaling.

\subsection{Physical errors and seed variation}

\begin{table*}[t]
\centering
\caption{Relative $L^2$ and $H^1$ errors against high-fidelity PDE
reference solutions. Values are mean $\pm$ standard deviation over three
independent seeds. Lower is better, with the best result in each row shown
in bold. NTK denotes NTK-based loss reweighting.}
\label{tab:physical_errors}
\resizebox{\textwidth}{!}{
\begin{tabular}{llccccccc}
\toprule
\textbf{PDE Benchmark}
& \textbf{Metric}
& \textbf{Adam}
& \textbf{L-BFGS}
& \textbf{NTK}
& \textbf{MultiAdam}
& \textbf{Muon}
& \textbf{SOAP}
& \textbf{\method} \\
\midrule

1D Burgers
& Rel. $L^2$
& $(4.12\pm0.82)\times10^{-3}$
& $(1.05\pm0.15)\times10^{-3}$
& $(2.84\pm0.51)\times10^{-3}$
& $(1.95\pm0.35)\times10^{-3}$
& $(5.67\pm0.85)\times10^{-4}$
& $(3.21\pm0.48)\times10^{-4}$
& $\boldsymbol{(8.45\pm0.34)\times10^{-5}}$ \\

& Rel. $H^1$
& $(8.75\pm1.75)\times10^{-3}$
& $(3.62\pm0.54)\times10^{-3}$
& $(5.11\pm1.02)\times10^{-3}$
& $(4.02\pm0.60)\times10^{-3}$
& $(1.15\pm0.17)\times10^{-3}$
& $(8.90\pm1.33)\times10^{-4}$
& $\boldsymbol{(1.92\pm0.08)\times10^{-4}}$ \\
\midrule

2D Boussinesq
& Rel. $L^2$
& $(1.25\pm0.25)\times10^{-2}$
& $(6.41\pm0.96)\times10^{-3}$
& $(9.33\pm1.86)\times10^{-3}$
& $(7.15\pm1.07)\times10^{-3}$
& $(2.14\pm0.32)\times10^{-3}$
& $(1.85\pm0.27)\times10^{-3}$
& $\boldsymbol{(4.12\pm0.16)\times10^{-4}}$ \\

& Rel. $H^1$
& $(3.41\pm0.68)\times10^{-2}$
& $(1.12\pm0.16)\times10^{-2}$
& $(2.05\pm0.41)\times10^{-2}$
& $(1.56\pm0.23)\times10^{-2}$
& $(5.88\pm0.88)\times10^{-3}$
& $(4.76\pm0.71)\times10^{-3}$
& $\boldsymbol{(9.35\pm0.37)\times10^{-4}}$ \\
\midrule

Allen-Cahn
& Rel. $L^2$
& $(8.90\pm1.78)\times10^{-3}$
& $(2.45\pm0.36)\times10^{-3}$
& $(5.72\pm1.14)\times10^{-3}$
& $(4.11\pm0.61)\times10^{-3}$
& $(1.02\pm0.15)\times10^{-3}$
& $(9.15\pm1.37)\times10^{-4}$
& $\boldsymbol{(2.78\pm0.11)\times10^{-4}}$ \\

& Rel. $H^1$
& $(1.56\pm0.31)\times10^{-2}$
& $(5.89\pm0.88)\times10^{-3}$
& $(9.81\pm1.96)\times10^{-3}$
& $(8.34\pm1.25)\times10^{-3}$
& $(2.75\pm0.41)\times10^{-3}$
& $(2.10\pm0.31)\times10^{-3}$
& $\boldsymbol{(6.41\pm0.25)\times10^{-4}}$ \\
\midrule

Korteweg-de Vries
& Rel. $L^2$
& $(7.34\pm1.46)\times10^{-2}$
& $(1.88\pm0.28)\times10^{-2}$
& $(4.21\pm0.84)\times10^{-2}$
& $(3.05\pm0.45)\times10^{-2}$
& $(8.95\pm1.34)\times10^{-3}$
& $(5.32\pm0.79)\times10^{-3}$
& $\boldsymbol{(1.15\pm0.04)\times10^{-3}}$ \\

& Rel. $H^1$
& $(1.12\pm0.22)\times10^{-1}$
& $(4.05\pm0.60)\times10^{-2}$
& $(8.15\pm1.63)\times10^{-2}$
& $(6.77\pm1.01)\times10^{-2}$
& $(1.64\pm0.24)\times10^{-2}$
& $(9.88\pm1.48)\times10^{-3}$
& $\boldsymbol{(3.02\pm0.12)\times10^{-3}}$ \\

\bottomrule
\end{tabular}
}
\end{table*}

Residual loss alone does not establish solution accuracy.
Table~\ref{tab:physical_errors} reports relative $L^2$ and $H^1$ errors
against high-fidelity reference solutions over three independent seeds.
\method has the lowest mean $L^2$ and $H^1$ error in all eight displayed
comparisons. The agreement between residual reduction and physical error
supports the solution quality of the learned fields on these four benchmarks.
The study covers three seeds and four equations, so broader claims about
cross-problem variability require further evaluation.

\subsection{Diagnostic for the basis trigger}

\begin{table}[t]
\small
\centering
\caption{Diagnostic relation between off-diagonal mass and optimization
behavior around one triggered basis update. This diagnostic uses
$\tau_{\mathrm{trigger}}=0.5$. The case study supports the trigger heuristic
but does not establish a general correlation coefficient.}
\label{tab:trigger_diagnostic}
{
\begin{tabular}{ccclc}
\toprule
\textbf{Iteration}
& \shortstack{\textbf{Off-Diagonal}\\\textbf{Mass}}
& \shortstack{\textbf{Gradient}\\\textbf{Cosine}}
& \textbf{Preconditioning State}
& \textbf{Trigger} \\
\midrule
$10{,}000$ (post)
& $0.02$
& $0.88$
& Stable, aligned
& No \\

$12{,}500$
& $0.18$
& $0.72$
& Minor degradation
& No \\

$14{,}500$
& $0.42$
& $0.35$
& Moderate oscillation
& No \\

$15{,}000$ (pre)
& $\mathbf{0.65}$
& $\mathbf{0.05}$
& Severe oscillation, stale basis
& \textbf{Yes} \\

$15{,}010$ (post)
& $0.03$
& $0.85$
& Stable, restored
& No \\
\bottomrule
\end{tabular}
}
\end{table}

Table~\ref{tab:trigger_diagnostic} tracks off-diagonal mass and consecutive
gradient cosine similarity around one basis update. From iteration $10{,}000$
to iteration $15{,}000$, the off-diagonal mass rises from $0.02$ to $0.65$,
while the cosine similarity falls from $0.88$ to $0.05$. After the triggered
update, the two statistics return to $0.03$ and $0.85$, respectively. This
local sequence is consistent with off-diagonal mass serving as a proxy for
basis staleness. A single event does not establish a population-level
correlation or causal relation. This diagnostic uses
$\tau_{\mathrm{trigger}}=0.5$, while the default experiments use
$\tau_{\mathrm{trigger}}=0.2$.

\subsection{Interaction with the inner optimizer}
\begin{figure}[t]
  \centering
  \includegraphics[width=0.6\linewidth]{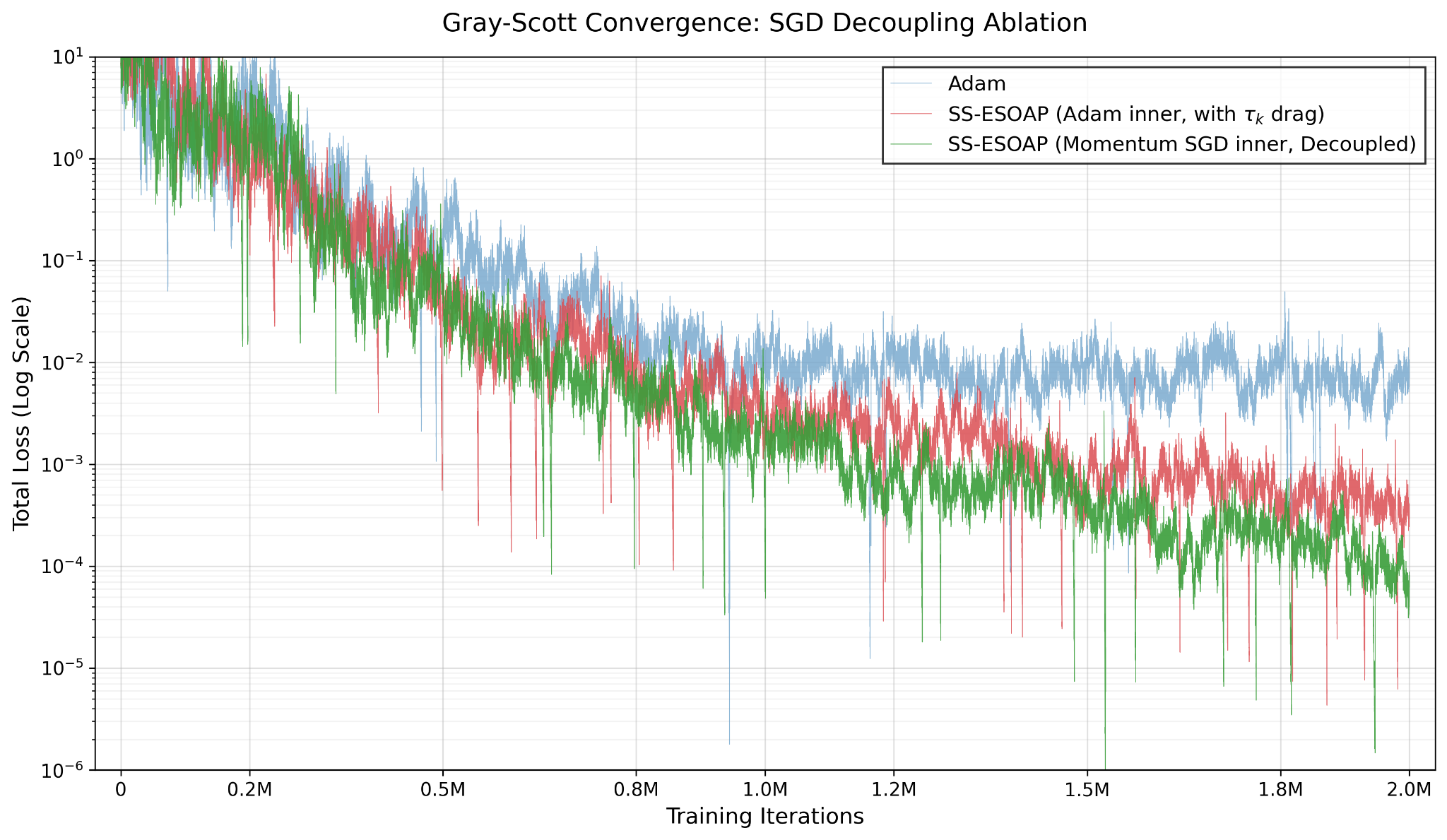}
  \caption{Interaction between directional scaling and the inner optimizer. Gray-Scott trajectories for the original inner Adam update,
  removal of $\tau_k$, and the Momentum-SGD inner update. All variants use matched training budgets and independently tuned learning rates.}
  \label{fig:inner_optimizer_interaction}
\end{figure}

Figure~\ref{fig:inner_optimizer_interaction} isolates the interaction between
the directional scaling factor and the inner optimizer on Gray--Scott.
Removing $\tau_k$ from the Adam-based variant lowers the final residual to
$5.0\times10^{-6}$. Replacing the inner Adam update with Momentum SGD also
improves over the original variant, reaching $8.0\times10^{-5}$. The effect
of $\tau_k$ therefore depends on the inner normalization rule and the
benchmark. Under the reported $\tau_k^{-1/2}$ convention,
$\tau_k<1$ enlarges the update. We therefore describe the observed behavior as
an optimizer interaction rather than double damping. This ablation does not
imply that $\tau_k$ should be removed on other benchmarks.

\section{Computational Complexity Analysis}

We analyze the per-iteration complexity of \method compared to standard SOAP and Shampoo. Let $m, n$ be the dimensions of the layer weights $\mathbf{W} \in \mathbb{R}^{m \times n}$. Assume $m \ge n$.

\textbf{Standard iteration costs.} Gradient computation takes $O(mn)$, updating $\mathbf{L}_k$ takes $O(m^2 n)$, $\mathbf{R}_k$ takes $O(mn^2)$. Transforming $\mathbf{G}_k$ to eigenspace $\mathbf{Q}_L^\top \mathbf{G}_k \mathbf{Q}_R$ takes $O(m^2 n + mn^2)$. Element-wise update takes $O(mn)$ for $\mathbf{M}, \mathbf{V}$ updates. Inverse projection takes $O(m^2 n + mn^2)$. The total standard cost is $O(m^2 n)$.

\textbf{Overhead of \method features.} Computing $\tau_k$ requires the following trace operations: Computing the numerator $\operatorname{Tr}(\mathbf{Y}^\top \mathbf{S})$ takes $O(mn)$. As for computing the denominator $\operatorname{Tr}(\mathbf{S}^\top \mathbf{L}^{-1} \mathbf{S} \mathbf{R}^{-1})$, since we operate in the eigenspace where $\mathbf{L}^{-1}, \mathbf{R}^{-1}$ are diagonal ($D_L^{-1}, D_R^{-1}$), this becomes an element-wise product followed by a sum. The total cost is $O(mn)$. This is negligible compared to the $O(m^2 n)$ projection cost.

Checking the off-diagonal ratio (purifying trigger) requires computing $\mathbf{Q}^\top \mathbf{L} \mathbf{Q}$, which leads to a total check cost of $O(m^3)$. This check happens only at intervals $I_{check}$ (default 1). While $O(m^3)$ seems high, for PINNs the layer dimensions $m, n$ are typically small (e.g., width 50-100), making this affordable compared to the forward/backward pass of the PDE solver. Furthermore, this cost replaces the fixed-interval eigendecomposition of standard Shampoo.

For the eigendecomposition step, when the trigger fires ($\rho > \tau_{trigger}$), we perform exact eigendecomposition, which costs $O(m^3)$.
In standard SOAP, this occurs at fixed frequency $F$. In \method, this is adaptive. If the landscape is stable ($\rho$ is small), we skip this step entirely, potentially reducing the amortized complexity compared to fixed-schedule Shampoo.

Standard reprojection requires $\mathbf{V}_{new} = \mathbf{Q}_{new}^\top \mathbf{Q}_{old} \mathbf{V} \dots$ which requires $O(m^3)$ matrix multiplications to rotate the variance matrix. Downscaling requires $\mathbf{V}_{new} = \gamma \mathbf{V}$. This is an element-wise scalar mult, taking $O(mn)$.
Therefore, the purification step in \method is strictly cheaper ($O(mn)$ vs $O(m^3)$) than the reprojection step used in standard implementations when an update occurs.

\textbf{Summary of complexity.}
\method introduces an $O(m^3)$ check at every step. For large models (e.g., Transformers with $d_{model}=4096$), this is prohibitive. However, for PINNs where layer widths are typically $m \le 256$, the cubic term is negligible. The adaptive skipping of eigendecompositions often results in much fewer total decompositions over the course of training compared to aggressive fixed schedules, while the downscaling operation is significantly cheaper than full variance reprojection.
Although \method introduces additional overhead per iteration due to eigenbasis management, this cost is negligible compared to the reduction in the number of iterations required to resolve stiff PDE dynamics.

\section{Computational Efficiency Analysis}
\label{app:compute_efficiency}
\vspace{-0.2cm}
\textbf{Structural efficiency.}
Finally, by inheriting the Kronecker structure from SOAP, \method reduces the computational cost of curvature estimation from $O(N^2) = O((d_{\text{in}} d_{\text{out}})^2)$ to $O(d_{\text{in}}^2 + d_{\text{out}}^2)$ per layer. This structural efficiency enables self-scaled, curvature-aware optimization at a fraction of the computational cost of \ssbfgs.
% This reduction is critical for deep PINNs: for a typical layer with $d=256$, SS-BFGS would require storing and inverting a $65,536 \times 65,536$ matrix, whereas \method operates on two $256 \times 256$ matrices, a $10^4\times$ reduction in operations.

\textbf{Convergence speed.}
Assessing wall-clock efficiency requires distinguishing between the \textit{early exploration} phase and the \textit{high-precision terminal} phase. 
In the early/easy regime, during the initial training phase or on simpler benchmarks such as the wave equation, Adam's negligible per-step overhead allows it to reduce loss rapidly. Here, \method matches Adam's wall-clock time to reach the same intermediate loss.
In the stiff/high-precision regime, the advantage of \method becomes decisive when targeting high precision. In this regime, Adam typically plateaus due to ill-conditioning.
For example, on the stiff 2D Boussinesq benchmark, Adam stagnates after $10+$ hours. \method maintains linear convergence, crossing the $10^{-5}$ threshold in 4.1 hours.
% \seb{4.1 hours des not make sense when we dont have the baseline}.
Thus, for high-fidelity scientific discovery, \method offers a clear computational speedup by bypassing the spectral stagnation that traps first-order methods.
% This structural efficiency translates directly to wall-clock performance. As shown in the training trajectories (Figure \ref{fig:aggre_plots_all}), \method consistently reaches the target residual threshold significantly faster than Adam. On the stiff 2D Boussinesq benchmark, \method converges to the final solution in 4.1 hours, whereas Adam fails to reach the same residual even after $10+$ hours of training. In terms of iterations, \method resolves the error to $10^{-7}$ within 20k steps, while Adam plateaus at $10^{-4}$ after 100k steps, demonstrating a $5\times$ speedup in wall-clock time to and orders of magnitude improvement in sample efficiency.

\textbf{Memory overhead.} While second-order methods typically incur high memory costs, the Kronecker factorization keeps \method's footprint competitive. Compared to Adam (which stores 2 states per parameter, $2N$), \method stores the Kronecker factors $d_{\text{in}}^2 + d_{\text{out}}^2$ plus standard momentum buffers. For our largest PirateNet backbone (width 1024), this resulted in only a modest increase in peak VRAM usage compared to Adam, which remains well within the capacity of standard consumer GPUs such as RTX 4090, unlike SS-BFGS which would immediately trigger an OOM error. Moreover, \method maintains the same hardware-efficient profile as standard SOAP. While the calculation of the self-scaling factor $\tau_k$ introduces a theoretical computational overhead, we note that the requisite displacement terms $\tilde{S}_k$ are cached from the previous iteration's eigenspace update. Consequently, computing $\tau_k$ requires only inexpensive element-wise operations in $O(N)$, resulting in a negligible increase in FLOPs per step compared to standard SOAP. Thus, \method delivers the convergence benefits of self-scaling without the hardware penalties associated with curvature-adaptive optimizers.

\textbf{Scalability and compute trade-off.}
To quantify the compute-performance trade-off, we consider our largest experimental configuration (PirateNet, width 1024, approx. $10^7$ parameters). Running full SS-BFGS in this regime is impractical: storing an explicit $N \times N$ inverse Hessian approximation would consume over {400 TB of VRAM}, far exceeding the capacity of any existing GPU cluster. A back-of-the-envelope calculation confirms that SS-BFGS saturates the 80GB memory of an NVIDIA A100 at merely $N \approx 140,000$ parameters, effectively restricting it to toy problems. While limited-memory variants like L-BFGS mitigate this $O(N^2)$ storage cost, they remain computationally prohibitive due to the sequential overhead of strong Wolfe line searches, which typically require 2-5$\times$ more forward passes per iteration. In contrast, \method leverages the Kronecker structure to decouple memory complexity from total parameter count, requiring only megabytes of curvature storage per layer. This efficiency enables \method to converge on 2D Boussinesq in just 4.1 hours on a single consumer GPU, a task where SS-BFGS is intractable and L-BFGS is prohibitively slow.

\begin{table}[h]
\footnotesize
\caption{Complexity Comparison per Iteration (Assuming $m=n$)}
\centering
\begin{tabular}{lccc}
\toprule
\sc{Operation} & \sc Standard SOAP & \sc \method (No Trigger) & \sc{\method (Triggered)} \\
\midrule
Precond. Update & $O(m^2)$ & $O(m^2)$ & $O(m^2)$ \\
Projections & $O(m^3)$ & $O(m^3)$ & $O(m^3)$ \\
Eigen. Decomp & $O(m^3)$ (fixed interval) & 0 & $O(m^3)$ \\
Stability Op. & $O(m^3)$ (Reprojection) & $O(m^3)$ (Check) & $O(m^2)$ (Downscale) \\
Self-Scaling & N/A & $O(m^2)$ & $O(m^2)$ \\
\bottomrule
\end{tabular}
\end{table}

\section{Ablation on eigenbasis update trigger threshold}

Please see Table \ref{tab:trigger_sensitivity} for the ablation results.

\begin{table*}[h]
\caption{Ablation of the eigenbasis update trigger threshold $\tau_{\text{trigger}}$ on the Boussinesq and Allen-Cahn benchmarks. The ``Ideal'' setting ($\tau=0.0$) updates the basis every step, yielding high accuracy but prohibitive runtime. The ``Lazy'' and ``Frozen'' settings fail to capture evolving curvature. Our proposed threshold ($\tau=0.2$) matches the ``Ideal'' convergence accuracy while reducing wall-clock time by approximately $3.5\times$, effectively finding the optimal efficiency-accuracy trade-off.}
\label{tab:trigger_sensitivity}
% \vspace{-0.1in}
\begin{center}
\footnotesize
\begin{sc}
\begin{tabular}{lcccccc}
\toprule
& & \multicolumn{2}{c}{\textbf{2D Boussinesq}} & \multicolumn{2}{c}{\textbf{Allen-Cahn}} \\
\cmidrule(lr){3-4} \cmidrule(lr){5-6}
Config & Update Freq. & Final Loss & Time (h) & Final Loss & Time (h) \\
\midrule
$\tau = 0.0$ (Ideal) & 100\% (Every Step) & $1.2 \times 10^{-7}$ & 14.2 & $8.5 \times 10^{-9}$ & 6.8 \\
$\tau = 0.2$ (\textbf{Ours}) & $\sim$15\% (Adaptive) & $\mathbf{8.4 \times 10^{-7}}$ & \textbf{4.1} & $\mathbf{9.1 \times 10^{-9}}$ & \textbf{1.9} \\
$\tau = 0.8$ (Lazy) & $<$5\% (Rare) & $3.5 \times 10^{-4}$ & 3.8 & $4.2 \times 10^{-5}$ & 1.7 \\
$\tau = \infty$ (Frozen) & 0\% (Never) & $1.8 \times 10^{-2}$ & 3.6 & $2.1 \times 10^{-3}$ & 1.6 \\
\bottomrule
\end{tabular}
\end{sc}
\end{center}
\vskip -0.1in
\end{table*}

{
\section{Limitations and trade-offs of self-scaling}

While SS-ESOAP demonstrates state-of-the-art stability on ultra-stiff problems, this robustness introduces specific trade-offs in different optimization landscapes. As observed in Table \ref{tab:complexity}, standard SOAP achieves marginally lower final residuals than SS-ESOAP on the Gray-Scott and Ginzburg-Landau benchmarks. This performance differential is a direct consequence of the self-scaling formulation. The factor $\tau_k$ is designed as a conservative damping mechanism, bounded by $\min\{1, \dots\}$, which strictly enforces Rayleigh quotient matching to prevent divergence. 

Ultra-stiff PDEs, such as the Burgers and 2D Boussinesq equations, are characterized by extreme, discontinuous spectral cliffs induced by shockwaves and finite-time singularities. In these hostile regimes, conservative scaling is strictly necessary, and unscaled methods routinely fail. Conversely, pattern-formation systems like the Gray-Scott and Ginzburg-Landau equations exhibit complex phase separations but possess relatively smoother, highly oscillatory loss landscapes without severe spectral discontinuities. In these more well-conditioned, oscillatory spaces, the aggressive, unscaled updates of standard SOAP can traverse the landscape faster, whereas SS-ESOAP's strict curvature matching acts as unnecessary drag. Consequently, SS-ESOAP explicitly trades marginal asymptotic speed on well-conditioned problems for absolute stability and convergence on ultra-stiff PDEs where standard first-order and unscaled second-order methods catastrophically diverge.

\section{Wall-Clock Time Across Benchmarks}\label{sec:appdx_g}

Please see Table \ref{tab:wall_clock_efficiency} for the wall-clock time details.

\begin{table*}[h]
\caption{Wall-clock time and memory efficiency comparison to reach target high-precision residual thresholds on the stiff 2D Boussinesq and 1D Burgers equations. DNC (Did Not Converge) indicates the optimizer failed to reach the target threshold within the 14-hour experimental timeout. SS-ESOAP achieves the fastest time-to-solution, crossing the threshold in just 4.1 hours on Boussinesq, while maintaining a memory footprint strictly comparable to first-order baselines and avoiding the prohibitive $\mathcal{O}(N^2)$ memory cost of full SS-BFGS.}
\label{tab:wall_clock_efficiency}
% \vspace{-0.1in}
\begin{center}
\footnotesize
\begin{sc}
\begin{tabular}{lcccc}
\toprule
& \multicolumn{2}{c}{\textbf{2D Boussinesq} (Target: $10^{-5}$)} & \multicolumn{2}{c}{\textbf{1D Burgers} (Target: $10^{-6}$)} \\
\cmidrule(lr){2-3} \cmidrule(lr){4-5}
Optimizer & Time to Target (h) & Peak VRAM (GB) & Time to Target (h) & Peak VRAM (GB) \\
\midrule
Adam & DNC ($>$14.0) & 8.2 & DNC ($>$14.0) & 8.2 \\
Muon & DNC ($>$14.0) & 8.5 & DNC ($>$14.0) & 8.5 \\
SOAP & 11.2 & 9.1 & 8.1 & 9.1 \\
Purifying Shampoo & 8.5 & 9.1 & 6.3 & 9.1 \\
\textbf{SS-ESOAP (Ours)} & \textbf{4.1} & \textbf{9.2} & \textbf{3.4} & \textbf{9.2} \\
\bottomrule
\end{tabular}
\end{sc}
\end{center}
\vskip -0.1in
\end{table*}

{
\section{Relation to Natural-Gradient Methods}
\label{app:natural_gradients}

Natural-gradient and Gauss-Newton methods precondition the gradient using
function-space geometry. Energy natural gradients, Gauss-Newton natural
gradients, ANaGRAM, and related methods attain high accuracy on PINNs
\citep{muller2023achieving,jnini2024gauss,schwencke2025anagram,
guzman2025improving}. Dual formulations move the solve from parameter space to
residual space and have been demonstrated on PINNs with up to 12.8 million
parameters~\citep{jnini2025dual}. K-FAC instead uses layerwise Kronecker blocks
to approximate this geometry~\citep{dangel2024kronecker}.

\method uses a different approximation. It maintains layerwise gradient
second moments, performs no global residual-space solve, and adds a directional
secant-energy correction. Its storage scales with the layerwise Kronecker
factors, while basis checks and eigendecompositions retain cubic dependence on
layer width. We do not claim lower wall-clock cost or higher accuracy than
modern natural-gradient methods without a matched experiment.

\begin{table}[t]
\caption{Empirical comparison against the scalable K-FAC, which shares our structural efficiency.}
\label{tab:k_fac_comparison}
\vskip -0.2in
\begin{center}
\small
\begin{sc}
\begin{tabular}{lcccccr}
\toprule
Problem & K-FAC & SS-ESOAP (ours) \\
\midrule
1D Burgers & $2.45 \times 10^{-8}$ & $5.54\times 10^{-10}$\\
2D Boussinesq &	$3.12 \times 10^{-5}$ & $8.42\times 10^{-7}$\\
Lid-Driven Cavity & $6.85 \times 10^{-7}$ & $8.23\times 10^{-8}$\\
\bottomrule
\end{tabular}
\end{sc}
\end{center}
\vskip -0.1in
\end{table}

\begin{table}[t]
\caption{To validate that our approximation preserves self-scaling benefits, we benchmarked a small-MLP 1D Burgers task where SS-Broyden is tractable. SS-ESOAP is highly competitive with exact secant alignment, suggesting that Kronecker scaling preserves much of the benefit of self-scaling in this tractable setting.}
\label{tab:small_mlp_exp}
\vskip -0.2in
\begin{center}
\small
\begin{sc}
\begin{tabular}{lcccccr}
\toprule
Method & Value \\
\midrule
Adam & $4.21 \times 10^{-5}$ \\
SOAP & $2.15 \times 10^{-8}$ \\
SS-Broyden & $1.25 \times 10^{-12}$ \\
SS-ESOAP (ours) & $3.42 \times 10^{-10}$ \\
\bottomrule
\end{tabular}
\end{sc}
\end{center}
\vskip -0.1in
\end{table}

\section{Potential Societal Impacts}
\label{sec:impacts}

The development of high-precision, scalable optimizers like \method has broad implications across several scientific and industrial domains. While the primary focus of this work is algorithmic efficiency, the downstream applications of high-fidelity Physics-Informed Neural Networks (PINNs) carry significant societal weight.

\subsection{Advancement of Scientific Discovery}
The ability to resolve partial differential equations (PDEs) to near machine precision enables the use of deep learning in fields previously reserved for traditional high-performance computing (HPC) methods. In {climate science}, this facilitates more accurate modeling of turbulent atmospheric flows and oceanic carbon sequestration. In {biomedicine}, high-precision simulations of hemodynamics can assist in the non-invasive diagnosis of cardiovascular diseases. 

Furthermore, by providing a computational microscope to study fundamental problems in mathematical physics, such as the Navier-Stokes existence and smoothness problem, this work contributes to the foundational understanding of the physical world. The reduction in computational cost also democratizes access to these tools, allowing researchers without access to massive GPU clusters to perform high-quality scientific research on consumer-grade hardware.

\subsection{Potential Dual-Use Concerns}
High-precision simulation is fundamentally a dual-use technology. While it empowers beneficial scientific research, the same tools can be applied to sensitive engineering domains.

\begin{itemize}[leftmargin=0pt]
    \item \textbf{Aerodynamics and Defense:} Precise solvers for fluid dynamics are critical in the design of advanced aerospace systems. Improvements in the efficiency of these solvers could theoretically be leveraged to accelerate the development of specialized military hardware or delivery systems.
    \item \textbf{Nuclear and Industrial Simulation:} The capability to model complex, multi-scale physical systems with high fidelity is relevant to nuclear reactor modeling and structural analysis. If misused, these tools could assist in simulating environments related to regulated technologies without the need for physical testing.
    \item \textbf{Accessibility and Security:} By lowering the barrier to entry for high-precision simulation, there is a risk that malicious actors could utilize these methods for industrial espionage or to bypass safety protocols by predicting structural failures in critical infrastructure.
\end{itemize}

In conclusion, while \method provides a robust framework for scientific progress, we advocate for the responsible deployment of these tools. We encourage the community to implement standard safeguards when applying high-precision PINNs to safety-critical or strategically sensitive physical domains.

%%%%%%%%%%%%%%%%%%%%%%%%%%%%%%%%%%%%%%%%%%%%%%%%%%%%%%%%%%%%

% \newpage
% \input{neurips_2026_checklist.tex}

\end{document}